\title{Randomised Optimism via Competitive Co-Evolution \\for Matrix Games with Bandit Feedback}
\author{
    Shishen Lin
    \affiliations
    School of Computer Science, University of Birmingham, Birmingham, United Kingdom
    \emails
    sxl1242@student.bham.ac.uk
}
\theoremstyle{definition}
\newtheorem{definition}{Definition}
\theoremstyle{plain} 
\renewcommand{\epsilon}{\varepsilon}
\newcommand{\N}{\ensuremath{\mathds{N}}\xspace}
\newcommand{\X}{\ensuremath{\mathcal{X}}\xspace}
\newcommand{\Y}{\ensuremath{\mathcal{Y}}\xspace}
\newcommand{\V}{\ensuremath{\mathcal{V}}\xspace}
\newcommand{\F}{\ensuremath{\mathcal{F}}\xspace}
\newcommand{\R}{\ensuremath{\mathcal{R}}\xspace}
\newcommand{\Nr}{\ensuremath{\mathcal{N}}\xspace}
\newcommand{\E}[1]{\ensuremath{\mathrm{E}\mathord{\left(#1\right)}}}
\newcommand{\EAlg}[1]{\ensuremath{\mathrm{E}_{\eta,\alg}\mathord{\left(#1\right)}}}
\newcommand{\Et}[1]{\ensuremath{\mathrm{E}_t\mathord{\left(#1\right)}}}
\newcommand{\COEBL}{\textsc{Coebl}\xspace}
\newcommand{\coebl}{\COEBL}
\newcommand{\HEDGE}{\textsc{Hedge}\xspace}
\newcommand{\hedge}{\HEDGE}
\newcommand{\EXPThree}{\textsc{Exp3}\xspace}
\newcommand{\exptr}{\EXPThree}
\newcommand{\EXPTrNi}{\textsc{Exp3-IX}\xspace}
\newcommand{\exptrni}{\EXPTrNi}
\newcommand{\UCB}{\textsc{Ucb}\xspace}
\newcommand{\ucb}{\UCB}
\newcommand{\DIAGONAL}{\textsc{Diagonal}\xspace}
\newcommand{\Diagonal}{\DIAGONAL}
\newcommand{\BIGNUM}{\textsc{BiggerNumber}\xspace}
\newcommand{\BigNum}{\BIGNUM}
\newcommand{\A}{\ensuremath{\mathcal{A}}\xspace}
\newcommand{\U}{\ensuremath{\mathcal{U}}\xspace}
\newcommand{\Or}[1]{\ensuremath{\mathcal{O}\left(#1\right)}}
\newcommand{\AsyO}[1]{\ensuremath{\tilde{\mathcal{O}}\left(#1\right)}}
\newcommand{\alg}{\ensuremath{\textsc{Alg}}\xspace}
\newcommand{\worstre}{\ensuremath{\textsc{WorstCaseRegret}}\xspace}
\newcommand{\appendixtableofcontents}{
  \begingroup
  \let\clearpage\relax
  \let\cleardoublepage\relax
  \let\cleardoublepage\relax
  \tableofcontents
  \endgroup
}
\begin{document}

\maketitle

\begin{abstract}
   Learning in games is a fundamental problem in machine learning and artificial intelligence, with numerous applications~\citep{silver2016mastering,schrittwieser2020mastering}. This work investigates two-player zero-sum matrix games with an unknown payoff matrix and bandit feedback, where each player observes their  actions and the corresponding noisy payoff. Prior studies have proposed algorithms for this setting~\citep{o2021matrix,maiti2023query,cai2024uncoupled}, with \citet{o2021matrix} demonstrating the effectiveness of deterministic optimism (e.g., \ucb) in achieving sublinear regret. However, the potential of randomised optimism in matrix games remains theoretically unexplored.
   
   We propose Competitive Co-evolutionary Bandit Learning (\coebl), a novel algorithm that integrates evolutionary algorithms (EAs) into the bandit framework to implement randomised optimism through EA variation operators. We prove that \coebl achieves sublinear regret, matching the performance of deterministic optimism-based methods. To the best of our knowledge, this is the first theoretical regret analysis of an evolutionary bandit learning algorithm in matrix games.
   
   Empirical evaluations on diverse matrix game benchmarks demonstrate that \coebl not only achieves sublinear regret but also consistently outperforms classical bandit algorithms, including \exptr~\citep{auer2002nonstochastic}, the variant \exptrni~\citep{cai2024uncoupled}, and \ucb~\citep{o2021matrix}. These results highlight the potential of evolutionary bandit learning, particularly the efficacy of randomised optimism via evolutionary algorithms in game-theoretic settings.
   \end{abstract}

\section{Introduction}

       \subsection{Two-Player Zero-Sum Games}
       Triggered by von Neumann’s seminal work~\citep{jv1928theorie,neumann1953theory},
       the maximin optimisation problem (i.e., $\max_{x \in \X}\min_{y \in \Y} g(x, y)$) has become a major research topic in machine learning and optimisation.
       In particular, two-player zero-sum games, represented by a payoff matrix $A \in \mathbb{R}^{m \times m}$, constitute a widely studied problem class in the machine learning and AI literature~\citep{littman1994markov,pmlrv39auger14,o2021matrix,cai2024uncoupled}. 
       The row player selects $i \in [m]$, the column player selects $j \in [m]$ and these choices, leading to a payoff $A_{ij}$ (i.e. the row player receives the payoff $A_{ij}$ and the column player receives the payoff $-A_{ij}$).
       The objective is to find the optimal mixed strategy, which is a probability distribution over actions for each player. 
       Formally, we define our problem as follows: to find $x^*, y^* \in \Delta_m$, where $\Delta_m$ denotes the probability simplex of dimension $m-1$, satisfying 
       \begin{align}
               V_A^*:=\max_{x \in \Delta_m} \min_{y \in \Delta_m} y^T A x . \label{eq:Nash1}
       \end{align}
       By von Neumann's minimax theorem~\citep{jv1928theorie}, $V_A^*=\min_{y \in \Delta_m}\max_{x \in \Delta_m} y^T A x$.
       $(x^*, y^*)$ solving for Eq.(\ref{eq:Nash1}) is called a Nash equilibrium. 
       $V_A^*$ is the shared optimal quantity at the Nash equilibrium.
       In this paper, we call it the Nash equilibrium payoff.
       
       Nash's Theorem, or von Neumann's minimax theorem, guarantees the existence of $(x^*,y^*)$ for Eq.(\ref{eq:Nash1}) \citep{jv1928theorie,nash1950equilibrium}.
       If the payoff matrix is given or known, then Eq.(\ref{eq:Nash1}) can be reformulated as a linear programming problem, and it can be solved in polynomial time using algorithms including the ellipsoid method or interior point method~\citep{bubeck2015convex,maiti2023query}.
       Now, if the payoff matrix is unknown, let the row and column player play an iterative two-player zero-sum game. 
      At each iteration, they select actions and observe the corresponding payoff entry from the matrix. Based on the observed rewards, both players update their strategies.
      This iterative setting is referred to as a repeated matrix game (or matrix games, for short).
      Our goal is to design algorithms that can perform competitively in such games.
      A standard measure of performance in this setting is regret, which we define formally in later sections. 
      We are also interested in whether the algorithm can approximate the Nash equilibrium $(x^*, y^*)$, measured using divergence metrics such as KL-divergence or total variation distance.
    
    
       \subsection{Evolutionary Reinforcement Learning}
       
       Evolutionary Algorithms (EAs) are randomised heuristics inspired by natural selection, designed to solve optimisation problems~\citep{rozenberg_coevolutionary_2012,eiben_introduction_2015}.
       EAs aim to find global optima with minimal knowledge about fitness functions, making them well-suited for black-box or oracle settings compared to gradient-based methods. 
       They are powerful tools for discovering effective reinforcement learning policies.
       EAs are particularly useful because they can identify good representations, manage continuous action spaces, and handle partial observability.
       Due to these strengths, evolutionary reinforcement learning (ERL) techniques have shown strong empirical success, and we refer readers to~\citep{whiteson2012evolutionary,bai2023evolutionary,li2024bridging} for detailed reviews of ERL.

       Coevolution, rooted in evolutionary biology, involves the simultaneous evolution of multiple interacting populations~\citep{rozenberg_coevolutionary_2012}.
       Interactions can be cooperative (e.g., humans and gut bacteria) or competitive (e.g., predator-prey dynamics).
       These co-evolutionary dynamics have been studied and applied in ERL, demonstrating empirical effectiveness in many applications~\citep{whiteson2012evolutionary,xue2023sample,li2024bridging}. 
       For example, co-evolutionary algorithms (CoEAs), a subset of EAs, have been applied in many black-box optimisation problems under various game-theoretic scenarios~\citep{xue2023sample,gomes2014novelty,hemberg2021spatial,flores2022coevolutionary,MarioRLSBi23,hevia2024analysis,benford2024runtime}.	
    
      Despite the practical success of evolutionary reinforcement learning in domains such as game playing, robotics, and optimisation~\citep{moriarty1999evolutionary,khadka2018evolution,pourchot2018cemrl,hao2023erlre,Li2024erl,Li2024erlver2}, there is a lack of rigorous theoretical analysis~\citep{li2024bridging}.
      In particular, the theoretical foundations of coevolutionary learning in matrix games remain largely unexplored.
      As a starting point, in this work, we aim to bridge this gap by combining evolutionary heuristics with bandit learning and analysing their performance in matrix games from both theoretical and empirical perspectives.

      \subsection{Contributions}
      This paper introduces evolutionary algorithms for learning in matrix games with bandit feedback.
      To the best of our knowledge, this is the first work to provide a rigorous regret analysis of evolutionary reinforcement learning (i.e., \coebl) in matrix games with bandit feedback.
      Specifically, we show that randomised optimism implemented via evolutionary algorithms can achieve sublinear regret in this setting. 
      Our empirical results demonstrate that \coebl outperforms existing bandit learning baselines for matrix games, including \exptr, \ucb, and the \exptrni variant.
      These findings highlight the significant potential of evolutionary algorithms for bandit learning in game-theoretic environments and reveal the role of randomness in effective game-play.
      This work serves as a first step towards a rigorous theoretical understanding of evolutionary reinforcement learning in game-theoretic settings.

       \subsection{Related Works}
      \subsubsection{Regret Analysis of Bandit Learning in Matrix Games}
      Theoretical analysis of bandit learning algorithms in matrix games has been extensively studied.
      Recent works, such as~\citep{pmlrv39auger14,o2021matrix,cai2024uncoupled}, have studied classical bandit algorithms under settings where only rewards or payoffs are observed. 
      In particular, \citet{o2021matrix} conducted an in-depth regret analysis of the \ucb algorithm, Thompson Sampling, and K-Learning, showing that these methods achieve sublinear regret in matrix games.
      \citet{neu2015explore} established a sublinear regret bound for \exptrni, which was subsequently extended to matrix games by \citet{cai2024uncoupled} through a new variant.
      Additionally, \citet{pmlrv39auger14} provided convergence analyses of bandit algorithms in sparse binary zero-sum games, while \citet{cai2024uncoupled} extended these results to uncoupled learning in two-player zero-sum Markov games.
      A more recent study by \citet{li2024optimistic} investigates adversarial regret for Optimistic Thompson Sampling, exploiting repeated-game structures and partial observability to anticipate opponent strategies.

      In contrast, our work aligns with \citet{o2021matrix} in focusing on Nash regret in two-player zero-sum matrix games with bandit feedback. 
      While their approach highlights adversarial dynamics, ours demonstrates the potential of randomised optimism via evolutionary algorithms. 
      We provide a novel regret analysis that complements stochastic optimism-based methods in achieving sublinear Nash regret.
      Moreover, the theoretical understanding of evolutionary bandit learning remains largely unexplored. 
      This paper aims to fill that gap, marking a first step towards the rigorous study of evolutionary bandit learning in matrix games, an area that remains both promising and under-explored.

      \subsubsection{Runtime Analysis of Co-evolutionary Algorithms}
      Recent studies have conducted runtime analyses of cooperative and competitive co-evolutionary algorithms~\citep{jansen_cooperative_2004,lehre_runtime_2022}.
      Here, runtime refers to the number of function evaluations required for the algorithm to find the Nash equilibrium.
      For a comprehensive overview of these contributions to competitive co-evolutionary algorithms, we refer readers to the recent papers \citep{lehre_runtime_2022,hevia2023fitnessAggregation,MarioRLSBi23,lin2024overcoming,benford2024runtime,benford2024runtimeImpartial,lin2024neurips,lin2025aaai}.
      Although we do not analyse the runtime of \coebl in this paper, an interesting direction for future research is to explore how the runtime of \coebl could be studied in the context of matrix games with bandit feedback.
      The idea of competitive coevolution in game-theoretic settings, as explored in the aforementioned works, serves as the foundation for our application of evolutionary methods to bandit learning in matrix games.

\section{Preliminaries}
       \subsection{Notations}
       Given $n \in \mathbb{N}$, we write $[n]:=\{1,2, \cdots ,n\}$.
       $\mathbb{F}_p$ denotes the finite field of $p$ (prime number) elements.
       For example, $\mathbb{F}_3$ denotes the finite field of three elements, $\{-1,0,1\}$.
       We denote the row player by the $x$-player and the column player by the $y$-player.
       $f(n) \in O(h(n))$ if there exists some constant $c>0$ such that $f(n) \leq c h(n)$.
       $f(n) \in \tilde{O}(h(n))$ if there exists some constant $k>0$ such that $f \in O(h(n) \log ^k \left(h(n)) \right)$.
       We define the $(m-1)$-dimensional probability simplex as $\Delta_m:=\{z \in \mathbb{R}^m \mid \sum_{i=1}^mz_i=1, z_i\geq 0\}$.
       In each round $t \in \mathbb{N}$, the row player chooses $i_t \in [m]$, and the column player chooses $j_t \in [m]$; and then $r_t$ is the reward obtained by the row player.
       We define the corresponding filtration $\F_t$ prior to round $t$ by $\F_t := (i_1, j_1, r_1, \ldots, i_{t-1}, j_{t-1}, r_{t-1})$.
       We denoted $\Et{\cdot}:= \E{\cdot \mid \F_t}$.
       For any real number $x$, we define $1 \lor x := \max(1, x)$.
       Given $x \in \{0,1\}^n$, $|x|_1:=\sum_{i=1}^nx_i$.

       \begin{definition}
           A random variable \(X \in \mathbb{R}\) is $\sigma^2$-sub-Gaussian with variance proxy \(\sigma^2\) if \(\E{X} = 0\) and satisfies $ \E{\exp(sX)} \leq \exp\left(\frac{\sigma^2 s^2}{2}\right)$, for all $s \in \mathbb{R}$.
       
       \end{definition}

      \subsection{Two-Player Zero-Sum Games and Nash Regret}

      A two-player game is characterised by the strategy spaces \(\X\) and \(\Y\), along with payoff functions \(g_i: \X \times \Y \to \mathbb{R}\), where \(i \in [2]\). Here, \(g_i(x, y)\) denotes the payoff received by player \(i\) when player~1 plays strategy \(x\) and player~2 plays strategy \(y\).
      
      \begin{definition}
      Given a two-player game with strategy spaces \(\X\) and \(\Y\), and a prime number \(p \in \N\), let the payoff functions \(g_1, g_2: \X \times \Y \to \mathbb{R}\) represent the payoffs for player~1 and player~2, respectively. The game is said to be \emph{zero-sum} if the gain of one player is exactly the loss of the other, i.e., \(g_1(x, y) + g_2(x, y) = 0\) for all \(x \in \X\) and \(y \in \Y\).
      
      \end{definition}

      Many classical games where the outcomes are win, lose and draw, such as Rock-Paper-Scissors, Tic-Tac-Toe and Go, can be modelled as ternary zero-sum games,where \(g(x,y)=1\) denotes a win for player~1, \(g(x,y)=-1\) a win for player~2, and \(g(x,y)=0\) a draw.
      In this paper, we mainly focus on ternary two-player zero-sum games. In matrix games, we evaluate performance using the \emph{Nash regret}, defined as the cumulative difference between the Nash equilibrium payoff in Eq.~(\ref{eq:Nash1}) and the actual rewards obtained by the players.      
       

       \begin{definition}[Nash Regret~\citep{o2021matrix}]
       \label{def:regret}
       Consider any matrix game with payoff matrix $A \in  \mathbb{R}^{m \times m} $
       and the reward for the row player choosing action $i_t\in [m]$ and the column player choosing action $j_t \in [m]$ is given by 
       $r_t=A_{i_tj_t}+\eta_t$, where $\eta_t$ is zero-mean noise, independent and identically distributed from a known distribution at iteration $t \in \mathbb{N}$.
       Given an algorithm \alg\ that maps the filtration $\F_t$ to a distribution over actions $x \in \Delta_m$, we define the Nash regret with respect to the Nash equilibrium payoff $V_A^* \in \mathbb{R}$ by
       \begin{align*}
           \R \left(A, \alg, T \right):= \EAlg{\sum_{t=1}^T V_A^*-r_t}. 
       \end{align*}
       Given any class of games $A \in \A$, for any $T\in \mathbb{N}$, we define 
       \begin{align*}
           \worstre\left(\A, \alg, T\right):= \max_{A \in \A}\R \left(A, \alg, T \right).
       \end{align*}
       \end{definition}
    
       Given a fixed unknown payoff matrix $A$, the regret $\R\left(A, \alg, T\right)$ represents the expected cumulative difference between the Nash equilibrium payoff and the actual rewards obtained by player 1 using algorithm $\alg$ over $T$ iterations. The worst-case regret $\worstre\left(\A, \alg, T\right)$ denotes the maximum regret of algorithm $\alg$ across all possible payoff matrices within the class of games $\A$, thus capturing performance in the worst-case scenario.       


       Nash regret serves as a fundamental measure for evaluating an agent's performance against a best-response opponent, enabling fair comparison with prior work \citep{o2021matrix}. It emphasises long-term convergence to equilibrium strategies, thereby reflecting robust and generalised behaviour in game-theoretic settings. Although it may be less informative for analysing intermediate behaviours \citep{li2024optimistic}, it offers critical guarantees regarding the agent's capacity to adapt towards optimal and resilient strategies against adversarial best-response opponents.

\section{Co-evolutionary Bandit Learning}

\subsection{Learning in Games and \coebl}
The study of learning dynamics in games seeks to understand how players can adapt their strategies to approach the equilibrium when interacting with rational opponents \citep{fudenberg1998theory}.
A commonly used metric for evaluating algorithmic performance in such settings is regret, as formally defined in Definition~\ref{def:regret}.
Alternative evaluation criteria include convergence to Nash equilibrium, which may be measured using KL-divergence or total variation distance.

In this section, we only present the algorithm for the $x$-player, noting that the counterpart for the $y$-player is symmetric.
The proposed method, \coebl (Co-Evolutionary Bandit Learning), leverages co-evolutionary approach in bandit feedback settings.
We denote the empirical mean of the rewards sampled from entry $A_{ij}$ by $\Bar{A^t_{ij}}$, and the number of times up to round $t$ that the row player has selected action $i$ while the column player has selected action $j$, by $n_{ij}^t \in [t] \cup \{0\}$.

       \begin{algorithm}[!ht]
       \caption{\coebl for matrix games}
       \label{alg:CoEBL}
       \begin{algorithmic}[1]
       \REQUIRE $\text{Fitness}(x,B):=\min_{y\in \Delta_m} y^T B x$ where $B \in \mathbb{R}^{m \times m}$ and $x \in \Delta_m$. 
       \STATE \textbf{Initialisation:} $x_0, y_0=  {(1/m,\ldots ,1/m)}$ and  $n^0_{ij}=0$ for all $i,j \in [m]$
       \FOR{round $t = 1, 2, \ldots, T$}
           \FOR{all $i,j \in [m]$} 
           \STATE Compute $\tilde{A}^t_{ij} = \text{Mutate}(\Bar{A^t_{ij}}, {1}/{1 \lor n^t_{ij}})$
           \ENDFOR
           \STATE Obtain $x' \in \arg\max_{x \in \Delta_m} \min_{y \in \Delta_m} y^T \tilde{A}^t x$
           \IF{$\text{Fitness}(x',\tilde{A}^t) > \text{Fitness }(x_{t-1},\tilde{A}^t)$}
           \STATE  Update policy $x_t:=x'$   
           \ELSE
           \STATE Update policy $x_t:=x_{t-1}$
           \ENDIF
           \STATE  Update the query number of each entry in the payoff matrix $n^t_{ij}$ for all $i,j \in [m]$
       \ENDFOR
       \end{algorithmic}
       \end{algorithm}
    
       The following mutation variant is considered in this paper.
    
       \begin{align*}
             \text{Mutate}(\Bar{A^t_{ij}}, \frac{1}{1 \lor n^t_{ij}}) 
             =&\quad \Bar{A^t_{ij}} \\
              +& \Nr \left(\sqrt{\frac{c \log (2T^2 m^2)}{1 \lor n^t_{ij} +1}},\frac{1}{(1 \lor n^t_{ij})^2} \right),
       \end{align*}
       where $\Nr \left(\mu,\sigma^2\right)$ denotes a Gaussian random variable with mean $\mu$ and variance $\sigma^2$, and $c$ is some constant with respect to $T$ and $m$.


      Evolutionary algorithms typically consist of two main components: variation operators and selection mechanisms.
      Variation operators generate new candidate policies from the current population, while the selection mechanism retains the most promising policies based on a predefined fitness function.
      In \coebl, we define the fitness function as $\text{Fitness}(x, B) := \min_{y \in \Delta_m} y^T B x$, which evaluates the performance of a policy $x$ against the best response of the opponent, given the payoff matrix $B$.
      Initially, \coebl applies a Gaussian mutation operator to perturb the estimated payoff matrix $\tilde{A}^t$, and generates a mutated policy $x'$ for the row player.
      As the estimated matrix $\tilde{A}$ is fully observable by the $x$-player, the optimal response $x'$ in line 6 is computed by solving a linear programming problem~\citep{bubeck2015convex,maiti2023query}.
      Between lines 7 and 10, the new policy $x'$ is evaluated using the fitness function and compared against the previous policy $x_{t-1}$.
      If $x'$ strictly outperforms $x_{t-1}$, the policy is updated; otherwise, we retain $x_{t-1}$ to avoid potential instability in the maximin solution.

      \par
      The central idea behind \coebl is to adopt the principle of \emph{optimism in the face of uncertainty} (OFU) to explore the action space and exploit the opponent's best response~\citep{bubeck2012regret,lattimore2020bandit}.
      However, unlike traditional bandit algorithms such as those in the \ucb family, \coebl implements \emph{randomised optimism} via evolutionary algorithms.
      Through the variation operator, \coebl generates diverse estimated payoff matrices, which in turn lead to a broader range of diverse candidate policies.
      The selection mechanisms then guide the evolutionary process towards higher fitness.
      
      Although \coebl shares similar priors as Thompson Sampling using Gaussian priors by \citep{agrawal2017near} and Optimism-then-NoRegret (OTN) framework \citep{li2024optimistic}, it differs in two key ways.
      First, our Gaussian prior is defined via upper confidence bounds rather than empirical means alone, with a higher variance term ${1}/{\left(1 + n_{ij}^{t} \right)}$ compared with one by \citet{agrawal2017near} and a different scaling constant than used in OTN \citep{li2024optimistic}.
      Second, \coebl incorporates a selection mechanism, which implicitly adjusts the sampling distribution and is uncommon in Thompson Sampling.
      These two key changes are proven to be beneficial for adversarial matrix game settings in later sections.
      
      While \citet{o2021matrix} has demonstrated that deterministic optimism enables \ucb (Algorithm~\ref{alg:UCB}) to achieve sublinear regret and outperform classic \exptr, and other bandit baselines, we will show that randomised optimism (via evolution) also exhibits sublinear Nash regret.
      More importantly, we will demonstrate that randomised optimism in matrix games can be more effective and adaptive in preventing exploitation by the opponent than deterministic optimism, and thus outperforms existing bandit baselines.
      Specifically, it outperforms the current bandit baseline algorithms for matrix games, including \exptr (Algorithm~\ref{alg:Exp3}), \ucb (Algorithm~\ref{alg:UCB}) and the \exptrni variant (Algorithm~\ref{alg:MGBF}) on the matrix game benchmarks considered in this paper.

    \subsection{Regret Analysis of \coebl}
    \par In this section, we conduct the regret analysis of \coebl in matrix games.
    Before our analysis, we need some technical lemmas. 
    We defer these lemmas to the appendix.
       
       \par We follow the setting in \citep{o2021matrix} and consider the case where there is 1-sub-Gaussian noise when querying the payoff matrix. Assume that given $t\in \mathbb{N}$:
       
       \begin{quote}
           (A): The noise process $\eta_t$ is 1-sub-Gaussian 
           and the payoff matrix satisfies $A \in  [0, 1]^{m \times m}$.
           \label{eq:assumption}
       \end{quote}
       
    
    
       \begin{restatable}{lemma}{SecThreeLemThree}
       \label{lem:HoeffdingBound}
          
       Suppose Assumption (A) holds with $T\geq 2m^2 \geq 2$ and $\delta:=\left( 1/2T^2m^2 \right)^{c/8}$ where $c>0$ is the mutation rate in \coebl. 
       For each iteration $t \in \mathbb{N}$, given $\tilde{A}^t$ in Algorithm~\ref{alg:CoEBL}, we have:
       \begin{equation}
           \Pr \left(A_{ij}- (\tilde{A_t})_{ij} \leq 0 \right) \geq 1 -\delta, \quad \text{ for all $i, j \in [m]$}. \label{eq:Hoeffding}
       \end{equation}
       \end{restatable}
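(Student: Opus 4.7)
The plan is to establish \coebl's optimism guarantee by showing that the Gaussian mutation applied to the empirical estimate $\bar{A}^t_{ij}$ shifts $\tilde{A}^t_{ij}$ above the true entry $A_{ij}$ with high probability. This will serve as a randomised analogue of the standard UCB confidence bound, where the ``confidence radius'' is split into a deterministic upward shift (the mean of the Gaussian mutation) and an independent random fluctuation (its variance).

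To begin, I would fix $(i,j) \in [m]\times [m]$, set $n := 1 \lor n^t_{ij}$, and decompose
\[
\tilde{A}^t_{ij} - A_{ij} \;=\; \bigl(\bar{A}^t_{ij} - A_{ij}\bigr) + G^t_{ij},
\]
where $G^t_{ij} \sim \Nr\bigl(\mu_n, \sigma_n^2\bigr)$ with $\mu_n = \sqrt{c\log(2T^2m^2)/(n+1)}$ and $\sigma_n^2 = 1/n^2$, drawn independently of the filtration $\F_t$. I would then condition on the event $\{n^t_{ij}=n\}$ for any $n \ge 1$ and observe that, since the noise $\eta_s$ is $1$-sub-Gaussian by Assumption~(A) and the relevant noise terms form a martingale difference sequence, the centred empirical mean $\bar{A}^t_{ij} - A_{ij}$ is sub-Gaussian with variance proxy $1/n$. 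By independence of $G^t_{ij}$, the sum $\tilde{A}^t_{ij} - A_{ij}$ is sub-Gaussian with mean $\mu_n$ and variance proxy $V_n := 1/n + 1/n^2 = (n+1)/n^2$.

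The next step applies the one-sided sub-Gaussian tail inequality:
\[
\Prob{A_{ij} - \tilde{A}^t_{ij} > 0 \,\big|\, n^t_{ij} = n} \;\le\; \exp\!\left(-\frac{\mu_n^2}{2 V_n}\right) \;=\; \exp\!\left(-\frac{c\, n^2 \log(2T^2 m^2)}{2(n+1)^2}\right).
\]
Since $n \mapsto n^2/(n+1)^2$ is strictly increasing on $n\ge 1$ with minimum $1/4$ attained at $n=1$, the exponent is at least $c\log(2T^2m^2)/8$ for every $n\ge 1$, yielding a uniform conditional bound of $(2T^2 m^2)^{-c/8} = \delta$. Integrating over the distribution of $n^t_{ij}$ removes the conditioning and produces the claim.

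The main obstacle I foresee is the boundary case $n^t_{ij} = 0$, where $\bar{A}^t_{ij}$ is not a genuine empirical mean but a convention (e.g., $\bar{A}^t_{ij} := 0$), so the $(1/n)$-sub-Gaussian argument for the empirical deviation does not apply directly. I would handle it separately: since $A_{ij} \in [0,1]$ under Assumption~(A), the residual $A_{ij} - \bar{A}^t_{ij}$ is deterministic and bounded in $[-1,0]$, and the event reduces to a Gaussian tail bound on $G^t_{ij}$ alone, with mean $\mu_1 = \sqrt{c\log(2T^2m^2)/2}$ and unit variance. A short calculation using $T \ge 2m^2 \ge 2$ (so $\log(2T^2m^2) \ge \log 16$) verifies that the same $(1/(2T^2m^2))^{c/8}$ bound holds for any sufficiently large $c$, yielding a unified conclusion across all $n \ge 0$.
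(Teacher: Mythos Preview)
Your proposal is correct and follows essentially the same route as the paper: the paper also combines the empirical-mean deviation with the Gaussian perturbation into a single sub-Gaussian quantity, applies a Hoeffding-type tail bound, and uses the inequality $n/(n+1)\ge 1/2$ (equivalently your $n^2/(n+1)^2\ge 1/4$) to absorb the loss and obtain the exponent $c\log(2T^2m^2)/8$. The only cosmetic difference is that the paper frames the Gaussian term as one additional independent $1$-sub-Gaussian sample, giving $n+1$ i.i.d.\ terms and applying Hoeffding to their average, whereas you compute the variance proxy $V_n=(n+1)/n^2$ directly---the arithmetic is identical. One small slip in your $n^t_{ij}=0$ aside: with the convention $\bar{A}^t_{ij}=0$ and $A_{ij}\in[0,1]$, the residual $A_{ij}-\bar{A}^t_{ij}$ lies in $[0,1]$, not $[-1,0]$, so the worst case requires $G^t_{ij}\ge 1$ rather than merely $G^t_{ij}\ge 0$; the Gaussian-tail check you outline still goes through, but the sign should be corrected.
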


    \begin{restatable}[Main Result]{theorem}{SecThreeMain}
       \label{thm:main}
       Consider any two-player zero-sum matrix game. 
       Under Assumption (A) with $T \geq 2m^2 \geq 2$ and $\delta = \left(1/2T^2m^2\right)^{c/8}$, where $c > 0$ is the mutation rate in \coebl, the worst-case Nash regret of \coebl for $c \geq 8$ is bounded by $2 \sqrt{2c T m^2 \log(2T^2m^2)}$, i.e., $\tilde{O}(\sqrt{m^2T})$.
       \end{restatable}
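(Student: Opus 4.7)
The plan is to follow an optimism-in-the-face-of-uncertainty analysis tailored to the randomised nature of \coebl. The key idea is that if the mutated matrix $\tilde{A}^t$ dominates the true matrix $A$ entrywise with high probability, then its maximin value upper-bounds the Nash value $V_A^*$, so the per-round regret is controlled by $y_t^\top(\tilde{A}^t-A)x_t$. This quantity is then reduced to a bandit-style sum over the sampled cells and handled via concentration and a standard pigeonhole argument.

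First I would establish the uniform optimism event $E := \{\tilde{A}^t_{ij} \geq A_{ij}\text{ for all }t,i,j\}$. Lemma~\ref{lem:HoeffdingBound} gives $\Pr(A_{ij} \leq \tilde{A}^t_{ij}) \geq 1-\delta$ for every fixed $(t,i,j)$, so a union bound yields $\Pr(E^c) \leq Tm^2\delta$. With $\delta = (1/2T^2m^2)^{c/8}$ and $c \geq 8$, this is at most $1/(2T)$, whose contribution to the Nash regret is $O(1)$ since rewards lie in $[0,1]$. On $E$, entrywise domination implies $V_{\tilde{A}^t}^* \geq V_A^*$, and the selection rule of \coebl preserves the invariant $\min_{y} y^\top \tilde{A}^t x_t = V_{\tilde{A}^t}^*$: either $x_t = x'$ (the freshly computed maximin of $\tilde{A}^t$), or $x_{t-1}$ is retained precisely when its fitness already matches that of $x'$, which by optimality of $x'$ forces equality. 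Hence
\[
V_A^* - y_t^\top A x_t \;\leq\; y_t^\top \tilde{A}^t x_t - y_t^\top A x_t \;=\; \sum_{i,j} x_t(i)\,y_t(j)\,(\tilde{A}^t_{ij}-A_{ij}),
\]
whose conditional expectation over $(i_t,j_t) \sim x_t \otimes y_t$ equals $\Et{\tilde{A}^t_{i_t j_t} - A_{i_t j_t}}$.

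The next step is to upper bound $\tilde{A}^t_{ij}-A_{ij}$ by $O(\sqrt{c\log(2T^2m^2)/(1 \lor n^t_{ij})})$, using the same two ingredients behind Lemma~\ref{lem:HoeffdingBound}: Hoeffding concentration of the empirical mean around $A_{ij}$, plus the explicit shift $\sqrt{c\log(2T^2m^2)/(1\lor n^t_{ij}+1)}$ and Gaussian tail (standard deviation $1/(1\lor n^t_{ij})$) of the mutation noise. Substituting this per-entry bound and invoking the tower property, the expected regret on $E$ is bounded by $\sqrt{c\log(2T^2m^2)}\cdot \E{\sum_{t=1}^T 1/\sqrt{1\lor n^t_{i_t j_t}}}$. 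I would bound the inner sum by a standard telescoping argument: each cell $(i,j)$ is visited $n^{T+1}_{ij}$ times, so $\sum_t 1/\sqrt{1\lor n^t_{i_t j_t}} \leq \sum_{ij} 2\sqrt{n^{T+1}_{ij}}$, and Cauchy--Schwarz with $\sum_{ij} n^{T+1}_{ij}=T$ then gives $\sum_{ij}\sqrt{n^{T+1}_{ij}} \leq \sqrt{m^2 T}$. Collecting constants produces the target bound $2\sqrt{2cTm^2\log(2T^2m^2)}$.

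The main obstacle I anticipate is justifying the selection-mechanism invariant $\min_y y^\top \tilde{A}^t x_t = V_{\tilde{A}^t}^*$ under the conditional-acceptance rule of \coebl, since this is where the algorithm departs most visibly from a plain ``play-the-optimistic-estimate'' scheme; once this invariant is secured, the rest of the decomposition follows the usual optimism template. A secondary technicality is calibrating the constants in the upper bound on $\tilde{A}^t_{ij}-A_{ij}$ so that its own failure probability is comparable to $\delta$, is absorbed by the same union bound, and leaves the leading constant of the final bound at the stated $2\sqrt{2c}$.
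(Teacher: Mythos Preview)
Your proposal follows essentially the same approach as the paper's proof: optimism via Lemma~\ref{lem:HoeffdingBound} and a union bound, the selection-mechanism invariant showing $x_t$ attains the maximin of $\tilde{A}^t$, reduction of the per-round regret to $y_t^\top(\tilde{A}^t-A)x_t$, and the final $\sum_t 1/\sqrt{1\vee n^t_{i_tj_t}} \leq 2\sqrt{m^2T}$ via $\sum_{k\leq n}1/\sqrt{k}\leq 2\sqrt{n}$ and Cauchy--Schwarz. The only real difference is in how $\tilde{A}^t_{ij}-A_{ij}$ is controlled: you propose a separate high-probability upper-tail bound folded into the union bound, whereas the paper uses the explicit decomposition of the mutation and kills the zero-mean Gaussian part in expectation, which sidesteps the constant-calibration ``secondary technicality'' you flag.
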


       \begin{proof}[Sketch of Proof]
       Due to page limit, we defer the full proofs of Lemma~\ref{lem:HoeffdingBound} and Theorem~\ref{thm:main} to the appendix and provide a simple proof sketch here.
       First, we bound the regret under the case where all the entries of the estimated payoff matrix are greater than those of the real, unknown payoff matrix (this event is denoted by $E_t^c$ at iteration $t \in \mathbb{N}$).   
       Secondly, we use the law of total probability to consider both cases: when all the entries of the estimated payoff matrix are greater than the real payoff matrix, and the converse (i.e., event $E_t$).
       We already have the upper bound for the first part; the second part can be trivially bounded by $1$ in each iteration.
       Using Lemma~\ref{lem:HoeffdingBound}, we can obtain the upper bound of probability of event $E_t$.
       Combining these bounds provides us with the upper bound for the regret of \coebl.
       \end{proof}
    

      Theorem~\ref{thm:main} demonstrates that, under the worst-case scenario (assuming the best response of the opponent across all possible matrix game instances under Assumption (A)), \coebl\ achieves sublinear regret. 
      Specifically, the regret of \coebl is bounded by $\tilde{O}(\sqrt{m^2T})$, matching the regret bound of \ucb. This result suggests that deterministic optimism in the face of uncertainty, as highlighted in \citep{o2021matrix}, is not the sole determinant for achieving sublinear regret. In fact, it indicates that the mechanism of optimism: whether deterministic or stochastic is not necessarily critical to the asymptotic regret guarantees in adversarial settings.

      However, a key distinction lies in the practical robustness of randomised (stochastic) optimism via evolution compared to deterministic optimism. As we demonstrate in later sections, randomised optimism offers greater adaptability and robustness in game-playing scenarios, enabling \coebl to outperform other algorithms in benchmark tasks. 
      Intuitively, stochastic optimism facilitates a more balanced exploration-exploitation trade-off by incorporating upper-confidence-bound randomness, which helps prevent premature convergence to sub-optimal strategies and allows the algorithm to respond more flexibly to dynamic and adversarial environments.

      This analysis highlights the potential advantages of stochastic methods in algorithm design for complex environments.
      Notably, the current analysis of \coebl\ assumes $c \geq 8$ due to technical constraints. We conjecture that the regret bound can be improved by considering smaller values of $c$, which may further enhance the practical performance of \coebl. Therefore, we recommend hyper-parameter tuning to optimise performance across various problem settings.

      In conclusion, the results highlight that stochastic optimism, as implemented through evolutionary methods in \coebl, provides both theoretical guarantees and practical advantages, making it more robust than deterministic optimism in certain adversarial and game-theoretic scenarios.

\section{Empirical Results}

       In this section, we present empirical results comparing the discussed algorithms.
       We are interested in empirical regret in specific game instances, measured by cumulative (absolute) regret, 
       i.e.,
       \begin{align}
           \sum_{t=1}^T |V_{A}^* - r_t| \quad  \text{and}  \quad \sum_{t=1}^T V_{A}^* - r_t \label{eq:absoluteRegret}
       \end{align}
       where $r_t$ is the obtained reward at round $t$.
       We focus on two scenarios, including self-play and $\alg~1$-vs-$\alg~2$.
       In the self-play scenario, both row and column players use the same algorithm with the same information. 
       We use the absolute regret (the first metric) to measure the performance of the algorithms in this case.
       The $\alg~1$-vs-$\alg~2$ is a generalisation of the self-play scenario. 
       We use the second metric in Eq.~\ref{eq:absoluteRegret} to measure the performance of the algorithms.
       The $\alg~1$-vs-$\alg~2$ means the row player uses $\alg~1$, and the column player uses $\alg~2$ with the same information. 
       As in the setting of \citep{o2021matrix}, 
       the plots below show the regret (not absolute regret) from the maximiser's (\alg~1) perspective. 
       A positive regret value means that the minimiser (\alg~2) is, on average winning and \textit{vice versa}.
       This allows us to compare our algorithms directly.

       \par Moreover, to measure how far the players are from the Nash equilibrium, we use the KL-divergence between the policies of both players and the Nash equilibrium or the total variation distance (for the case where the KL-divergence is not well-defined), i.e., $ \text{KL}(x_t,x^*)+\text{KL}(y_t,y^*) $ and $\text{TV}(x_t,x^*)+\text{TV}(y_t,y^*)$, where
       \begin{align}
           \text{KL}(a, b) &:= \sum_{i} a(i) \ln \left( \frac{a(i)}{b(i)}\right) \nonumber \\
           \text{TV}(a,b) &:= \frac{1}{2}\sum_{i} |a(i)- b(i)| \nonumber 
       \end{align}
       for any $a, b\in \Delta_m$ and $(x^*, y^*)$ is the Nash equilibrium of $A$. 
       
       \subsubsection*{Parameter Settings}
       Given $K$ is the number of actions for each player and $T$ is the time horizon,
       for \exptr, we use the exploration rate $\gamma_t=\min \{\sqrt{K \log K /t}, 1\}$ and learning rate $\eta_t=\sqrt{2\log K /tK}$ as suggested in \citep{o2021matrix}.
       For the variant of \exptrni, we use the same settings $\eta_t = t^{-k_\eta}$, $\beta_t = t^{-k_\beta}$, $\epsilon_t = t^{-k_\epsilon}$ where $k_\eta = {5}/{8}$, $k_\beta = {3}/{8}$, $k_\epsilon = {1}/{8}$ as suggested in \citep{cai2024uncoupled}.
       For \coebl, we set the mutation rate $c=2$ for the RPS game and $c=8$ for the rest of the games.
       There is no hyper-parameter needed for \ucb.
       For the observed reward, we consider standard Gaussian noise with zero mean and unit variance, 
       i.e. $r_t=A_{i_t,j_t}+\eta_t$ where $\eta_t \sim \Nr (0,1)$.
       We compute the empirical mean of the regrets and the KL-divergence (or total variation distance), and present the $95\%$ confidence intervals in the plots.
       We run $50$ independent simulations (up to $3000$ iterations) for each configuration (over $50$ seeds).
       
       \subsection{Rock-Paper-Scissors Game}
       We consider the classic matrix game benchmark:  Rock-Paper-Scissors games~\citep{littman1994markov,o2021matrix}, and its payoff matrix is defined as follows.
       
       \begin{table}[H]
           \centering
           \begin{tabular}{c|c c c}
              & R & P & S \\ \hline
           R  & 0 & 1 & -1 \\
           P  & -1 & 0 & 1 \\
           S  & 1 & -1 & 0 \\
           \end{tabular}
           \caption{The payoff matrix of RPS game. R denotes rock, P denotes paper, and S denotes scissors.}
           \end{table}
           
           It is well known that $x^*,y^*=(1/3,1/3,1/3)$ is the unique mixed Nash equilibrium of the RPS game for both players.
           We conduct experiments using
           Algorithms~\ref{alg:Exp3} to~\ref{alg:MGBF} and compare them with our proposed Algorithm~\ref{alg:CoEBL} 
           (i.e. \coebl) on the classic matrix game benchmark: the RPS game.
       
           \begin{figure}[H]
               \centering
            \subfloat{
                \includegraphics[width=0.32\linewidth]{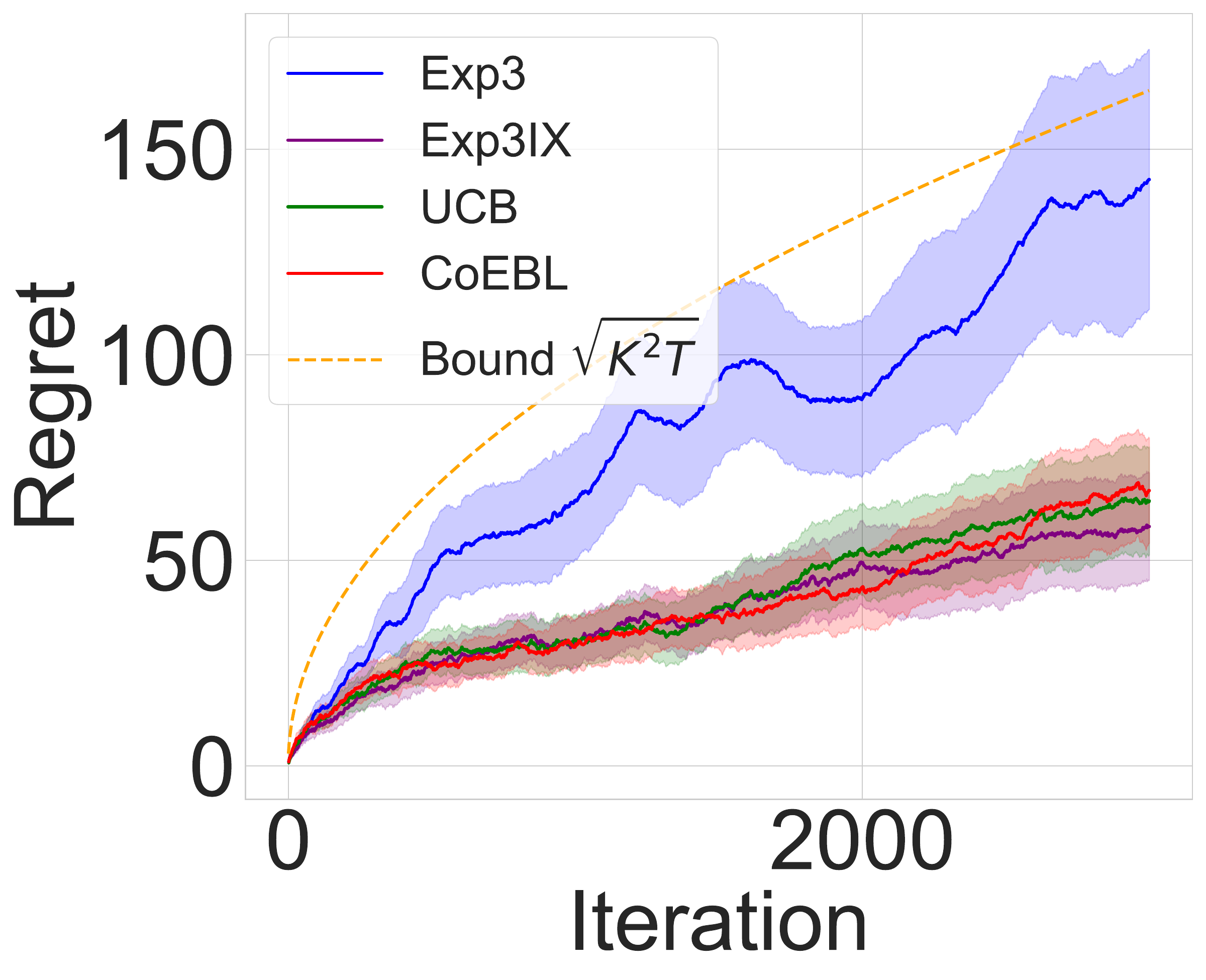}
            }
            \hfill
            \subfloat
            {%
                   \includegraphics[width=0.32\linewidth]{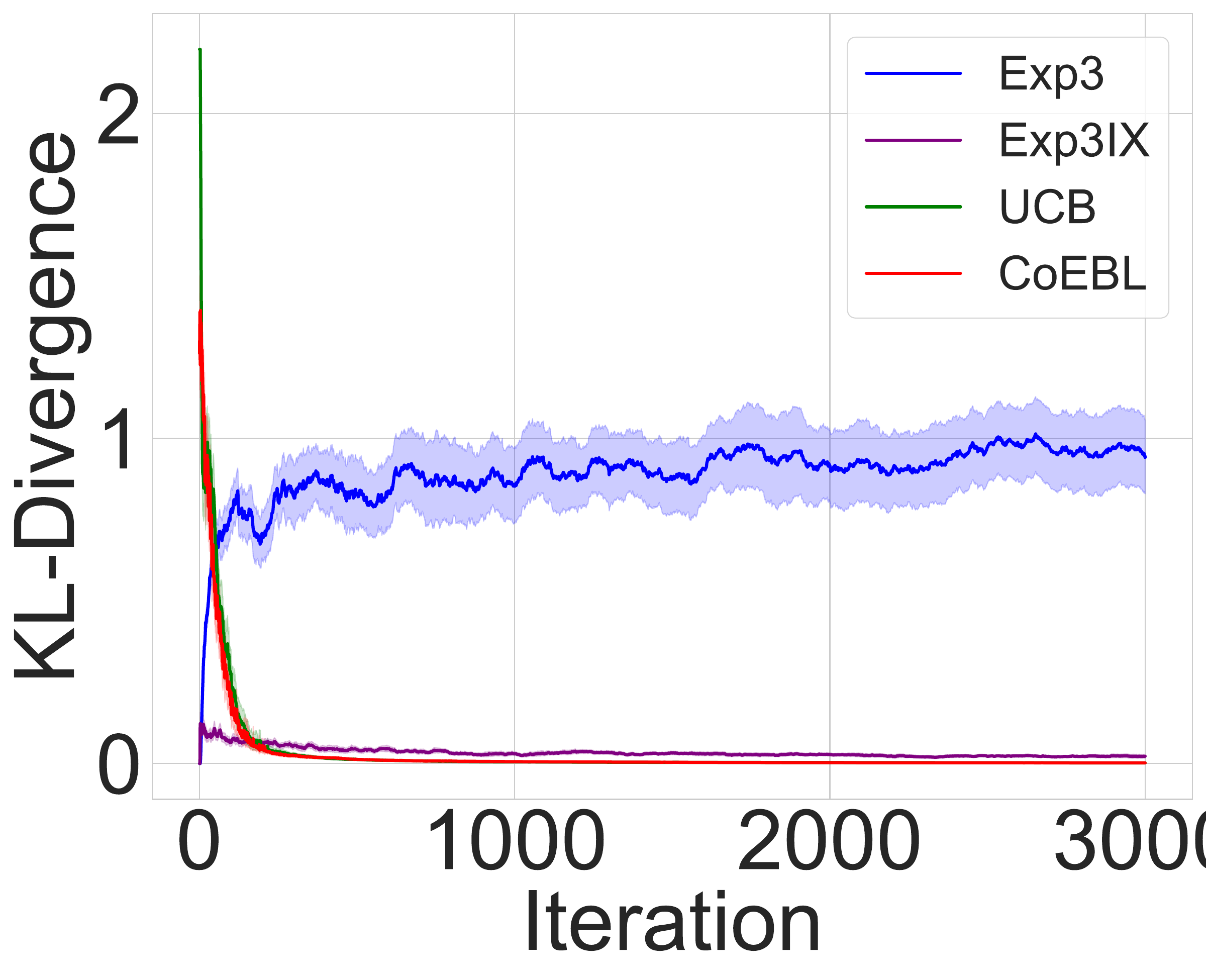}}
               \hfill
             \subfloat
             {%
                   \includegraphics[width=0.32\linewidth]{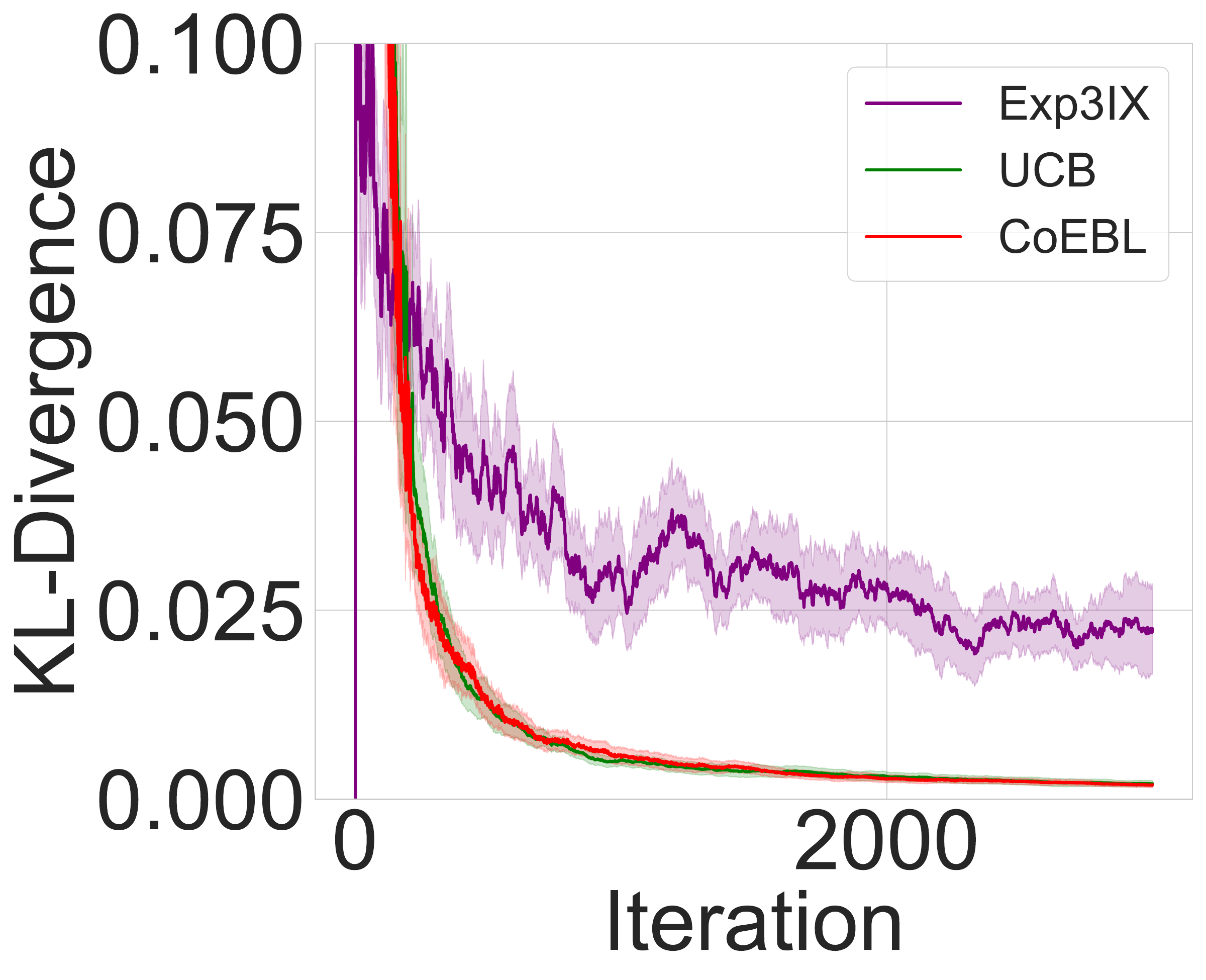}}
             \caption{Regret and KL-divergence for Self-Plays on RPS games}
             \label{fig:KLdivergence_RPS} 
           \end{figure}

           \par In Figure~\ref{fig:KLdivergence_RPS}, we present the self-play results of each algorithm. 
           We can observe that \coebl also exhibits sublinear regret in the RPS game, similar to other bandit baselines, and matches our theoretical bound.
           In terms of the KL-divergence, \exptr, as reported in \citep{o2021matrix,cai2024uncoupled}, diverges from the Nash equilibrium.
           By zooming in on the KL-divergence plot, we can observe that \coebl and \ucb converges to the Nash equilibrium faster than the other algorithms; especially, \EXPTrNi has a much slower convergence rate.

           \par Next, we compare the performance of the algorithms by examining their regret bounds and KL-divergence from the Nash equilibrium when algorithms compete with each other using the same information. 
           In Figure~\ref{fig:Regret_RPS2}, we can clearly observe that \coebl outperforms the \exptr family, including \exptr and \exptrni, in terms of regret.
           On average, \coebl has a smaller advantage over \ucb in terms of regret, since the empirical mean of regret is above $5$ but below $10$ after iteration $2000$.
           
           \begin{figure}[!htpt]
               \centering
             \subfloat[Exp3 vs \coebl]{%
                  \includegraphics[width=0.34\linewidth]{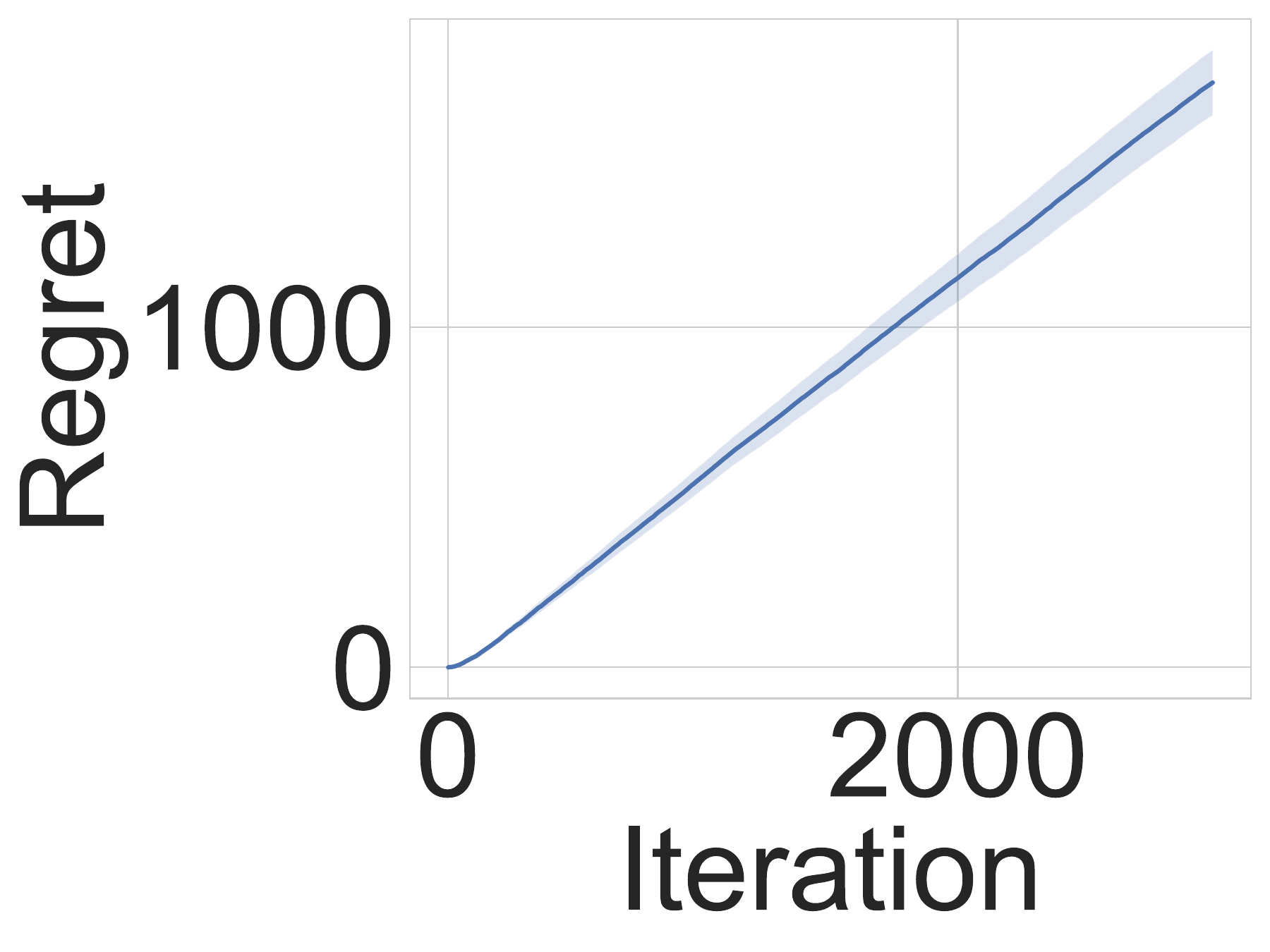}}
             \subfloat[Exp3-IX vs \coebl]{%
                   \includegraphics[width=0.34\linewidth]{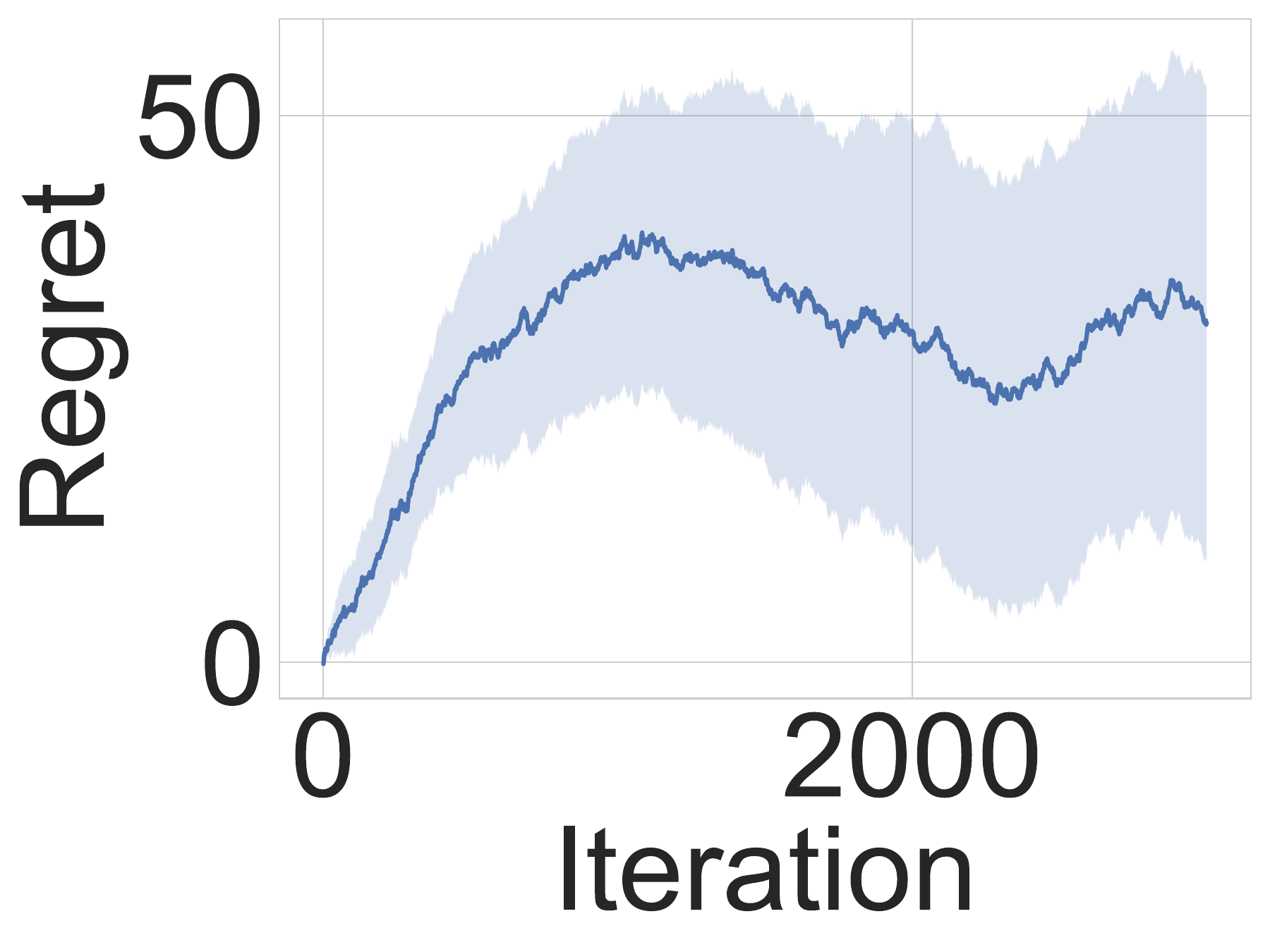}}
             \subfloat[UCB vs \coebl]{%
                   \includegraphics[width=0.34\linewidth]{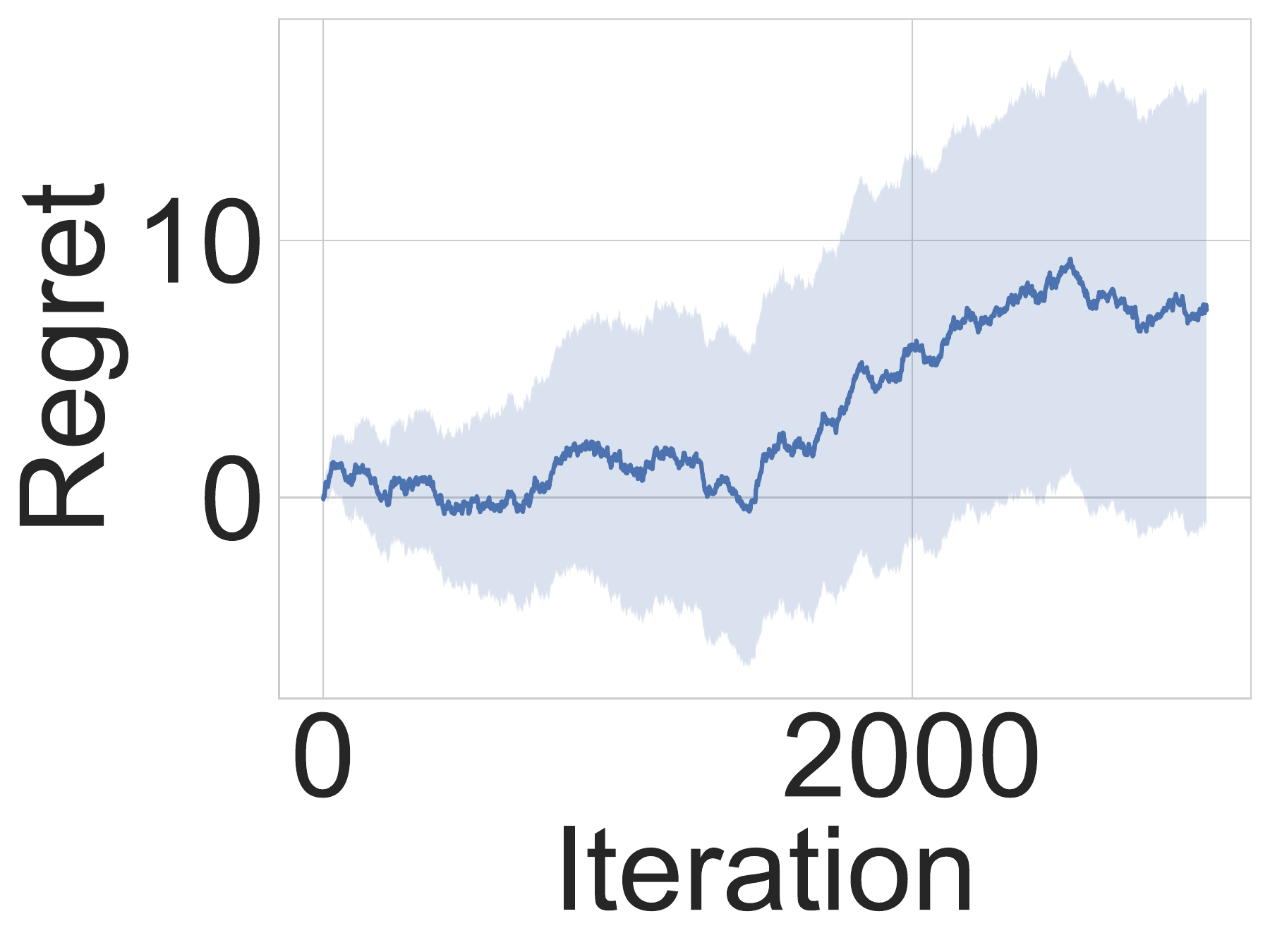}}
             \caption{Regret for $\alg~1$-vs-$\alg~2$ on RPS games}
             \label{fig:Regret_RPS2} 
           \end{figure}
       
           The RPS game with a small number of actions is relative simple for these algorithms to play.
           Moreover, although \coebl completely outperforms the \exptr family, it does not have an overwhelming advantage over the \ucb. 
           How do these algorithms behave on more complex games with exponentially many actions?
           Can \coebl still take over the game? 
           Next, we answer these questions by considering \Diagonal and \BigNum games.
       
           \subsection{\Diagonal Game}
           \Diagonal is a pseudo-Boolean maximin-benchmark on which \citet{lin2024overcoming} conducted runtime analysis of coevolutionary algorithms.
           Both players have an exponential number (i.e. $2^n$) of pure strategies.
           To distinguish between pure strategies that consist of the same number of $1$, 
           we modify the original \Diagonal by introducing a `draw' outcome.
           For $\U=\{0,1\}^n$ and $\V=\{0,1\}^n$, the payoff function $\Diagonal:\U \times \V \rightarrow \{0,1\}$ is defined by
           \begin{align*}
               \Diagonal(u,v)
               := \begin{cases} 
                 1 & |v|_1 < |u|_1   \\
                 0 & |v|_1 = |u|_1 \\
                 -1 & \text{otherwise}
              \end{cases}.
           \end{align*}
       
           As shown by \citet{lin2024overcoming}, this game (we provide a simple example in the appendix) exhibits a unique pure Nash equilibrium where both players choose $1^n$. 
           This corresponds to the mixed Nash equilibrium where $x^*=(0, \cdots ,1)$ and $y^*=(0,\cdots ,1)$.
           We conduct experiments using
           Algorithms~\ref{alg:Exp3} to~\ref{alg:MGBF} and compare them with our proposed Algorithm~\ref{alg:CoEBL} 
           (i.e. \coebl) on another matrix game benchmark: the \Diagonal game.
           We set the mutation constant $c=8$ for \coebl and consider $n=2, 3, 4, 5, 6, 7$ in the experiments.
       
           \begin{figure}[!ht]
               \centering
               \subfloat[$n=2$
               ]{%
                  \includegraphics[width=0.33\linewidth]{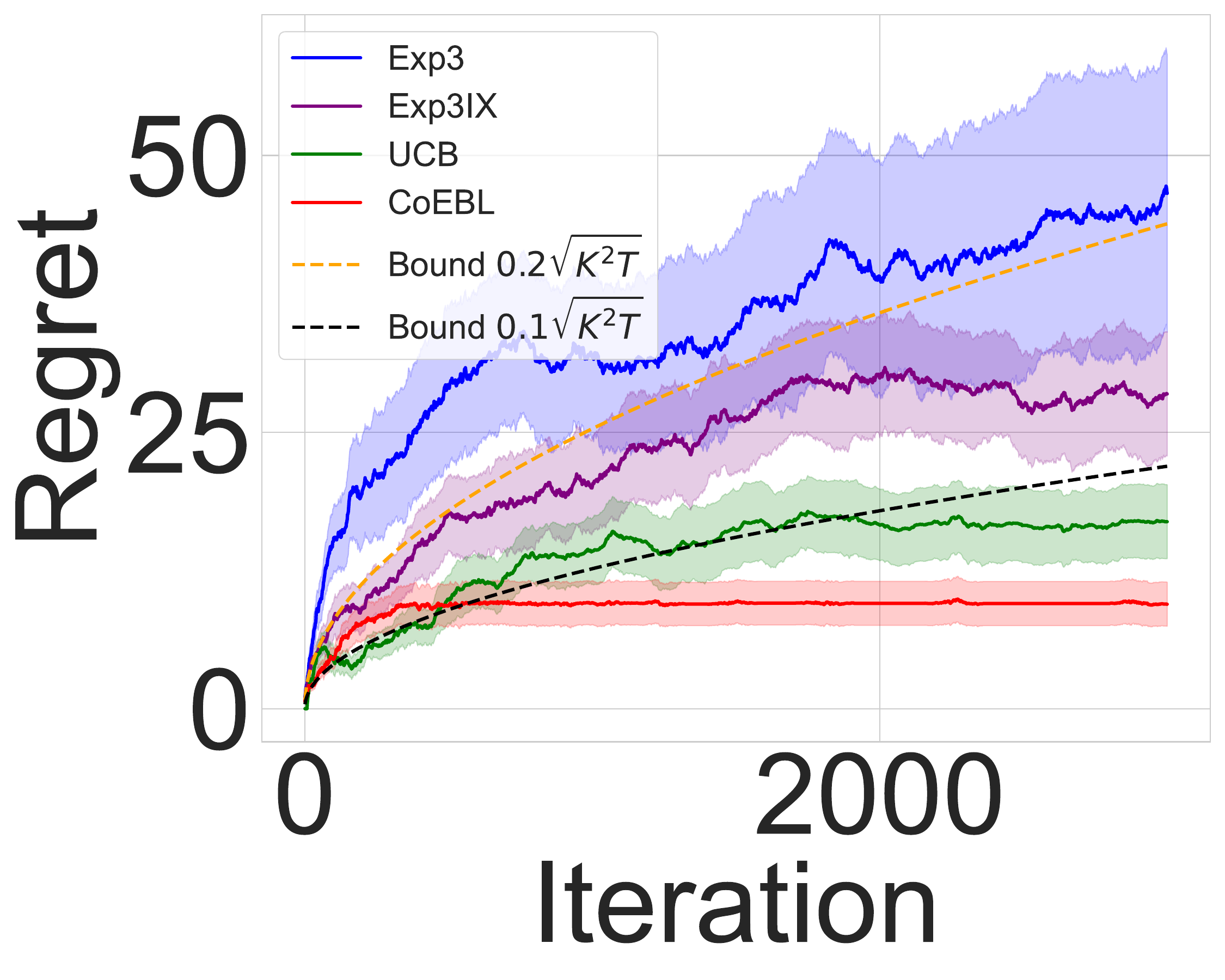}}
               \hfill
               \subfloat[$n=3$
               ]{%
                  \includegraphics[width=0.33\linewidth]{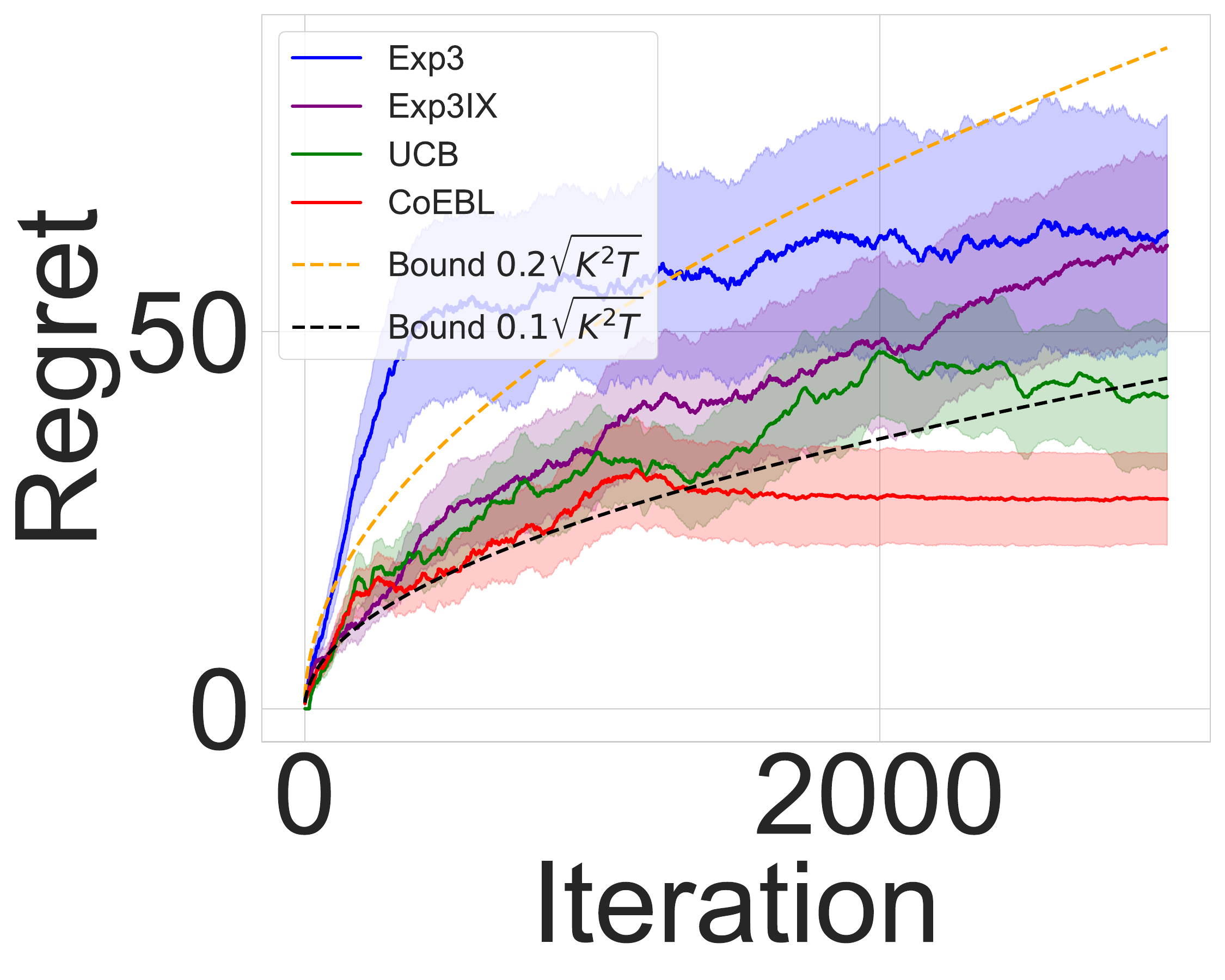}}
               \hfill
               \subfloat[$n=4$
               ]{%
                  \includegraphics[width=0.33\linewidth]{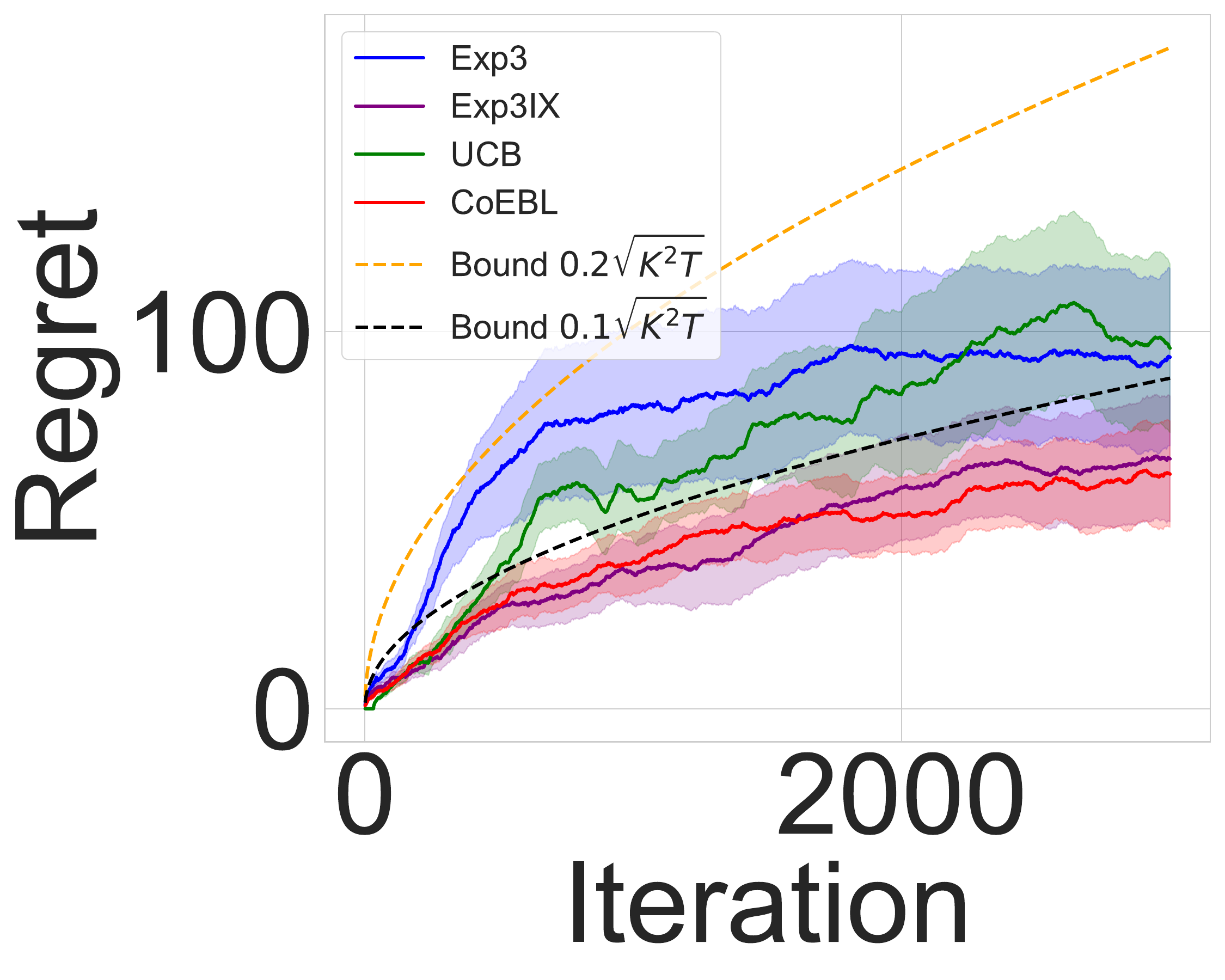}}
               \hfill
               \subfloat[$n=2$
               ]{%
                  \includegraphics[width=0.33\linewidth]{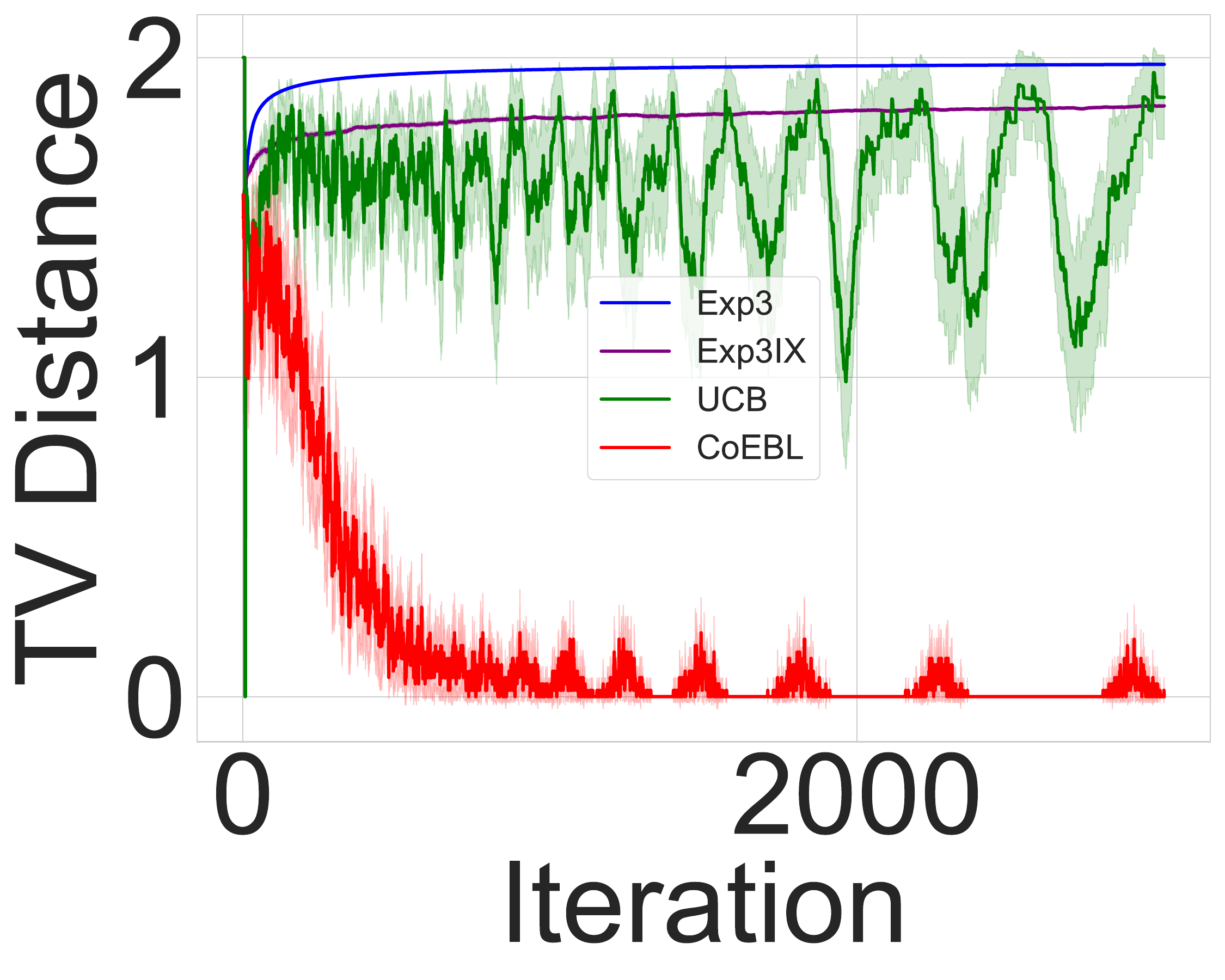}}
               \hfill
               \subfloat[$n=3$
               ]{%
                  \includegraphics[width=0.33\linewidth]{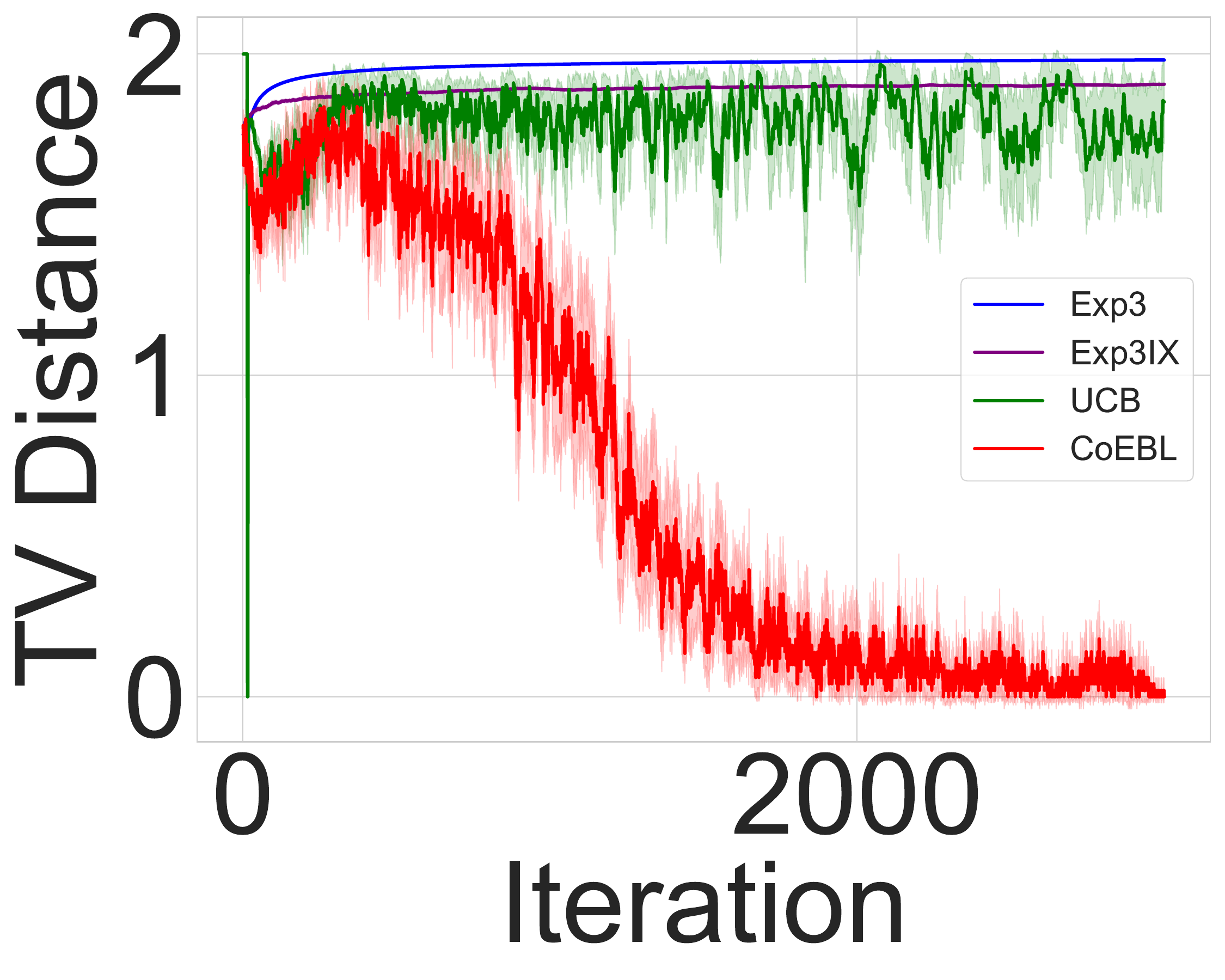}}
               \hfill
               \subfloat[$n=4$
               ]{%
                  \includegraphics[width=0.33\linewidth]{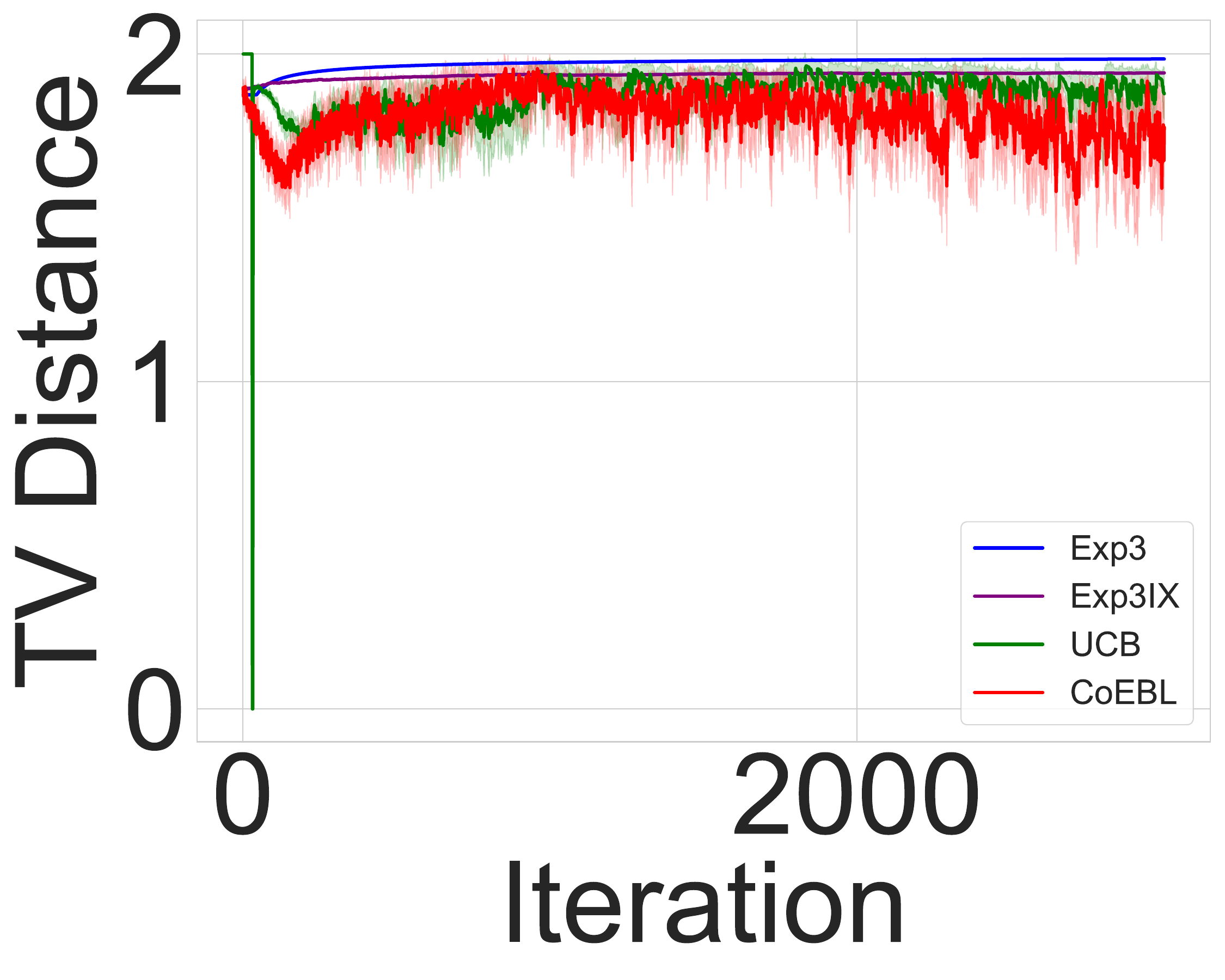}}
               \hfill
               \caption{Regret and TV Distance for Self-Plays on \Diagonal}
               \label{fig:Comparison_Regret_TV_Diagonal} 
           \end{figure} 
       
           In Figures~\ref{fig:Comparison_Regret_TV_Diagonal} and \ref{fig:Comparison_Regret_TV_Diagonal_2}\footnote{Figure~\ref{fig:Comparison_Regret_TV_Diagonal_2} is in the appendix.}, we present the self-play results of each algorithm on \Diagonal game for various values of $n$.
           Our results show that \coebl consistently exhibits sublinear regret in the \Diagonal game, aligning with our theoretical bounds and similar to other bandit algorithms.
           As $n$ increases, the regret of the baseline algorithms grows as expected.   
           \coebl remains more adaptive and robust in more challenging games, maintaining sublinear regret beneath the theoretical bound ($0.1\sqrt{K^2T}$), as indicated by the black dotted line.
           We also observe that the regrets of all algorithms increases as $n$ grows, which is expected due to the exponential increase in the number of pure strategies and the corresponding complexity of the game.
           In terms of convergence measured by TV-distance, \coebl converges to the Nash equilibrium for $n = 2, 3$, while the baseline algorithms do not converge.
           However, for $n \geq 4$, as the number of strategies grows exponentially, \coebl also struggles to converge to the Nash equilibrium.
           In Figures~\ref{fig:Regret_Diagonal2} and \ref{fig:Regret_Diagonal2_2}\footnote{Figure~\ref{fig:Regret_Diagonal2_2} is in the appendix.}, we present the regrets for $\alg~1$-vs-$\alg~2$ on \Diagonal.
           The empirical regrets across all algorithms exceed $16.2$, with a maximum of $389.8$ for $n=6$, indicating that the minimiser is dominant.
           In other words, \coebl outperforms the other bandit algorithms across all values of $n$, from $2$ to $7$.
        

          \begin{figure}[!ht]
           \centering
           \subfloat[
            $n=2$
           ]{%
              \includegraphics[width=0.45\linewidth]{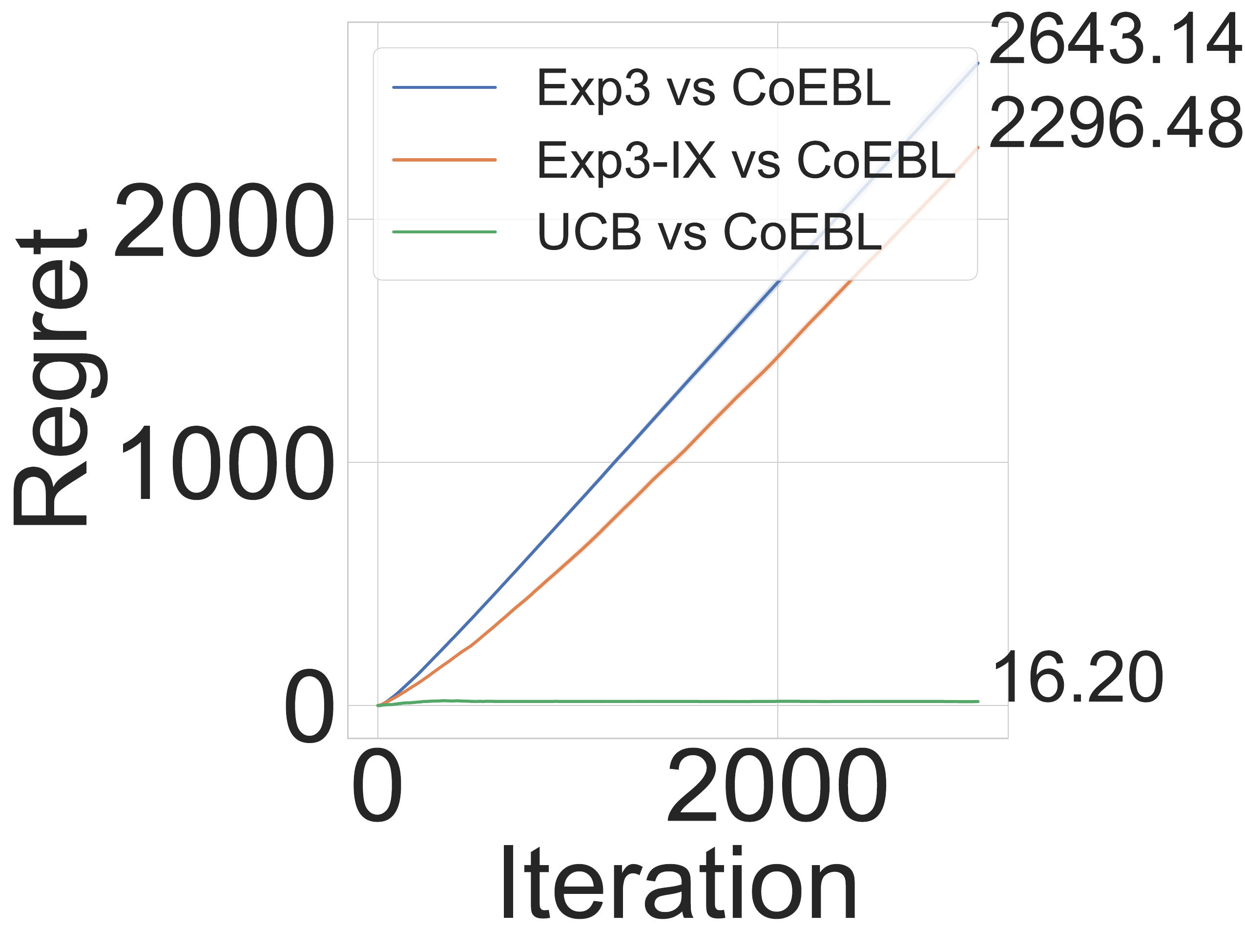}}
           \hfill
           \subfloat[
           $ n=3$
           ]{%
              \includegraphics[width=0.45\linewidth]{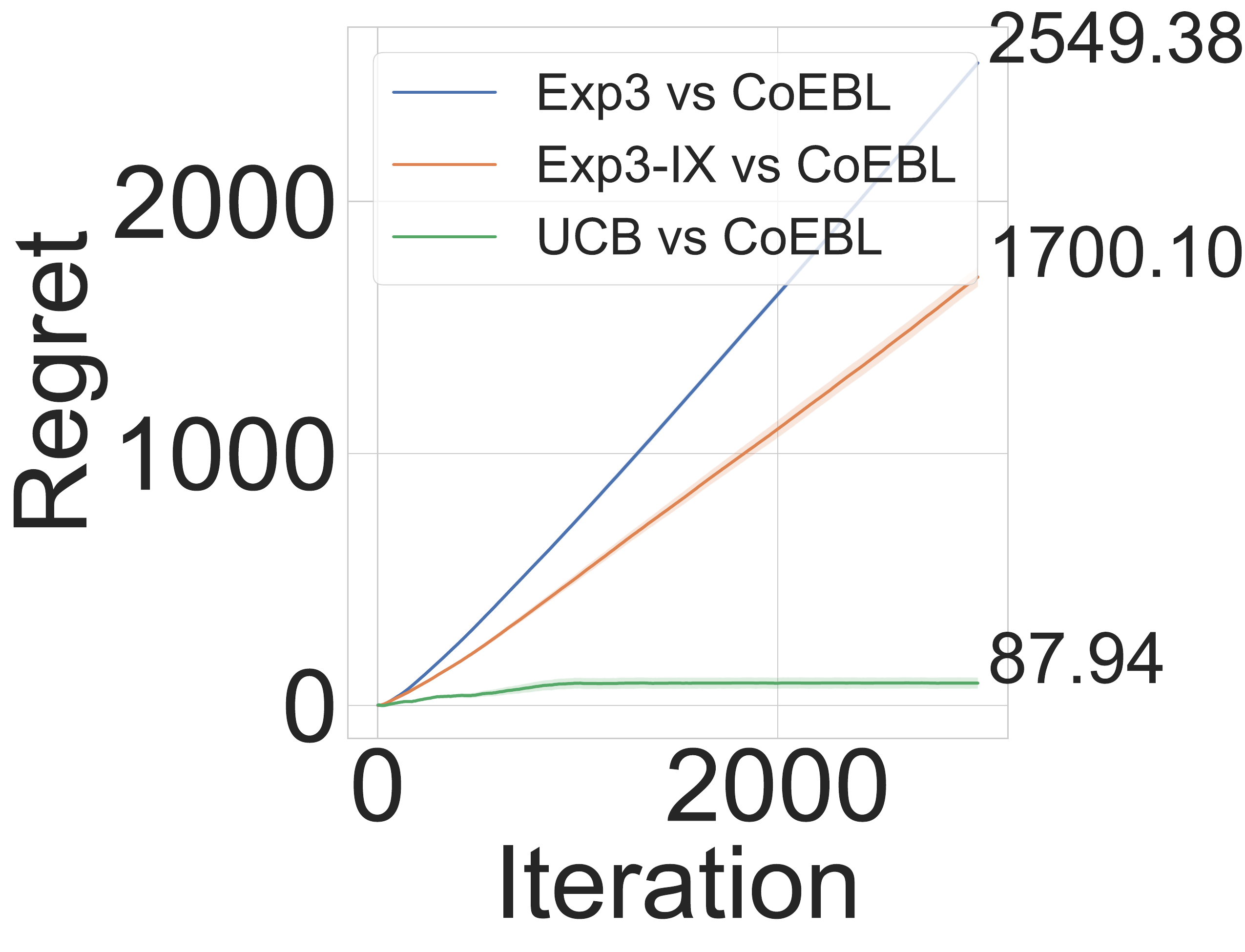}}
           \hfill
           \subfloat[
          $ n=4$
           ]{%
              \includegraphics[width=0.45\linewidth]{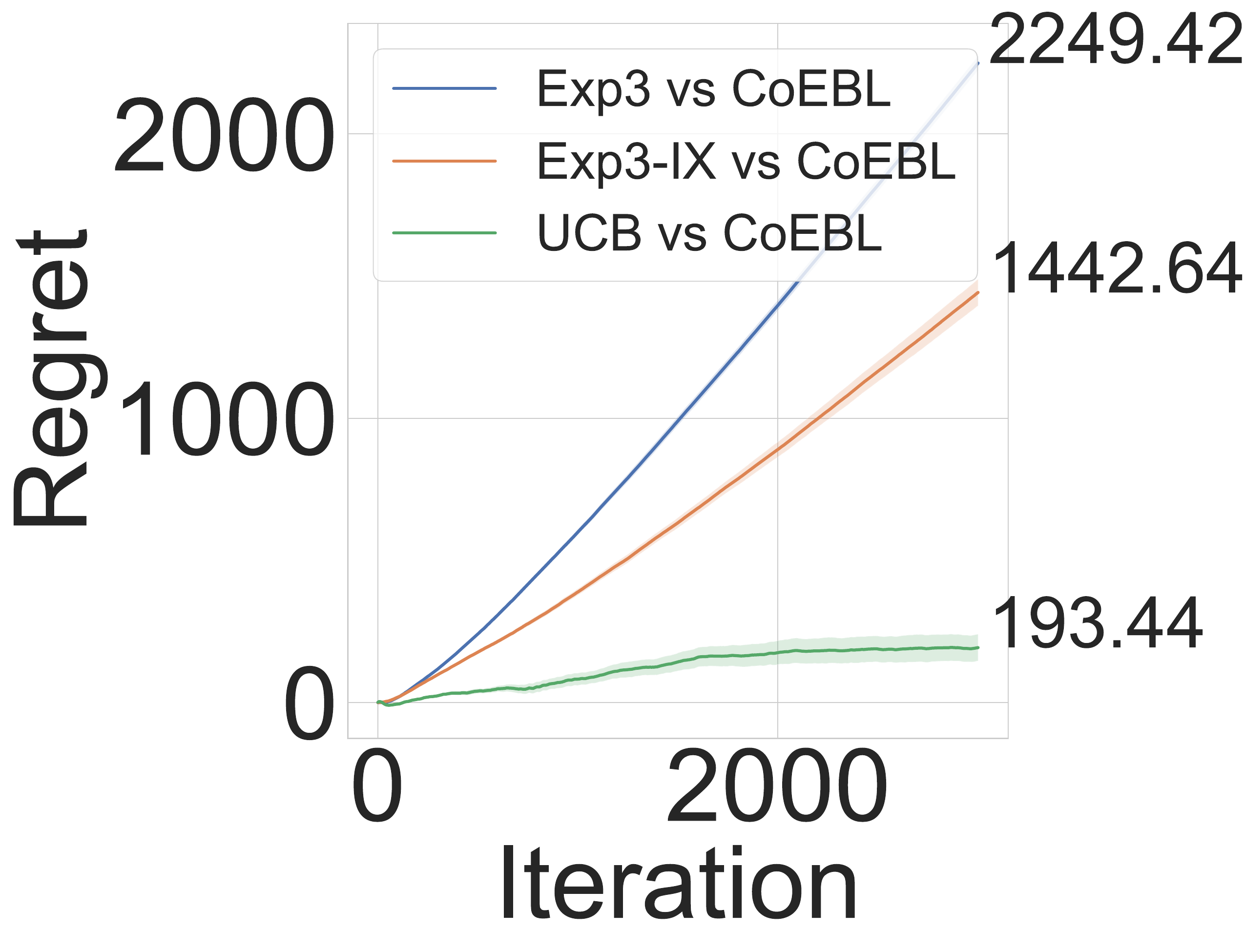}}
           \caption{Regret for $\alg~1$-vs-$\alg~2$ on \Diagonal.}
           \hfill
           \label{fig:Regret_Diagonal2} 
       \end{figure}
       
       \subsection{\BigNum Game}
       
       \BigNum is another challenging two-player zero-sum game proposed and analysed by \citep{ijcai2024p336}.
       In this game, each player aims to select a number that is larger than their opponent's. 
       The players' action space is $\X = \{0,1\}^n$, representing binary bitstrings of length $n$ corresponding to natural numbers in the range $[0, 2^n-1]$. 
       A formal definition and the complete results can be found in the appendix.
       We present part of the results here.
       \begin{figure}[H]
          \centering
          \subfloat[$n=2$
          ]{%
             \includegraphics[width=0.33\linewidth]{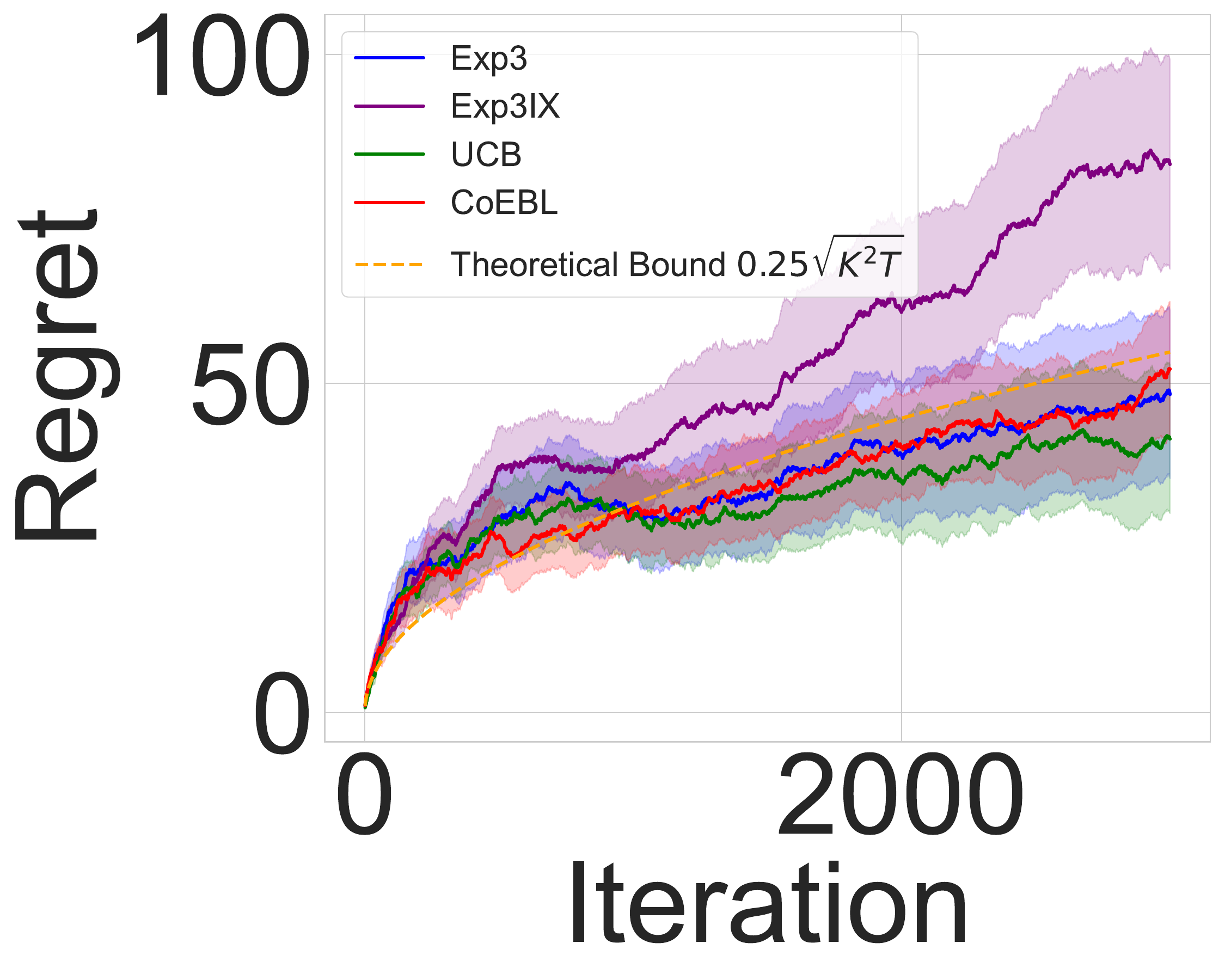}}
          \hfill
          \subfloat[$n=3$
          ]{%
             \includegraphics[width=0.33\linewidth]{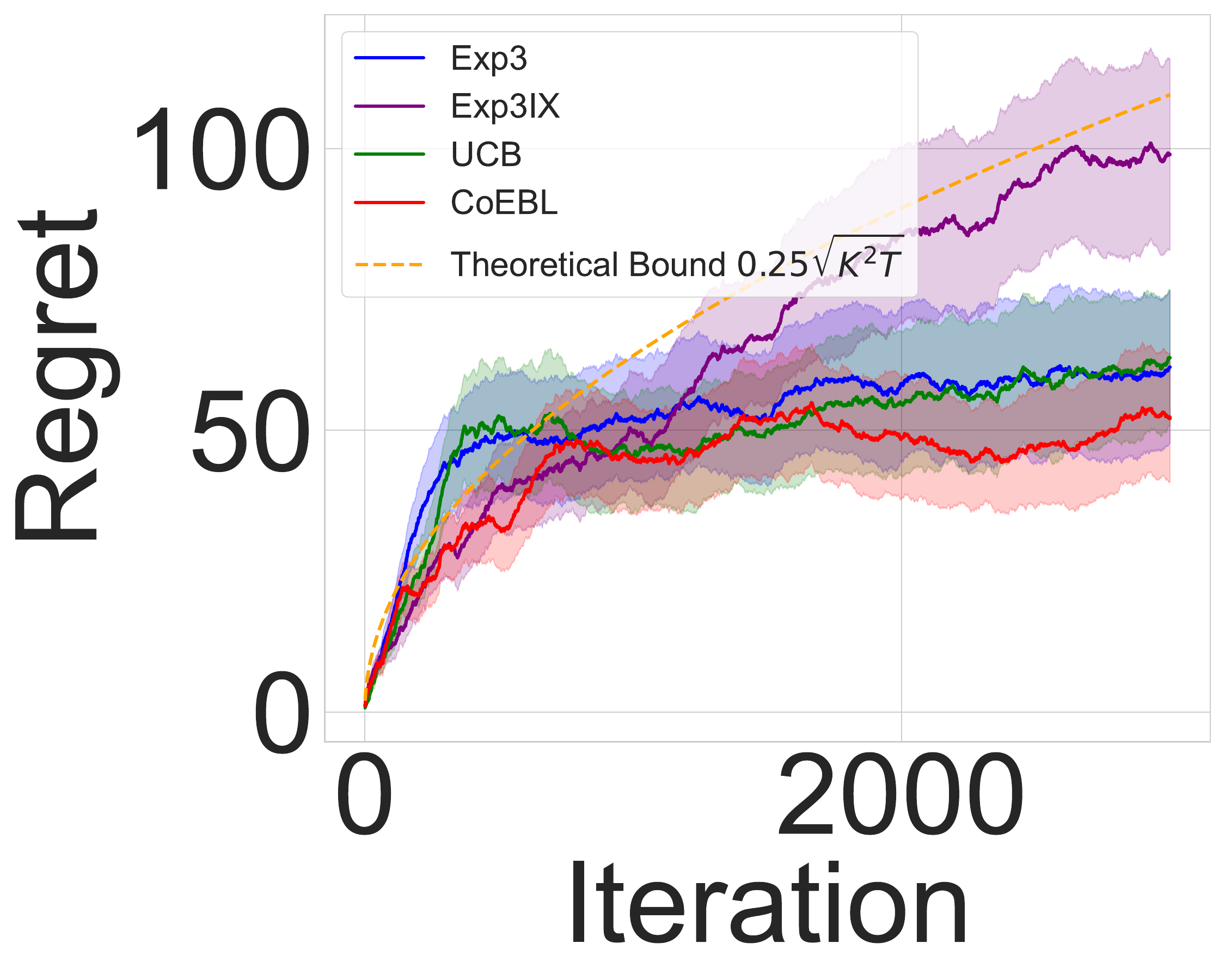}}
          \hfill
          \subfloat[$n=4$
          ]{%
             \includegraphics[width=0.33\linewidth]{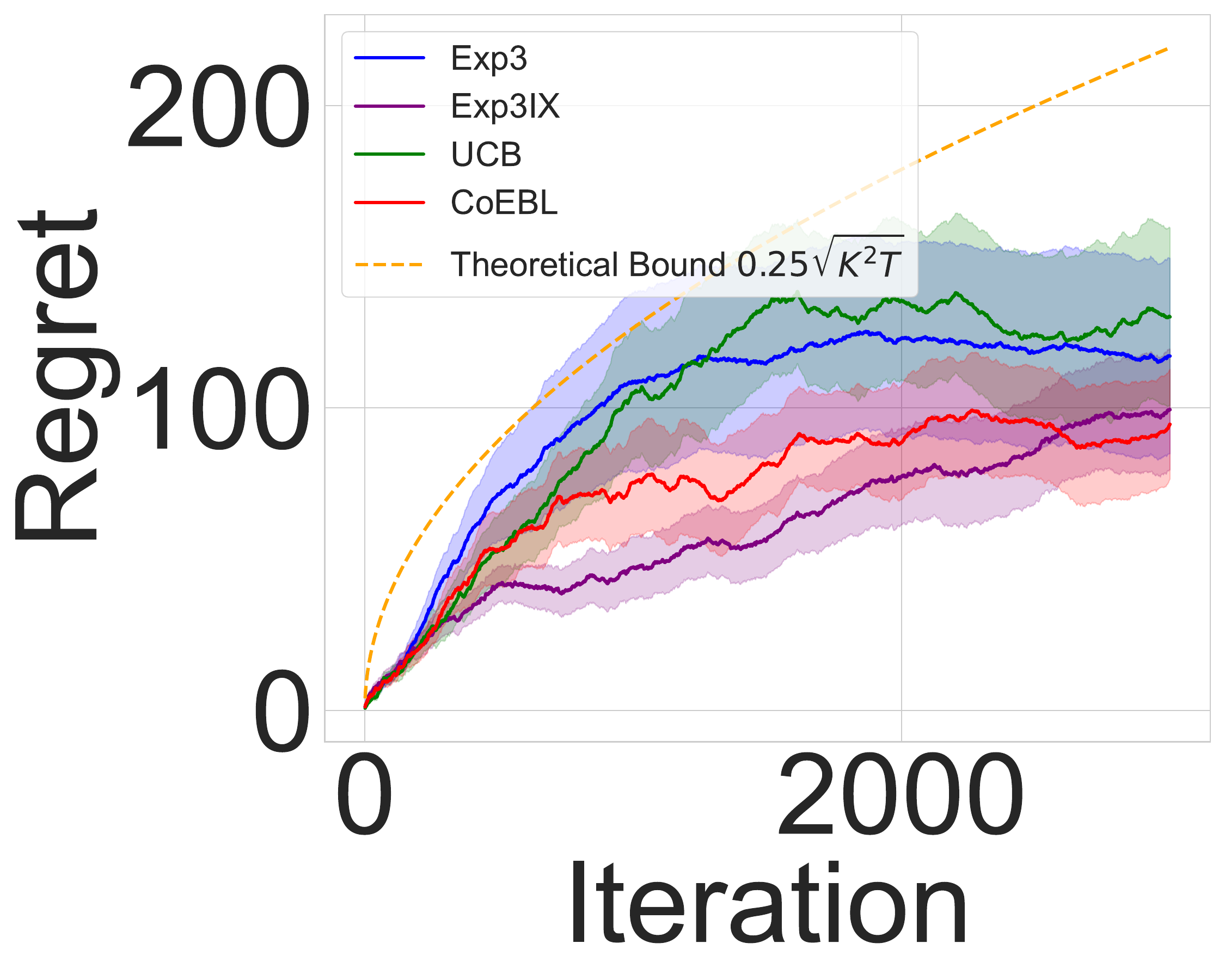}}
          \hfill
          \subfloat[$n=2$
          ]{%
             \includegraphics[width=0.33\linewidth]{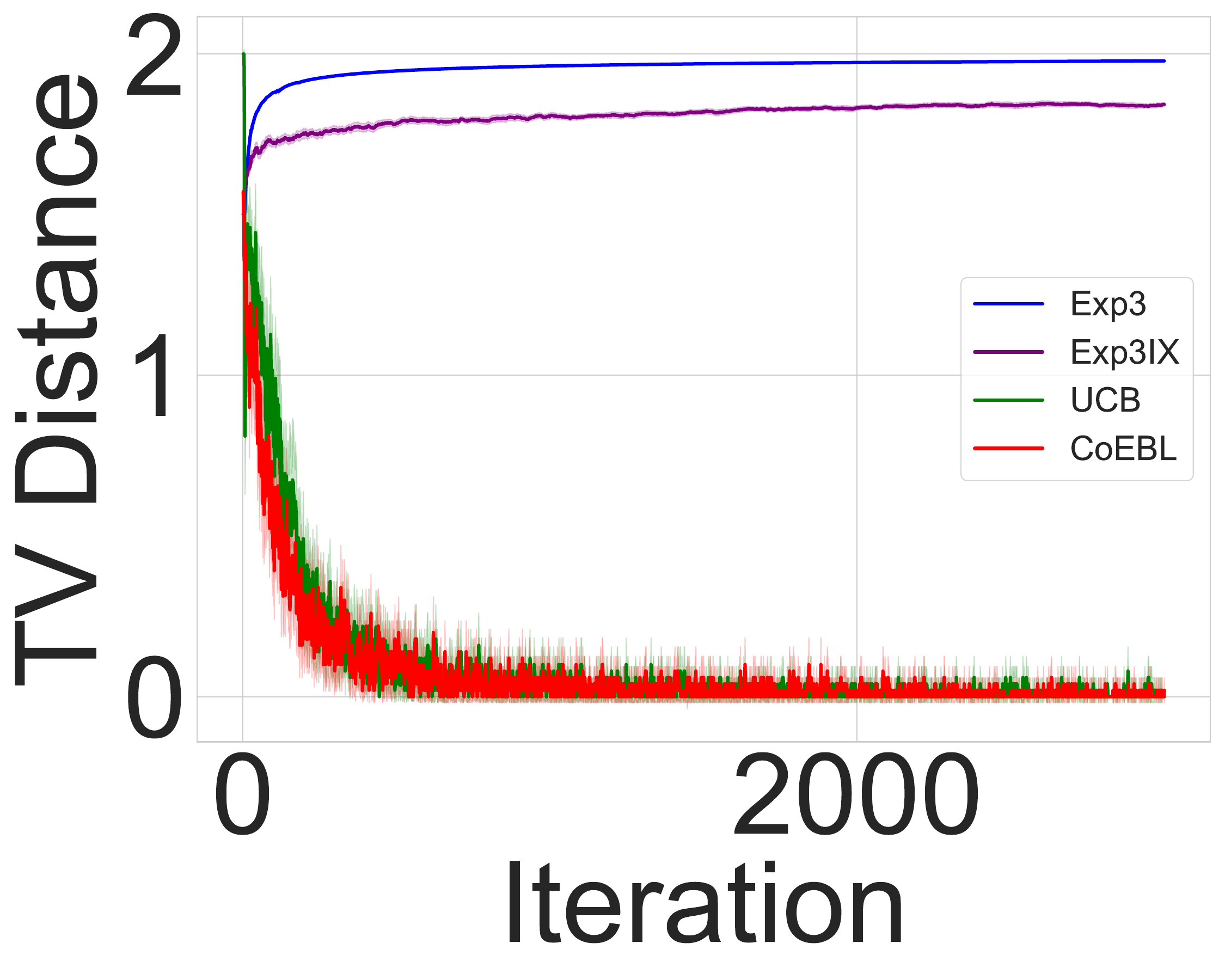}}
          \hfill
          \subfloat[$n=3$
          ]{%
             \includegraphics[width=0.33\linewidth]{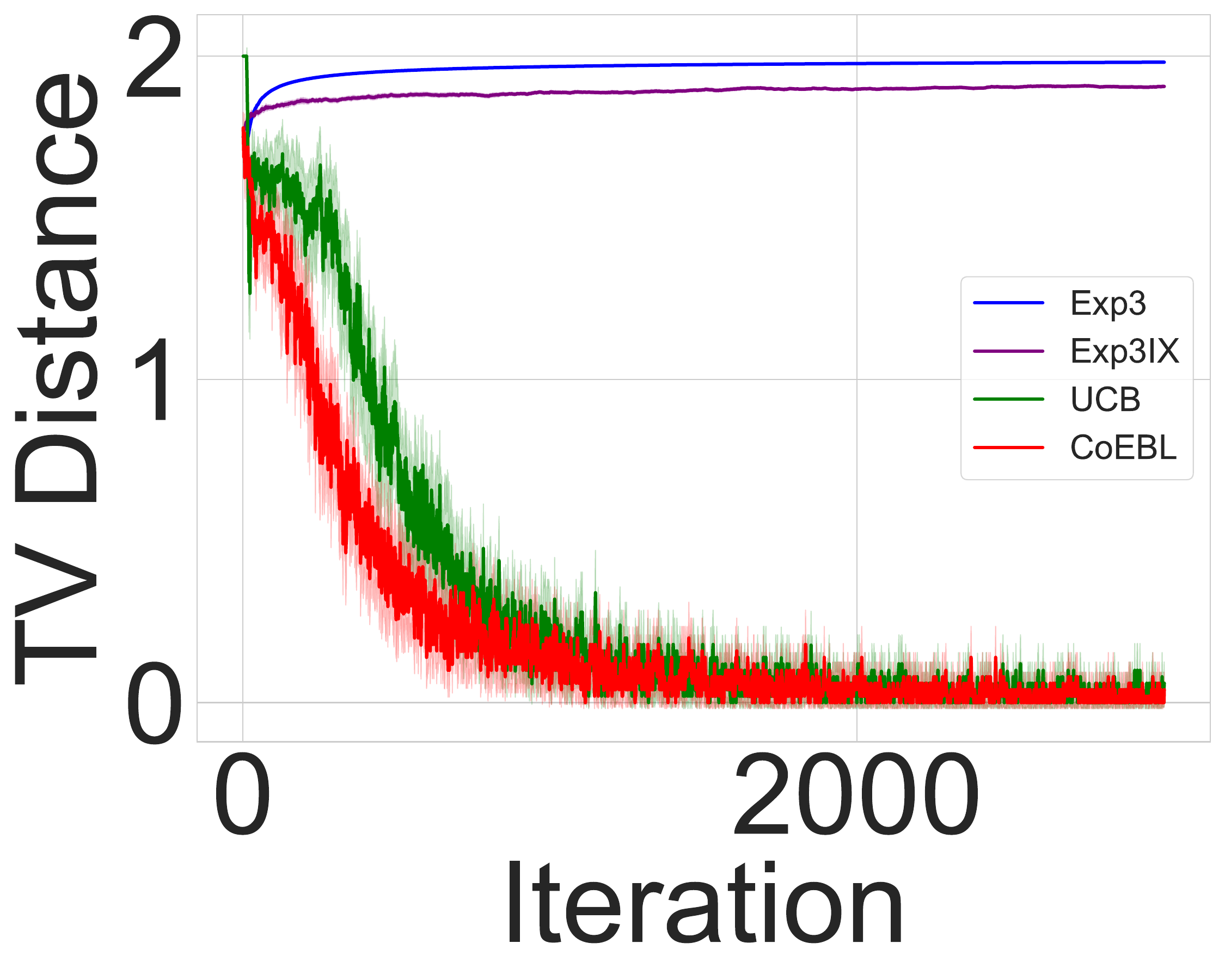}}
          \hfill
          \subfloat[$n=4$
          ]{%
             \includegraphics[width=0.33\linewidth]{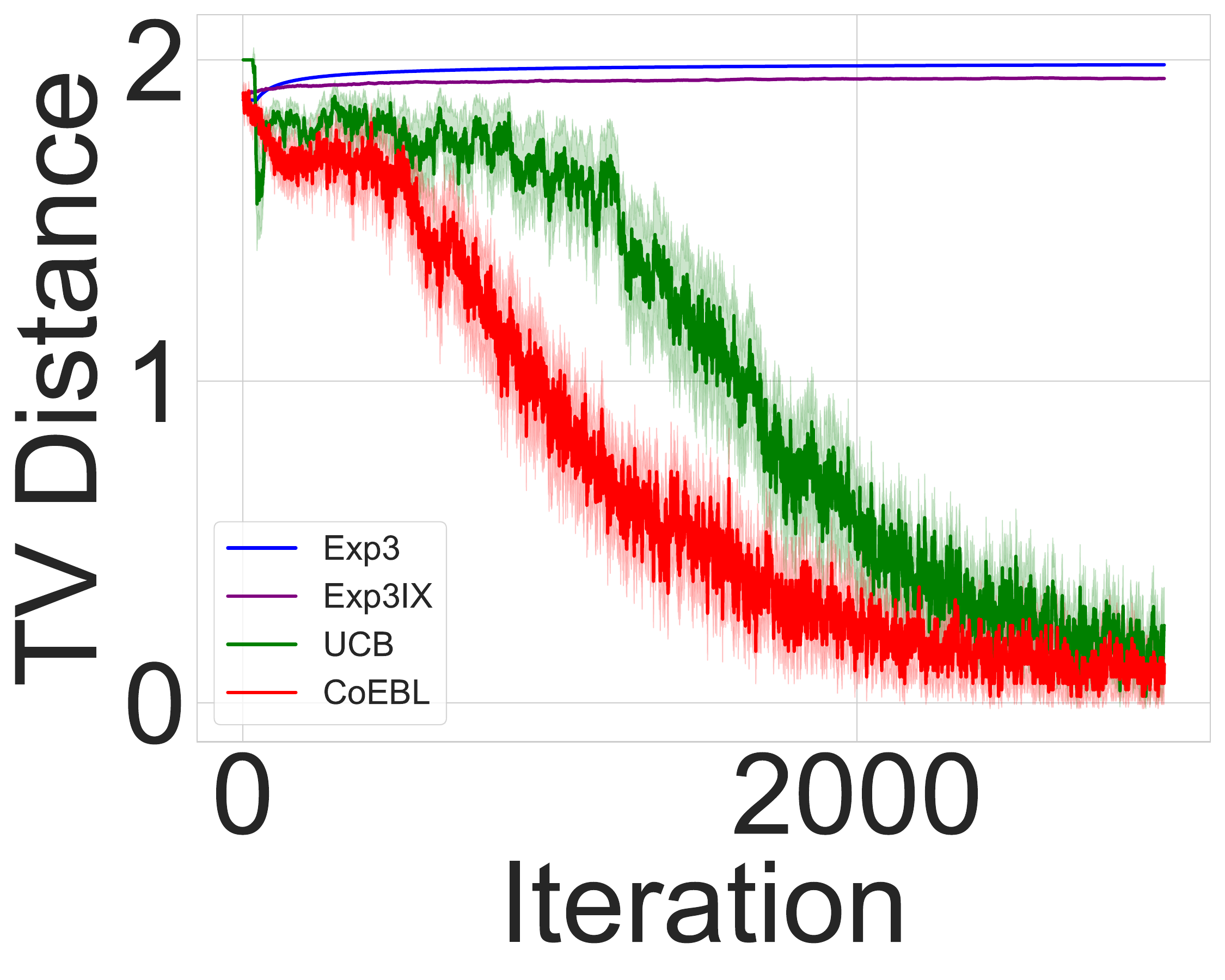}}
          \hfill
          \caption{Regret and TV Distance for Self-Plays on \BigNum}
          \label{fig:Comparison_Regret_TV_BigNum_0} 
       \end{figure}
       
       \begin{figure}[!ht]
          \centering
          \subfloat[
           $n=2$
          ]{%
             \includegraphics[width=0.45\linewidth]{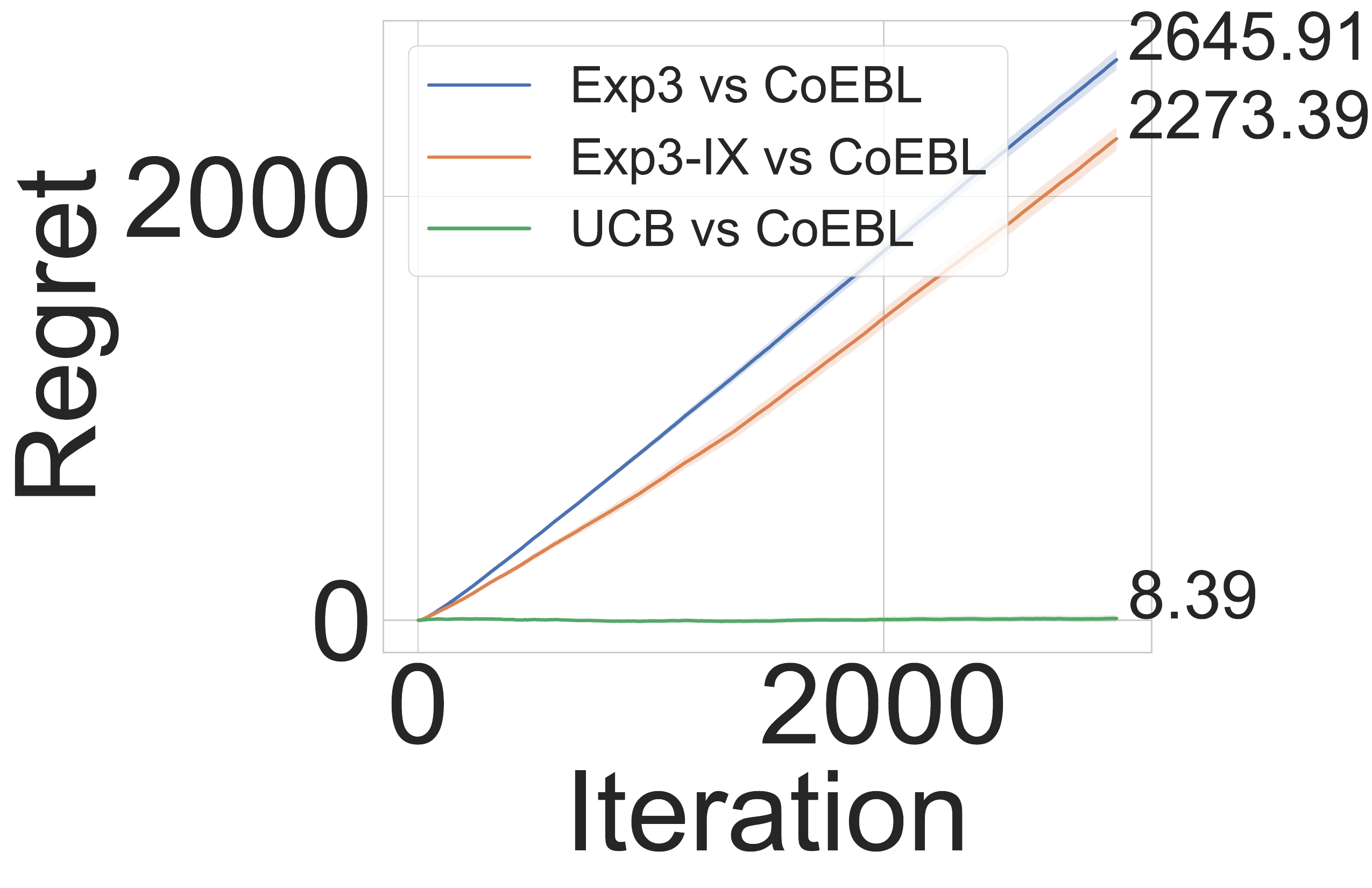}}
          \hfill
          \subfloat[
          $n=3$
          ]{%
             \includegraphics[width=0.45\linewidth]{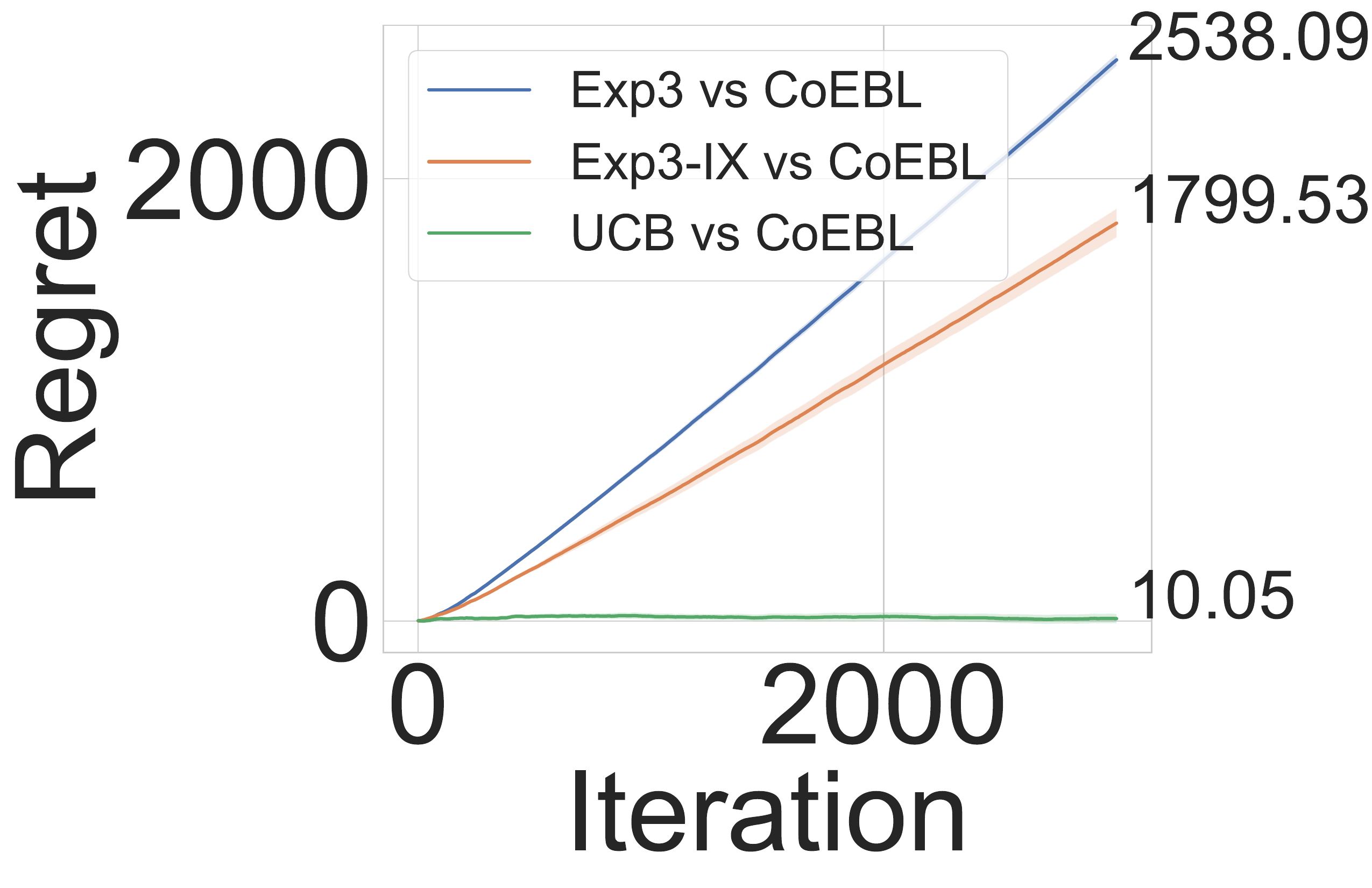}}
          \hfill
          \subfloat[
          $n=4$
          ]{%
             \includegraphics[width=0.45\linewidth]{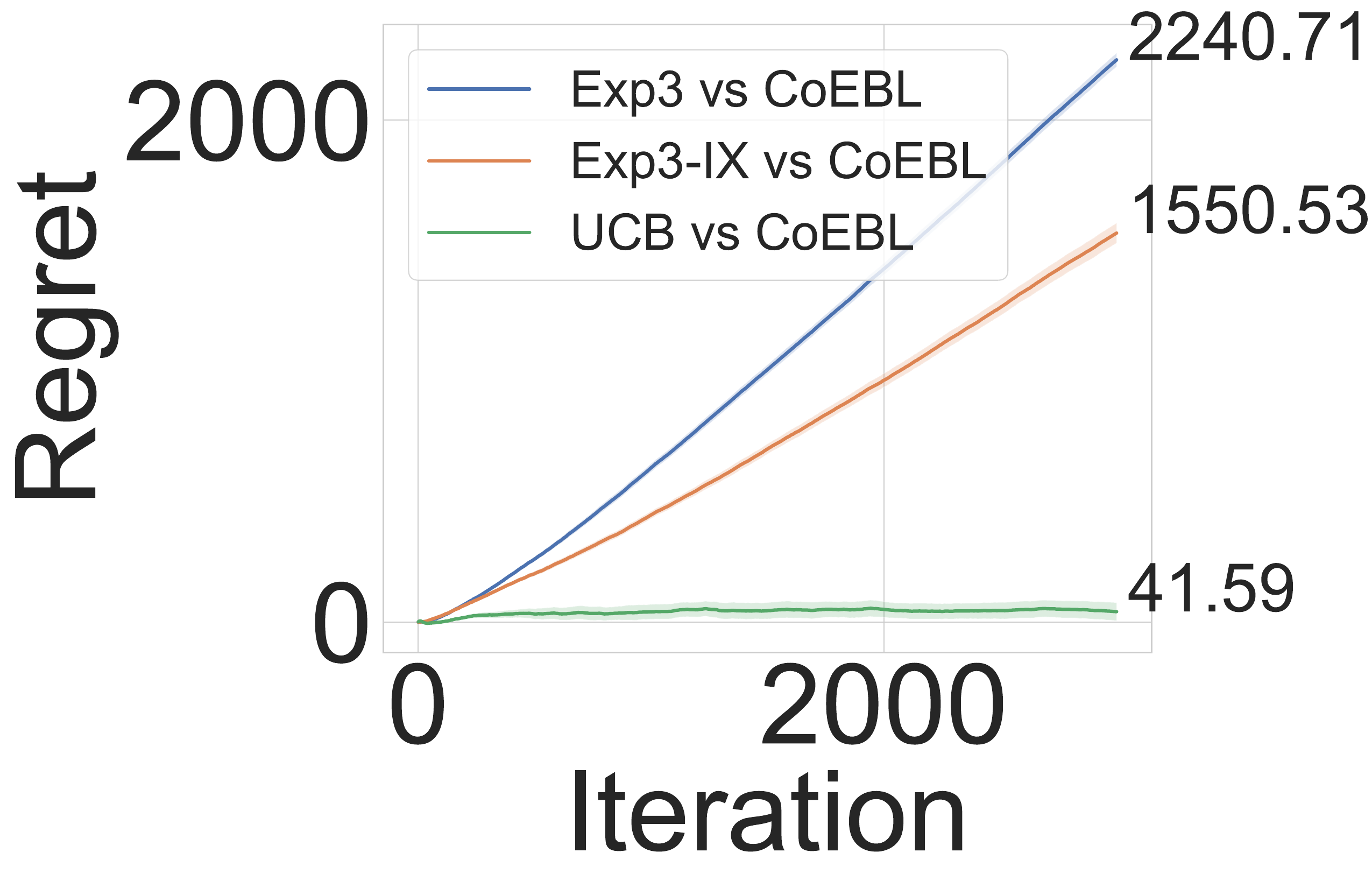}}
          \caption{Regret for $\alg~1$-vs-$\alg~2$ on \BigNum.}
          \hfill
          \label{fig:Regret_BigNum2_0} 
       \end{figure}
       
       
       In summary, we conducted extensive experiments on three matrix games: the RPS game, the \Diagonal game, and the \BigNum game.
       In terms of regret performance, \coebl in self-play aligns with our theoretical bounds.
       Moreover, \coebl consistently outperforms other bandit baselines when competing across various matrix game benchmarks, as shown in Figures~\ref{fig:Regret_Diagonal2} and \ref{fig:Regret_BigNum2_0}.
       \coebl matches the performance of \ucb and converges more quickly than \exptrni in the RPS game.
       \coebl converges to the Nash equilibrium for $n=2,3$ and for $n=2,3,4$, respectively, while the other baselines do not converge, as shown in Figures~\ref{fig:Comparison_Regret_TV_Diagonal} and \ref{fig:Comparison_Regret_TV_BigNum_0}.
       Therefore, we conclude that \coebl is a promising algorithm for matrix games, demonstrating sublinear regret, outperforming other bandit baselines, and achieving convergence to the Nash equilibrium in several matrix game instances.
       However, as the number of strategies grows exponentially, \coebl, like other algorithms, fails to converge to the Nash equilibrium. 
       This observation points out the current limitation of existing algorithms in exponentially large matrix games, and it will be an exciting path for future research.

       \section{Conclusion and Discussion}    
       This paper addresses the unsolved problem of learning in unknown two-player zero-sum matrix games with bandit feedback, proposing a novel algorithm, \coebl, which integrates evolutionary algorithms with bandit learning. 
       To the best of our knowledge, this is the first work that combines evolutionary algorithms and bandit learning for matrix games and provides regret analysis of evolutionary bandit learning (EBL) algorithms in this context. 
       This paper demonstrates that randomised or stochastic optimism, particularly through evolutionary algorithms, can also enjoy a sublinear regret in matrix games, offering a more robust and adaptive solution compared to traditional methods.
       
       Theoretically, we prove that \coebl exhibits sublinear regret in matrix games, extending the rigorous understanding of evolutionary approaches in bandit learning. 
       Practically, we show through extensive experiments on various matrix games, including the RPS, \Diagonal, and \BigNum games that \coebl outperforms existing bandit baselines, offering practitioners a new tool (randomised optimism via evolution) for handling matrix games playing with bandit feedback.
       
       Despite these promising results, our work has some limitations. 
       Theoretically, we only consider two-player zero-sum games, which is consistent with prior studies such as \citep{o2021matrix,cai2024uncoupled}. Extending \coebl to general-sum games with more players or to Markov games represents an exciting and challenging avenue for future research. 
       More technically, we conjecture whether Theorem~\ref{thm:main} could also hold for smaller value of $c<8$ with certain threshold.
       Additionally, our analysis assumes sub-Gaussian noise; investigating the algorithm's performance under different noise distributions, such as sub-exponential noise, could yield further insights. 
       From an experimental perspective, testing on more diverse problem instances would strengthen the empirical analysis.
       
       Future work could focus on both theoretical and practical extensions of evolutionary bandit learning. 
       From a theoretical perspective, it would be worthwhile to explore how \coebl or other evolutionary bandit learning algorithms can be adapted to more complex game structures, such as multi-player or general-sum games. 
       On the practical side, improving \coebl by incorporating more sophisticated mutation operators,  additional crossover operator, non-elitist selection mechanisms, or population-based evolutionary algorithms could enhance its performance in more complex settings. 
             
\section*{Ethical Statement}

There are no ethical issues.

\section*{Acknowledgements}
We would like to thank Professor Per Kristian Lehre, Dr Alistair Benford and Dr Mario Alejandro Hevia Fajardo for the fruitful discussion and comments on an earlier draft of this paper and also thank the anonymous reviewers for their helpful reviews.
Shishen is supported by International PhD Studentship at School of Computer Science, the University of Birmingham.
The computations were performed using the University of Birmingham’s BlueBEAR high performance computing (HPC) service.

\bibliographystyle{named}


\newpage
\appendix
\onecolumn

\section{Appendix}

\subsection{Summary of Regret Bounds}
We provide a table to summarise the relevant regret bounds for different algorithms in matrix games with bandit feedback.

\begin{table*}[!ht]
   \centering
   \resizebox{\textwidth}{!} {
   \begin{tabular}{l@{\hspace{0.3cm}}c@{\hspace{0.3cm}}c@{\hspace{0.3cm}}c@{\hspace{0.3cm}}c@{\hspace{0.3cm}}c@{\hspace{0.3cm}}c@{\hspace{0.3cm}}}
   \toprule[1.5pt]
   \multirow{2}{*}{\textbf{Algorithms}} & OTS-\hedge / OTS-RM   & GP-MW & \exptr & \exptrni & \ucb  / K-Learning & \coebl \\
   & \citep{li2024optimistic} & \citep{sessa2019no} & \citep{auer2002nonstochastic} & \citep{neu2015explore,cai2024uncoupled} & \citep{o2021matrix} & \textbf{Theorem~\ref{thm:main}} \\
   \midrule[0.5pt]
   \multirow{2}{*}{Feedback} & \multirow{2}{*}{\shortstack{obtained reward \\ + opponents' actions}} & \multirow{2}{*}{\shortstack{obtained reward \\ + opponents' actions}} & \multirow{2}{*}{obtained reward} & \multirow{2}{*}{obtained reward} & \multirow{2}{*}{obtained reward} & \multirow{2}{*}{obtained reward} \\
   & & & & & & \\
   \midrule[0.5pt]
   \multirow{4}{*}{Regret} &
   \multirow{4}{*}{\shortstack{\Or{\sqrt{T\log K} + \sqrt{\gamma_T \beta T}} /\\ \Or{\sqrt{T K} + \sqrt{\gamma_T \beta T}}}} 
    &  \multirow{4}{*}{\Or{\sqrt{T\log K}} + $\gamma_T \sqrt{T}$}
    &  \multirow{4}{*}{\Or{\sqrt{TK\log K}}}
    &  \multirow{4}{*}{\Or{\sqrt{TK\log K}} }
    &  \multirow{4}{*}{\AsyO{\sqrt{K^2T}} }
    &  \multirow{4}{*}{\AsyO{\sqrt{K^2T}}} \\
    & & & & & & \\
    & & & & & & \\
    & & & & & & \\
   \bottomrule[1.5pt]
   \end{tabular}
   }
    \caption{Regret bounds for different algorithms in matrix games. $K$ denotes the number of actions for each player, $T$ denotes the time horizon, and $\gamma_T$ in the bounds for the OTS-\hedge / OTS-RM   denotes the maximum information gain and in the bound for for the GP-MW algorithm denotes a kernel-dependent quantity. 
    $\beta$ in the bounds for the OTS-\hedge / OTS-RM is $\log (KT)$.
    In this table, we assume both players have the same number of strategies. This can be generalised to the case where both players have different numbers of strategies. For the regret bound of \coebl, we consider the worst-case scenario (i.e., the opponent uses the best-response strategy) and the Nash regret (Def.~(\ref{def:regret})), the same as in \citep{o2021matrix}.
   }
   \label{table:regret_table}
\end{table*}

\subsection{Pseudo-code of Algorithms}

   \par As follows, we summarise a general framework of algorithms for matrix games with bandit feedback considered in this paper.
   We only present the algorithm for the row player, and the algorithm for the column player is symmetric.
   \begin{algorithm}[!ht]
   \caption{General framework for matrix games with bandit feedback \citep{o2021matrix}}
   \label{alg:BBA}
   \begin{algorithmic}[1]
       \REQUIRE  Policy space of player: $\mathcal{X}  \subseteq \Delta_m$; 
       \REQUIRE Initial probability distribution $P_1 \in \mathcal{X}$;
       \FOR{$t = 1$ to $T$}
           \STATE The row player chooses action $i_t$ from $P_t$ 
           \STATE The column player chooses action $j_t$ from $Q_t$
           \STATE Observe reward $r_t$ based on $i_t,j_t$
           \STATE Update probability distribution $P_t$ based on $\F_{t+1}$, 
           where $\F_{t+1}:=(i_1,j_1,r_1 \cdots, i_t, j_t,r_t)$
       \ENDFOR
   \end{algorithmic}
   \end{algorithm}

\begin{algorithm}[!ht]
   \caption{\exptr for matrix games \citep{Auer1995bandit,o2021matrix}}
   \label{alg:Exp3}
   \begin{algorithmic}[1]
   \STATE \textbf{Input:} Number of actions $K$, number of iterations $T$, learning rate $\eta$ and exploration parameter $\gamma$.
   \STATE \textbf{Initialise:} 
   \STATE $\quad \hat{S_{0, i}} \gets 0 \text{ for all } i \in [K]$
   \FOR{$t = 1,2 , \cdots, T$}
       \STATE Calculate the sampling distribution $P_t$: \text{ for all } $i$
       \STATE $\quad P_{ti} \gets (1-\gamma) {\exp(\eta \hat{S}_{t-1,i}})/{\sum_{j=1}^{K} \exp(\eta \hat{S}_{t-1,j})} + \gamma / K$
       \STATE Sample $A_t \sim P_t$ and observe reward $X_t \in [0,1]$
       \STATE Update $\hat{S}_{ti}$: \text{ for all } $i$
       \STATE $\quad \hat{S}_{ti} \gets \hat{S}_{t-1,i} + 1 - {1\{A_t = i\}(1 - X_t)}/{P_{ti}}$
   \ENDFOR
   
   \end{algorithmic}
   \end{algorithm}

   \begin{algorithm}[!ht]
   \caption{\ucb for matrix games \citep{o2021matrix}}
   \label{alg:UCB}
   \begin{algorithmic}[1]
   \FOR{round $t = 1, 2, \ldots, T$}
       \FOR{all $i,j \in [m]$}
       \STATE compute $\tilde{A}^t_{ij} = \Bar{A^t_{ij}} + \sqrt{\frac{2 \log (2T^2 m^2)}{1 \lor n^t_{ij}}}$
       \ENDFOR
       \STATE use policy $x \in \arg\max_{x \in \Delta_m} \min_{y \in \Delta_m} y^T \tilde{A}^t x$
   \ENDFOR
   \end{algorithmic}
   \end{algorithm}

   \begin{algorithm}[H]
   \caption{\exptrni variant for matrix games \citep{cai2024uncoupled}}
   \label{alg:MGBF}
   \begin{algorithmic}[1]
   \REQUIRE \text{Define} $\eta_t = t^{-k_\eta}$, $\beta_t = t^{-k_\beta}$, $\epsilon_t = t^{-k_\epsilon}$ where $k_\eta = \frac{5}{8}$, $k_\beta = \frac{3}{8}$, $k_\epsilon = \frac{1}{8}$. $\A$ is the set of actions.
   \REQUIRE $\Omega_t = \{ x \in \Delta_m : x_a \geq \frac{1}{mt^2}, \, \forall a \in \mathcal{A} \}$.
   \STATE \textbf{Initialisation:} $x_1 = \frac{1}{m} {(1,\cdots ,1)}$.
   \FOR{$t = 1, 2, \ldots$}
       \STATE Sample $a_t \sim x_t$, and receive $\sigma_t \in [0, 1]$ with $\sigma_t = G_{a_t, b_t}$ where $b_t$ is the action by the opponent.
       \STATE Compute $g_t$ where $g_{t,a} = {1[a_t = a] \sigma_t}/{(x_{t,a} + \beta_t)} + \epsilon_t \ln x_{t,a}, \, \forall a \in \mathcal{A}$.
       \STATE Update $x_{t+1} = \arg\min_{x \in \Omega_t} \left\{ x^\top g_t + \frac{1}{\eta_t} \text{KL}(x, x_t) \right\}$.
   \ENDFOR
   \end{algorithmic}
   \end{algorithm}

\subsection{Algorithm Implementation}
   Previous works, including \citep{o2021matrix,cai2024uncoupled}, have not released the source code for their algorithms.
   Therefore, we provide our own implementation for \coebl, and other bandit baselines used in this paper.
   The source code is available at the link \url{https://github.com/Lss1242/CoEvo_Bandit_Learning}.
   

\subsection{Technical Lemmas}
\begin{restatable}{lemma}{SecThreeLemOne}
   \label{lem:identityOne}
   Given $x,y \in \Delta_m$, for all $i,j \in [m]$, $A_{ij} \in \mathbb{R}$,
   then $y^TAx=  \sum_{i,j \in [m]} y_j x_i A_{ij}$.
\end{restatable}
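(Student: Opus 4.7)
The plan is to reduce the claim directly to the definition of matrix-vector multiplication and the scalar inner product. Since $x, y \in \Delta_m \subseteq \mathbb{R}^m$ and $A \in \mathbb{R}^{m \times m}$, the quantity $y^T A x$ is a real scalar, and the only content of the lemma is the coordinate expansion of this scalar into a double sum over the index set $[m] \times [m]$.

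Concretely, I would first introduce the intermediate vector $w := A x \in \mathbb{R}^m$, whose $j$-th coordinate is $w_j = \sum_{i=1}^{m} A_{ji}\, x_i$ by the definition of matrix-vector multiplication. Next, I would form the inner product with $y$ by writing $y^T w = \sum_{j=1}^{m} y_j\, w_j$, and substitute the expression for $w_j$ to collapse the two layers into a single double sum $\sum_{j=1}^{m} \sum_{i=1}^{m} y_j\, x_i\, A_{ji}$. Finally, using the paper's convention that $A_{ij}$ denotes the payoff when the row player selects action $i$ and the column player selects action $j$, so that the bilinear form $y^T A x$ is interpreted as a sum indexed by (row, column) pairs, this double sum is exactly $\sum_{i,j \in [m]} y_j\, x_i\, A_{ij}$, which is the right-hand side of the claim.

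There is no real obstacle in this proof: it is a routine unrolling of the bilinear form, and no probabilistic, combinatorial, or limiting argument is required. Minor care is only needed in bookkeeping the index convention, so that the expansion matches the game-theoretic interpretation in which $x$ is the row player's mixed strategy and $y$ is the column player's mixed strategy. The lemma is stated separately because this identity will be invoked repeatedly when manipulating expected rewards and the fitness function $\mathrm{Fitness}(x, B) = \min_{y \in \Delta_m} y^T B x$ used throughout the analysis of \coebl, and having a single named reference keeps those subsequent calculations concise.
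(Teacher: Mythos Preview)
Your proposal is correct and follows essentially the same approach as the paper: a direct coordinate-wise expansion of the bilinear form $y^T A x$ by first computing $Ax$ and then taking the inner product with $y$. The paper's proof is the same two-step unrolling, written out explicitly with matrix notation, and your remark about matching the row/column indexing convention to the game-theoretic roles of $x$ and $y$ is exactly the bookkeeping care needed to land on $\sum_{i,j\in[m]} y_j x_i A_{ij}$.
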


\begin{proof}[Proof of Lemma~\ref{lem:identityOne}]
   We compute $y^TAx$ as follows.
   \begin{align*}
   y^T A x &= \begin{pmatrix} y_1 & \ldots & y_m \end{pmatrix} 
   \begin{pmatrix} 
   A_{11} & \ldots & A_{1m} \\ 
   \vdots & \ddots & \vdots \\ 
   A_{m1} & \ldots & A_{mm} 
   \end{pmatrix} 
   \begin{pmatrix} 
   x_1 \\ 
   \vdots \\ 
   x_m 
   \end{pmatrix} 
    \\  \intertext{Note that simple algebra gives}
   &= 
    \begin{pmatrix} y_1 & \ldots & y_m \end{pmatrix}
    \begin{pmatrix} 
    A_{11}x_1  + A_{1m}x_m \\ 
   \vdots \\ 
    A_{m1}x_1  + A_{mm}x_m
   \end{pmatrix} \\
   &=  \sum_{j=1}^{m} y_j \left( \sum_{i=1}^{m} x_i A_{ij} \right) \\
   & = \sum_{i,j \in [m]} y_j x_i A_{ij}.
   \end{align*}
   
   \end{proof}

\begin{restatable}{lemma}{SecThreeLemTwo}
\label{lem:identityTwo}
The following inequalities hold for any $n \in \N$:
\begin{enumerate}
   \item[(1)]
       \begin{align*}
       1+ \frac{1}{\sqrt{2}} + \cdots + \frac{1}{\sqrt{n}} \leq 2 \sqrt{n}.
   \end{align*}

   \item [(2)] Given $x_i \geq 0$ for all $i \in [n]$,
       \begin{align*}
       \frac{1}{n}\sum_{i=1}^n x_i  \leq \sqrt{\frac{\sum_{i}^n x_i^2}{n}}
   \end{align*}

   \item[(3)] Hoeffding's inequality for $\sigma^2$-sub-Gaussian random variables with zero-mean \citep{vershynin2018high}:
   let \(X_1, \ldots, X_n\) be \(n\) independent random variables such that \(X_i \) is  $\sigma^2$-sub-Gaussian.
   Then for any \(\mathbf{a} \in \mathbb{R}^n\), we have
   \begin{align*}
   \Pr\left( \sum_{i=1}^n a_i X_i > t \right) \leq \exp\left(-\frac{t^2}{2\sigma^2 \|\mathbf{a}\|_2^2}\right), 
   \Pr\left(\sum_{i=1}^n a_i X_i < -t \right) \leq \exp\left(-\frac{t^2}{2\sigma^2 \|\mathbf{a}\|_2^2}\right).
   \end{align*}
   
   Of special interest is the case where \(a_i = 1/n\) for all \(i\). Then, we get that the average \(\bar{X} = \frac{1}{n} \sum_{i=1}^n X_i\) satisfies
   \begin{align*}
   \Pr(\bar{X} > t) \leq \exp\left(-\frac{nt^2}{2\sigma^2}\right), 
   \Pr(\bar{X} < -t) \leq \exp\left(-\frac{nt^2}{2\sigma^2}\right).
   \end{align*}
\end{enumerate}

\end{restatable}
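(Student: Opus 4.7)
The plan is to handle each of the three inequalities by a separate, essentially textbook argument; I will sketch the route for each and flag the one step that needs a little care.

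For part (1), the cleanest approach is integral comparison. Since $x \mapsto 1/\sqrt{x}$ is decreasing on $(0,\infty)$, for each integer $k \geq 2$ I have $1/\sqrt{k} \leq \int_{k-1}^{k} x^{-1/2}\,dx$. Summing from $k=2$ to $n$ and handling the $k=1$ term separately gives $\sum_{k=1}^n 1/\sqrt{k} \leq 1 + \int_1^n x^{-1/2}\,dx = 1 + (2\sqrt{n}-2) = 2\sqrt{n}-1 \leq 2\sqrt{n}$. A short induction on $n$ would also work: the step reduces to verifying $1/\sqrt{n+1} \leq 2(\sqrt{n+1}-\sqrt{n}) = 2/(\sqrt{n+1}+\sqrt{n})$, which is immediate because $\sqrt{n+1}+\sqrt{n} \leq 2\sqrt{n+1}$.

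Part (2) is the standard quadratic-mean vs arithmetic-mean inequality and follows directly from Cauchy--Schwarz. Writing $\sum_{i=1}^n x_i = \sum_{i=1}^n 1 \cdot x_i$ and applying Cauchy--Schwarz with vectors $(1,\ldots,1)$ and $(x_1,\ldots,x_n)$ yields $\bigl(\sum_{i=1}^n x_i\bigr)^2 \leq n \sum_{i=1}^n x_i^2$. Dividing by $n^2$ and taking square roots (valid since the $x_i$ are non-negative) gives the claim.

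Part (3) is Hoeffding's inequality for sub-Gaussian sums and is the only step with any real content. The plan is a standard Chernoff argument: for any $s>0$, apply Markov to $\exp(s \sum_i a_i X_i)$, use independence to factorise the MGF, and use the sub-Gaussian MGF bound $\E{\exp(s a_i X_i)} \leq \exp(\sigma^2 s^2 a_i^2 / 2)$ to obtain $\Pr\bigl(\sum_i a_i X_i > t\bigr) \leq \exp\bigl(-st + \sigma^2 s^2 \|a\|_2^2/2\bigr)$. The only calculation is to optimise over $s$; setting $s = t/(\sigma^2 \|a\|_2^2)$ produces the claimed bound. The lower-tail inequality follows by applying the same argument to $-X_i$, which is again $\sigma^2$-sub-Gaussian because the defining MGF bound is required for all $s \in \mathbb{R}$; the averaging special case drops out by taking $a_i = 1/n$, so that $\|a\|_2^2 = 1/n$. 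The only mild obstacle is making sure the two-sided nature of the sub-Gaussian definition is invoked cleanly when symmetrising for the lower tail; once that is in place, the whole lemma is a few lines per part.
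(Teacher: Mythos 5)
Your proposal is correct in all three parts, but it takes genuinely different routes from the paper for two of them. For part (1) the paper argues by induction, with the key algebraic step $2\sqrt{k}-2\sqrt{k+1} = -2/(\sqrt{k}+\sqrt{k+1})$; your primary route is the integral comparison $\sum_{k=2}^{n} k^{-1/2} \leq \int_1^n x^{-1/2}\,dx = 2\sqrt{n}-2$, which is shorter and even yields the slightly stronger bound $2\sqrt{n}-1$, while your fallback induction step is essentially identical to the paper's. For part (2) the paper again proceeds by induction, expanding $(x_1+\cdots+x_{k+1})^2$ and invoking $2ab \leq a^2+b^2$ to absorb the cross term; your one-line Cauchy--Schwarz argument $\left(\sum_{i=1}^n x_i\right)^2 \leq n\sum_{i=1}^n x_i^2$ is the standard AM--QM derivation and is considerably more economical, at the cost of assuming Cauchy--Schwarz rather than only elementary algebra. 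For part (3) the paper simply cites Vershynin and gives no proof, whereas you sketch the full Chernoff/MGF argument, including the correct optimisation $s = t/(\sigma^2\|\mathbf{a}\|_2^2)$ and the observation that the lower tail follows by applying the bound to $-X_i$, which is again $\sigma^2$-sub-Gaussian since the MGF condition is required for all $s \in \mathbb{R}$; this is more than the paper does and is sound. In short, every step you propose works, and where you diverge from the paper you buy brevity (parts (1) and (2)) or self-containedness (part (3)).
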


\begin{proof}[Proof of Lemma~\ref{lem:identityTwo}]
   Proof of (3) can be found in \citep{vershynin2018high}.
   So, we only provide the proofs of other two inequalities here.
    \begin{enumerate}
        \item[(1)] We proceed by induction.
         For $n=1$, the inequality is trivial, i.e. $1\leq 2\sqrt{1}$.
         Now, assume the inequality holds for $n=k \geq 2$.
         For the case $n=k+1$, applying the induction hypothesis step gives,
         \begin{align*}
            1 + \frac{1}{\sqrt{2}} + \cdots + \frac{1}{\sqrt{k}} + \frac{1}{\sqrt{k+1}}
            &= 2\sqrt{k} + \frac{1}{\sqrt{k+1}} \\ \intertext{Rearranging the terms gives}
            &\leq 2\sqrt{k+1} +2\sqrt{k} -2\sqrt{k+1} + \frac{1}{\sqrt{k+1}} \\ \intertext{Notice that $2\sqrt{k} -2\sqrt{k+1} = \frac{-2}{\sqrt{k}+\sqrt{k+1}}$. Thus, we have}
            &= 2 \sqrt{k+1 } + \frac{\sqrt{k}+ \sqrt{k+1}-2\sqrt{k+1}}{\sqrt{k+1} \left(\sqrt{k+1}+\sqrt{k} \right)} \\ \intertext{Note that $\sqrt{k}+ \sqrt{k+1}-2\sqrt{k+1} = \sqrt{k}- \sqrt{k+1}<0$ gives}
            &< 2\sqrt{k+1}.
         \end{align*}
         Thus, we complete the induction step and can complete the proof i.e. the inequality holds for all $n \in \mathbb{N}$.

         \item[(2)] 
         We proceed by induction.
         For $n=1$, the inequality is trivial, i.e. $1\leq \sqrt{1^2}$.
         Now, assume the inequality holds for $n=k \geq 2$.
         For the case $n=k+1$, applying the induction hypothesis step gives,
         \begin{align*}
            \frac{1}{(k+1)^2} \left(x_1 + \cdots + x_k + x_{k+1} \right)^2 
            &\leq   \frac{1}{(k+1)^2} \left(\sqrt{k \sum_{i=1}^k x_i^2}+x_{k+1} \right)^2 \\
            &=\frac{1}{(k+1)^2} \left(k \sum_{i=1}^k x_i^2 + x_{k+1}^2 +2 x_{k+1} \sqrt{k \sum_{i=1}^k x_i^2}  \right) \\ \intertext{Notice that $2ab \leq a^2 +b^2$ for $a,b \geq 0$ gives $2x_{k+1} \sqrt{k \sum_{i=1}^k x_i^2} =2x_{k+1} \sqrt{k} \cdot \sqrt{\sum_{i=1}^k x_i^2}\leq kx_{k+1}^2 +\sum_{i=1}^k x_i^2$. }
            &\leq \frac{1}{(k+1)^2}   \left( k \sum_{i=1}^k x_i^2 + x_{k+1}^2 + kx_{k+1}^2 +\sum_{i=1}^k x_i^2  \right) \\ \intertext{Rearranging the terms gives}
            &=  \frac{1}{(k+1)^2}   \left( (k+1) \sum_{i=1}^{k+1} x_i^2 \right) = \frac{1}{k+1}   \sum_{i=1}^{k+1} x_i^2.
         \end{align*}
         Then, taking the square root of both sides gives the desired inequality for the case $n=k+1$.
         Thus, we complete the proof.
    \end{enumerate}

    \end{proof}
    \subsection{Omitted Proofs}
    Note that we restrict $A \in  [0, 1]^{m \times m}$ in the analysis for simplification. 
    However, the proof works for any bounded $A \in  [-b, b]^{m \times m}$ where $b$ is constant with respect to $T$ and $m$, by simply shifting from $[-b,b]$ to $[0,2b]$ and normalising the entries in $[0,1]$.
 
 
    

 \SecThreeLemThree*
 
 \begin{proof}[Proof of Lemma~\ref{lem:HoeffdingBound}]
     We consider the mutation rate $c>0$ in \coebl, where $c$ is a constant with respect to $T$ and $m$.
     We denote the empirical mean of the sample payoff $A_{ij}$ by $(\Bar{A_t})_{ij}$ and the number of times that row $i$ and column $j$ have been chosen by both players up to round $t$.
     Under Assumption (A), we compute the probability with $z_{ij} \sim \Nr(0,1)$ are i.i.d:
     \begin{align*}
       &  \Pr \left(A_{ij} \leq (\tilde{A_t})_{ij} \right) \\
     =&  \Pr \left((A_{ij} \leq (\Bar{A_t})_{ij}+ \sqrt{\frac{c \log (2T^2 m^2)}{1 \lor n^t_{ij} +1}} + \frac{1}{1\lor n_{ij}^t}\cdot z_{ij}\right) \\
     =& \Pr \left(A_{ij} - \frac{1}{1 \lor n_{ij}^t} \sum_{k=1}^{1 \lor n_{ij}^t} (A_k)_{ij} - \frac{z_{ij}}{1 \lor n_{ij}^t}  \leq \sqrt{\frac{c \log (2T^2 m^2)}{1 \lor n^t_{ij} +1}} \right) \\ \intertext{Recall that $(A_k)_{ij}=A_{ij}+\eta_k$ where $\eta_k$ are i.i.d. 1-sub-Gaussian with zero mean.
     Note that $ \eta_k':=-\eta_k$ is also 1-sub-Gaussian with zero mean and $z_{ij}':=-z_{ij} \sim \Nr(0,1)$. Thus, we can rewrite the inequality as follows.}
     =& \Pr \left( \frac{1}{1 \lor n_{ij}^t} \left(\sum_{k=1}^{1 \lor n_{ij}^t} \eta_k' + z_{ij}' \right)\leq \sqrt{\frac{c \log (2T^2 m^2)}{1 \lor n^t_{ij} +1}} \right)
     \end{align*}
     We consider the reverse quantity:
     \begin{align*}
          &\Pr \left( \frac{1}{1 \lor n_{ij}^t} \left(\sum_{k=1}^{1 \lor n_{ij}^t} \eta_k' + z_{ij}' \right)> \sqrt{\frac{c \log (2T^2 m^2)}{1 \lor n^t_{ij} +1}} \right) \\
     = & \Pr \left( \frac{1}{1 \lor n_{ij}^t+1} \left(\sum_{k=1}^{1 \lor n_{ij}^t} \eta_k' + z_{ij}'  \right)
     > \frac{1 \lor n_{ij}^t}{1 \lor n_{ij}^t+1} \sqrt{\frac{c \log (2T^2 m^2)}{1 \lor n^t_{ij} +1}} \right) \\ \intertext{Note that $\frac{1 \lor n_{ij}^t}{1 \lor n_{ij}^t+1} \geq \frac{1}{2}$. 
     Thus, we have}
     \leq & \Pr \left( \frac{1}{1 \lor n_{ij}^t+1} \left(\sum_{k=1}^{1 \lor n_{ij}^t} \eta_k' + z_{ij}' \right)> \frac{1}{2}\sqrt{\frac{c \log (2T^2 m^2)}{1 \lor n^t_{ij} +1}} \right) \\ \intertext{Using Hoeffding's inequality for i.i.d. sub-Gaussian random variables gives}
     \leq & \exp \left( - \frac{(1 \lor n_{ij}^t+1)\cdot \frac{1}{4}{\frac{c \log (2T^2 m^2)}{1 \lor n^t_{ij} +1}} }{2} 
     \right) \\ 
     =& \left( \frac{1}{2T^2m^2} \right)^{c/8}:=\delta
     \end{align*}
     Hence, we complete the proof.
     \end{proof}

 \SecThreeMain*
 
 \begin{proof}[Proof of Theorem~\ref{thm:main}]
       First, we follow the proof of Theorem 1 in \citep{o2021matrix} using the following events.
       Let $E_t$ be the event that $\exists i,j \in [m]$ such that $(\tilde{A}_t)_{ij} <A_{ij}$. 
       We know $E_t \in \F_t$ where $\F_t$ is defined in the preliminaries.
       Consider some iteration $E_t$ does not hold and let 
       $\tilde{y}_t:= \text{argmin}_{y \in \Delta_m} y^T\tilde{A_t} x_t$ be the best response of the column player where $x_t$ is the current policy of the row player.
       Since $E_t$ does not hold, then for $\forall i,j \in [m], A_{ij} \leq (\tilde{A}_t)_{ij}$.
       Thus, $V_{A}^* \leq V_{\tilde{A}_t}^*$.
       So, the regret in each round $t$ under the case that $E_t$ does not hold is bounded by the following,
       \begin{align}
           V_{A}^* -\Et{y_t^T Ax_t } 
           \leq \Et{V_{\tilde{A}_t}^* - y_t^T Ax_t} 
           &= \Et{\tilde{y_t}^T \tilde{A_t}x_t -y_t^T Ax_t} \nonumber\\ 
           \intertext{Note that in \coebl, we incorporate the selection mechanism: if $\min_{y\in \Delta_m} y^T \tilde{A_t} x' > \min_{y\in \Delta_m} y^T \tilde{A_t} x_{t-1}$, then we update $x_t=x'$ and $\min_{y\in \Delta_m}y^T \tilde{A_t}x_t = \min_{y\in \Delta_m}y^T \tilde{A_t} x'$; otherwise we update $x_t=x_{t-1}$ and $\min_{y\in \Delta_m}y^T \tilde{A_t}x_t>\min_{y\in \Delta_m}y^T \tilde{A_t}x'$. 
           In both cases, we can give a upper bound: $\tilde{y_t}^T \tilde{A_t}x' \leq \tilde{y_t}^T \tilde{A_t} x_t$. Moreover, $x' \in \arg\max_{x \in \Delta_m} \min_{y \in \Delta_m} y^T \tilde{A}^t x $ and thus $x_t \in \arg\max_{x \in \Delta_m} \min_{y \in \Delta_m} y^T \tilde{A}^t x $. In other words, \coebl only updates the current policy if the previous policy is not the maximin solution anymore. Recall that $\tilde{y_t}$ is the best response of the column player. We obtain an upper bound.}
           &\leq \Et{y_t^T\tilde{A_t}x_t-y_t^TAx_t} \nonumber\\
           &= \Et{y_t^T\left(\tilde{A_t}-A \right)x_t}  \nonumber \\ 
           \nonumber
           \intertext{Recall the estimated matrix in Algorithm~\ref{alg:CoEBL} and $x_t \in \arg\max_{x \in \Delta_m} \min_{y \in \Delta_m} y^T$. 
           We have $\left(\tilde{A_t}-A \right)_{ij} = \sqrt{\frac{c \log (2T^2 m^2)}{1 \lor n^t_{ij} +1}} + \frac{1}{1 \lor n^t_{ij} }\Nr(0,1) $.
           Note that $\log (2T^2m^2)= \log \left( (1/\delta)^{8/c} \right) = 8 \log(1/\delta) /c$.
           Using Lemma~\ref{lem:identityOne} gives} 
           &= \Et{\sqrt{\frac{8 \log (1/\delta)}{1 \lor n^t_{i_tj_t} +1}} + \sum_{j=1}^m y_j \sum_{i=1}^m x_i \frac{z_{ij}}{1 \lor n^t_{ij}}} \nonumber \\ \intertext{Note that $1 \lor n^t_{ij} \geq 1$. We can have the following inequality.}
           &\leq \Et{\sqrt{\frac{8\log (1/\delta)}{1 \lor n^t_{i_tj_t} +1}} + \sum_{j=1}^m y_j \sum_{i=1}^m x_i z_{ij}}
           \nonumber\\ \intertext{By linearity of expectation and $\Et{z_{ij}}=0$, we have}
           &= \Et{\sqrt{\frac{8 \log (1/\delta)}{1 \lor n^t_{i_tj_t} +1}}}.  \label{eq:upper}
       \end{align}
       Thus, we can bound the overall regret.
       Given the class of games $\forall A \in \A$ defined in (A), we have
       \begin{align*}
           \R \left(A, \coebl, T\right) 
           =& \E{\sum_{t=1}^T V_{A^*} - \Et{y_t^TAx_t}} \\ \intertext{Using law of total probability gives}
           =& \E{\sum_{t=1}^T V_{A^*} -\Et{y_t^TAx_t} \mid \cap_{t=1}^T E_t^c} \cdot \Pr \left(\cap_{t=1}^T E_t^c \right) \\
           \quad & + \E{\sum_{t=1}^T V_{A^*} -\Et{y_t^TAx_t} \mid  \left( \cap_{t=1}^T E_t^c \right)^c}  \times \Pr \left(\left( \cap_{t=1}^T E_t^c \right)^c  \right) \\ \intertext{Using De Morgan's Law gives $ \left( \cap_{t=1}^T E_t^c \right)^c = \cup_{t=1}^T E_t$.}
           =& \E{\sum_{t=1}^T V_{A^*} -\Et{y_t^TAx_t} \mid \cap_{t=1}^T E_t^c} \cdot \Pr \left(\cap_{t=1}^T E_t^c \right) \\
           \quad & + \E{\sum_{t=1}^T V_{A^*} -\Et{y_t^TAx_t} \mid   \cup_{t=1}^T E_t} \cdot \Pr \left( \cup_{t=1}^T E_t  \right) \\ \intertext{Using the upper bound in Eq.~\ref{eq:upper} and $\Pr \left(\cap_{t=1}^T E_t^c \right) \leq 1 $ gives}
           \leq & \E{\sum_{t=1}^T \sqrt{\frac{8\log (1/\delta)}{1 \lor n^t_{i_tj_t} +1}} } + \E{\sum_{t=1}^T 1} \cdot \Pr \left( \cup_{t=1}^T E_t  \right) \\ \intertext{Using the Union bound gives}
           \leq &  \E{\sum_{t=1}^T \sqrt{\frac{8 \log (1/\delta)}{1 \lor n^t_{i_tj_t} +1}} } + T \sum_{t=1}^T \Pr \left(E_t \right) \\ \intertext{Using Lemma~\ref{lem:HoeffdingBound} gives $\Pr \left(E_t \right) \leq \delta$. Thus, we have}
           \leq &  \E{\sum_{t=1}^T \sqrt{\frac{8 \log (1/\delta)}{1 \lor n^t_{i_tj_t} +1}} } + T^2 \delta \\ \intertext{Recall that $\delta=\left(1/2T^2m^2 \right)^{c/8}\leq 1/2T^2m^2$ for $c\geq 8$.
           Note that $\log (1/\delta) = \log \left(\left(2T^2m^2 \right)^{c/8} \right)=c \log(2T^2m^2)/8$.
           }
           \leq & \E{\sum_{t=1}^T \sqrt{\frac{c \log (2T^2m^2)}{1 \lor n^t_{i_tj_t} +1}} } + \frac{1}{2m^2} \\ \intertext{Rewrite the summation in the expectation. }
           \leq & \sum_{i,j \in [m]} 
           \E{\sum_{t=1}^T \sqrt{\frac{c \log (2T^2m^2)}{1 \lor n^t_{ij}+1}} \mathds{1}_{\{i_t=i,j_t=j}\}}
           + \frac{1}{2m^2} \\ \intertext{Let us denote the set $B_{ij}:=\{t \in \{0, \cdots, n_{ij}^T\} \mid i_t=i,j_t=j\}$ for $i, j \in [m]$. So we can rewrite the summand as follows.
           }
           =& \sqrt{c \log (2T^2m^2)}\sum_{i,j \in [m]} 
           \E{
             \sum_{t_k \in B_{ij}} \sqrt{\frac{1}{1 \lor n_{ij}^{t_k}+1}}
           } 
           + \frac{1}{2m^2} 
           \\ \intertext{Note that $n_{ij}^{t_k}$ is an increasing sequence in $t_k$. Thus, we can have}
           =& \sqrt{c \log (2T^2m^2)}\sum_{i,j \in [m]} 
           \E{
             \sum_{k=1}^{n_{ij}^{T}} \sqrt{\frac{1}{k+1}}
           } 
           + \frac{1}{2m^2} 
           \\ \intertext{Adding one more $1/\sqrt{1}$ in the inner sum and using Lemma~\ref{lem:identityTwo} (1) give}
           &\leq  \sqrt{c \log (2T^2m^2)}\sum_{i,j \in [m]} \E{2\sqrt{1\lor n_{ij}^T+1}}+ \frac{1}{2m^2} \\ \intertext{Using Lemma~\ref{lem:identityTwo} (2) with $x_k:=\sqrt{1\lor n_{ij}^T+1}$ where $k \in [m^2]$ gives}
           &\leq \sqrt{4c\log (2T^2m^2)}\cdot m^2 \sqrt{\frac{\sum_{i,j \in [m]}1\lor n_{ij}^T+1}{m^2}} \\ \intertext{Notice that $1 \lor n_{ij}^T \leq  n_{ij}^T+1$.}
           &\leq  \sqrt{4c\log (2T^2m^2)}  \sqrt{m^2{\sum_{i,j \in [m]} (n_{ij}^T+2)}} \\ \intertext{Notice that $\sum_{i,j \in [m]} n_{ij}^T=T$.}
           &= \sqrt{4c\log (2T^2m^2)}  \sqrt{m^2{ (T+2m^2)}} \\ \intertext{Since $T\geq 2m^2$, we have}
           &\leq \sqrt{4c\log (2T^2m^2)}  \sqrt{2m^2T} = 2 \sqrt{2c Tm^2\log (2T^2m^2)} =
           \tilde{O}(\sqrt{m^2T}).
        \end{align*} 
    Thus, we can conclude that  $\worstre\left(\A, \coebl, T\right) =\AsyO{\sqrt{m^2T}} $.
 \end{proof}

 \newpage
 \subsection{Complete Empirical Results}

 \subsubsection{Reasons for the choices of matrix games benchmarks}
 We choose the given matrix games benchmarks for the following reasons:
 
 \begin{enumerate}
    \item The RPS game is a classical benchmark widely used in the previous RL and game theory literature, and we want to compare the performance of \coebl with the existing algorithms.
    \item However, RPS consists of a small number of actions and the game is not complex enough to test the performances of the algorithms. Therefore, we included the \Diagonal and \BigNum games, which are more complex and feature exponentially larger action spaces
    \item We chose these matrix game benchmarks from multiple fields, including RL~\citep{littman1994markov,o2021matrix}, game theory~\citep{ijcai2024p336}, and evolutionary computation~\citep{lin2024overcoming}, to demonstrate the general applicability of the proposed algorithm.
 \end{enumerate}
 
 \subsubsection{Reasons for the choices of symmetric matrix games benchmarks}
    One might notice that all the matrix games considered in the experiments are symmetric, meaning that for the payoff matrix $A$,  $A_{ij}=-A_{ji}$ for all $i,j \in [m]$. 
    In such games, there is no advantage in being the first or second player, 
    the experimental studies provide fair head-to-head comparisons.
 
    \subsubsection{Reasons for not including OTS and TS in experiments}
    We chose not to include Optimistic Thompson Sampling (OTS) or Thompson Sampling (TS) in the experiments for the following reasons:
    
    \begin{enumerate}
        \item \citet{o2021matrix} have already evaluated the traditional TS as a baseline and demonstrated the superiority of UCB over traditional TS in the matrix game setting. Therefore, it makes sense to directly compare \coebl with the most robust algorithms reported in \citep{o2021matrix}.
        \item \coebl can be viewed as an evolutionary variant of TS, as it implements randomised optimism through the evolutionary operator and the selection mechanism. Since \coebl inherently extends the principles of TS, comparing with TS would not provide additional meaningful insights.
        \item OTS \citep{li2024optimistic} focuses on adversarial regret rather than Nash regret. Since our work focuses on Nash regret, it would be inappropriate and unfair to compare algorithms that rely on fundamentally different metrics.
        \item The current theoretical and empirical analysis is sufficient to demonstrate the potential and effectiveness of randomised optimism via evolutionary approach in matrix games against \ucb, which is the key message from our current work. Including additional experiments with TS or OTS is unnecessary at this stage. 
    \end{enumerate}
    
    The experimental design focuses on robust and directly comparable baselines to ensure that the evaluations are fair, relevant, and aligned with the objectives of this study.
    



 \subsubsection{\Diagonal Game}
 
 We defer the full experimental results on \Diagonal game to the appendix
 and provide the payoff matrix of \Diagonal game when $n=2$.
    
    \begin{table}[H]
    \centering
    \begin{tabular}{c|c c c c}
       & 00 & 01 & 10 & 11 \\ \hline
    00 & 0  & -1  & -1  & -1  \\
    01 & 1  & 0  & 0  & -1  \\
    10 & 1  & 0  & 0  & -1  \\
    11 & 1  & 1  & 1  & 0  \\
    \end{tabular}
    \caption{The payoff matrix of \Diagonal game $(n=2)$. 
    Binary bitstrings represent different pure strategies of each player.
    This game compares the number of $1$-bits of each player.
    }
    \end{table}
 
 In this case, both players have $2^n$ actions, which is way more complicated than the RPS. 
 In terms of the regret, all the algorithms in the self-play scenario, exhibit sublinear regrets.
 However, only \coebl converges for several problem instances.
 When $n$ increases to certain level, like $n\geq 4$, none of them can converge to the Nash equilibrium anymore.
 For the \alg-1 vs \alg-2 scenario, after iteration $2000$, \coebl has an overwhelming advantage over other bandit baselines in terms of regret performance.
 For the convergence of the the Nash equilibrium, surprisingly, 
 in Figure~\ref{fig:Regret_Diagonal2_2}, \ucb-vs-\coebl converges to or approximates the Nash equilibrium even when $n=4$.
 However, they also fail to converge to the Nash equilibrium when $n=5,6,7$.
 We can see that the opponent performance has certain impact to the overall dynamics towards the Nash equilibrium.
 \begin{figure}[!ht]
    \centering
    \subfloat[$n=2$
    ]{%
       \includegraphics[width=0.33\linewidth]{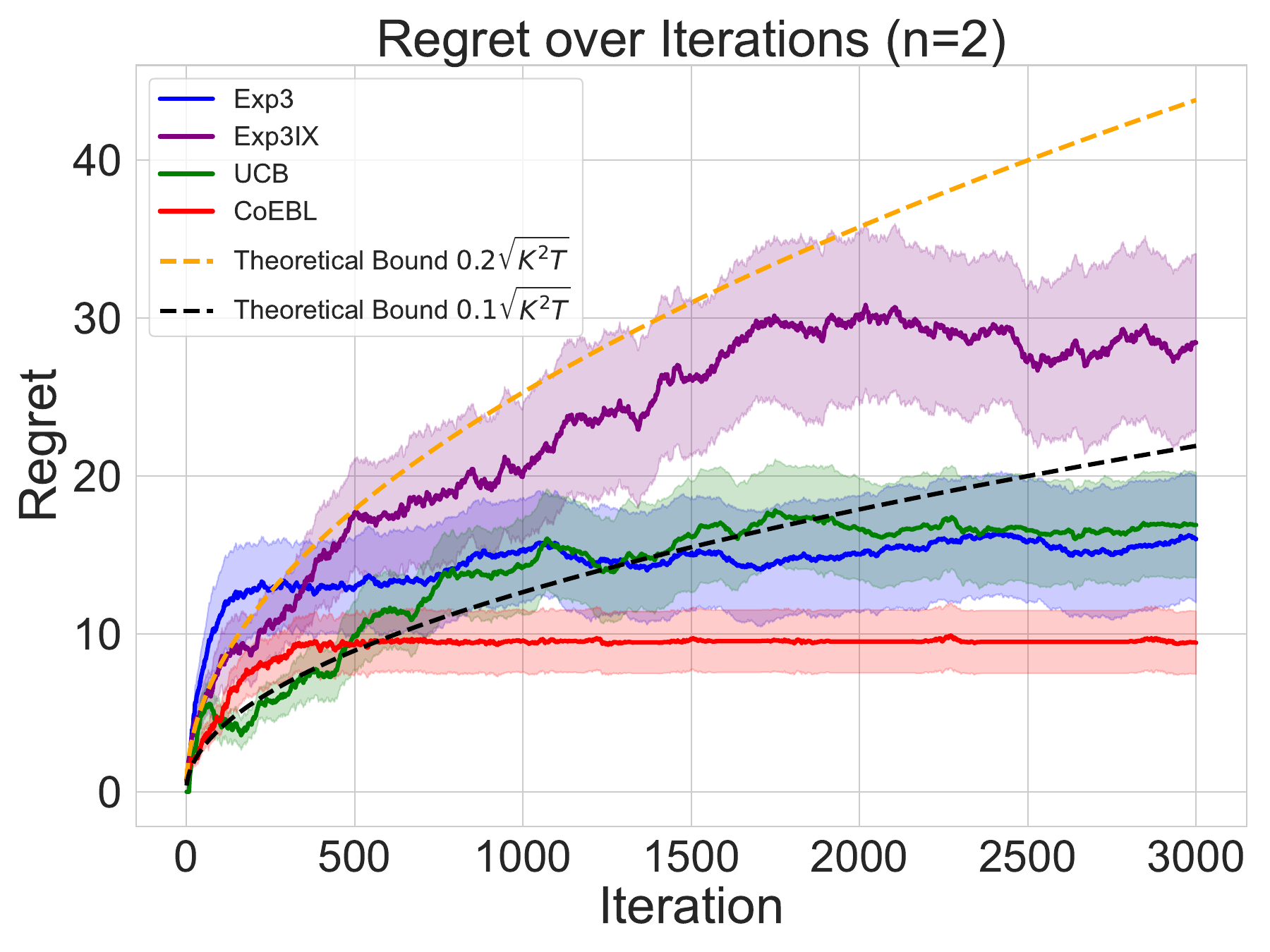}}
    \hfill
    \subfloat[$n=3$
    ]{%
       \includegraphics[width=0.33\linewidth]{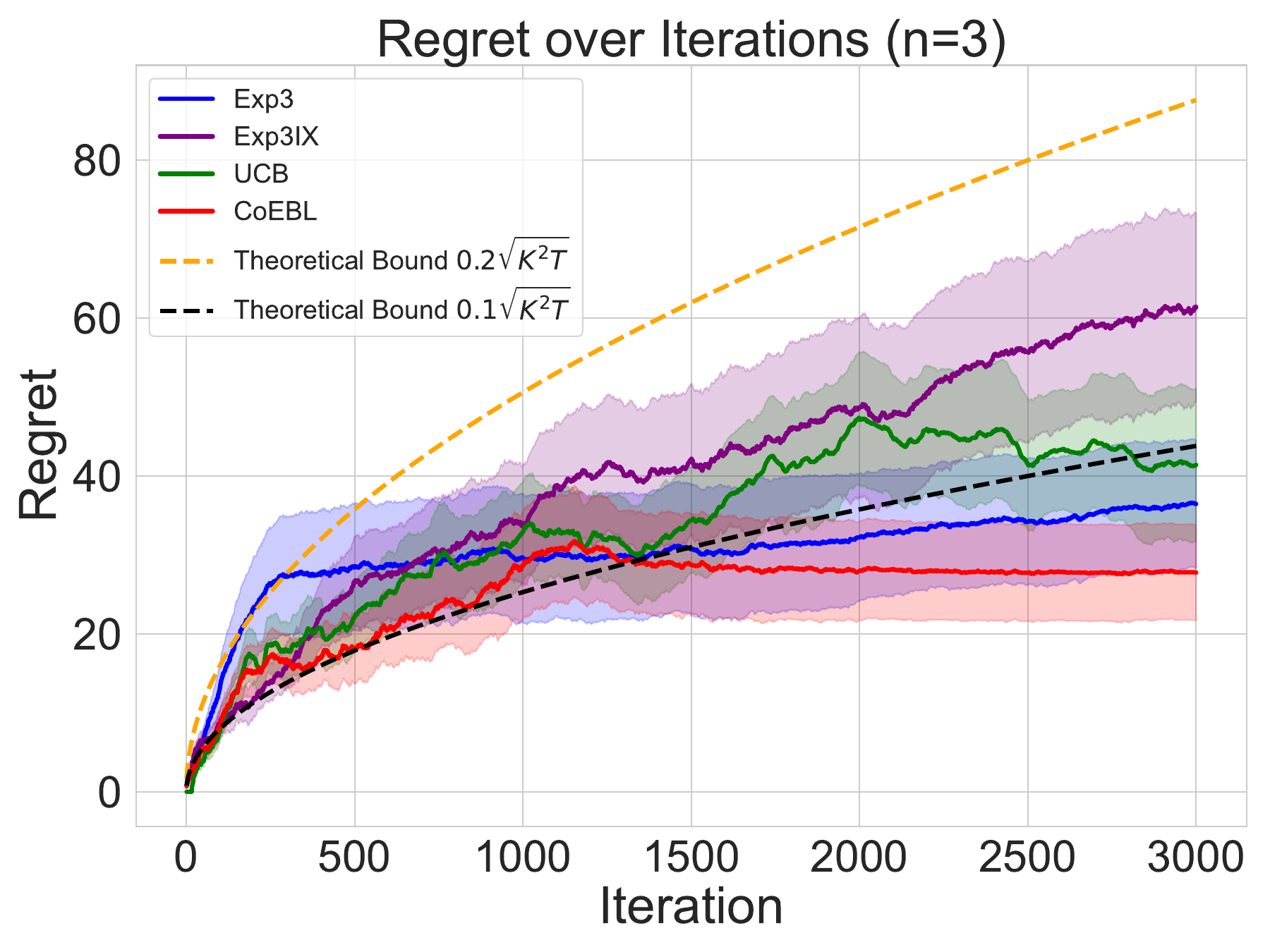}}
    \hfill
    \subfloat[$n=4$
    ]{%
       \includegraphics[width=0.33\linewidth]{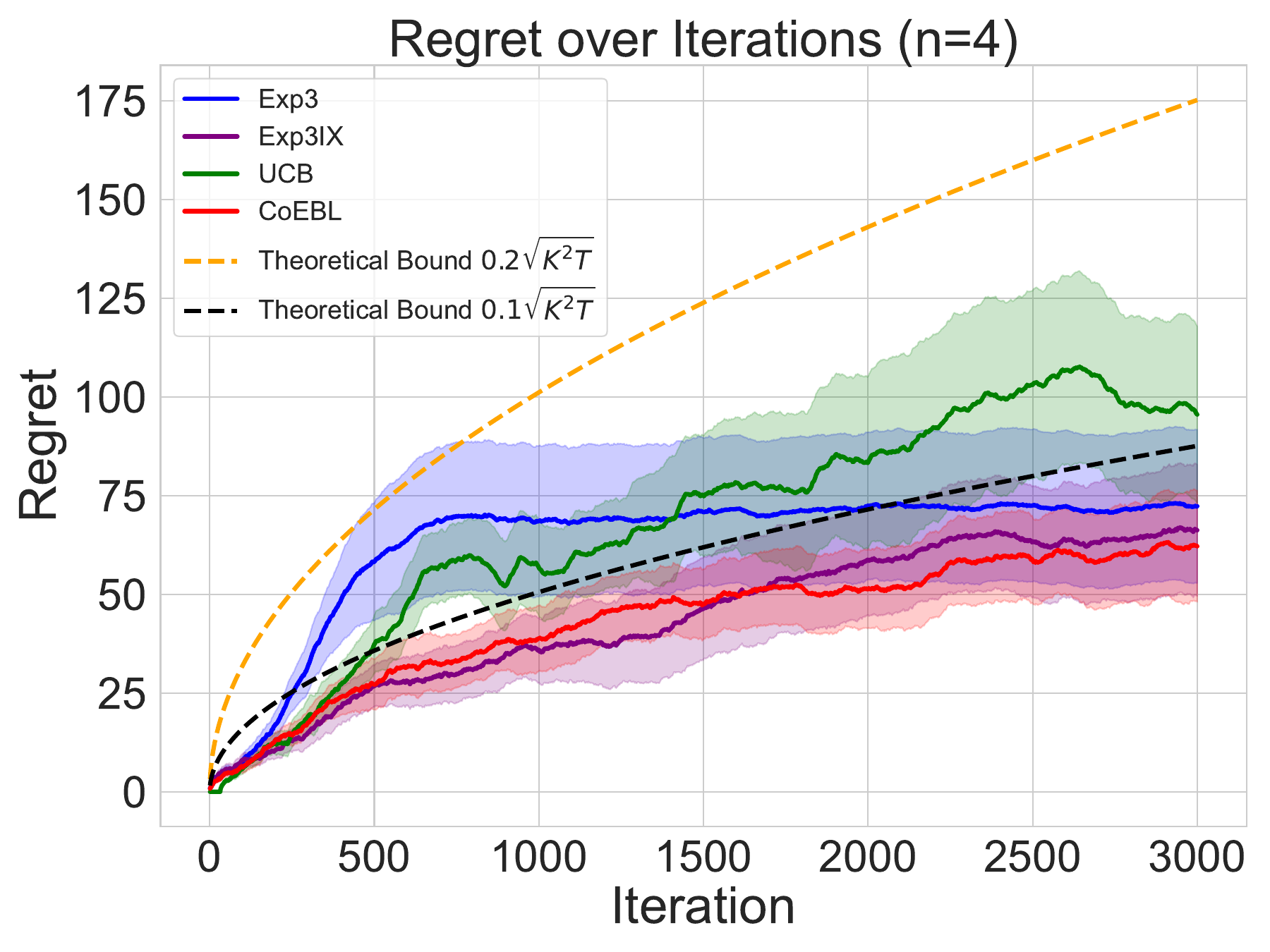}}
    \hfill
    \subfloat[$n=5$
    ]{%
       \includegraphics[width=0.33\linewidth]{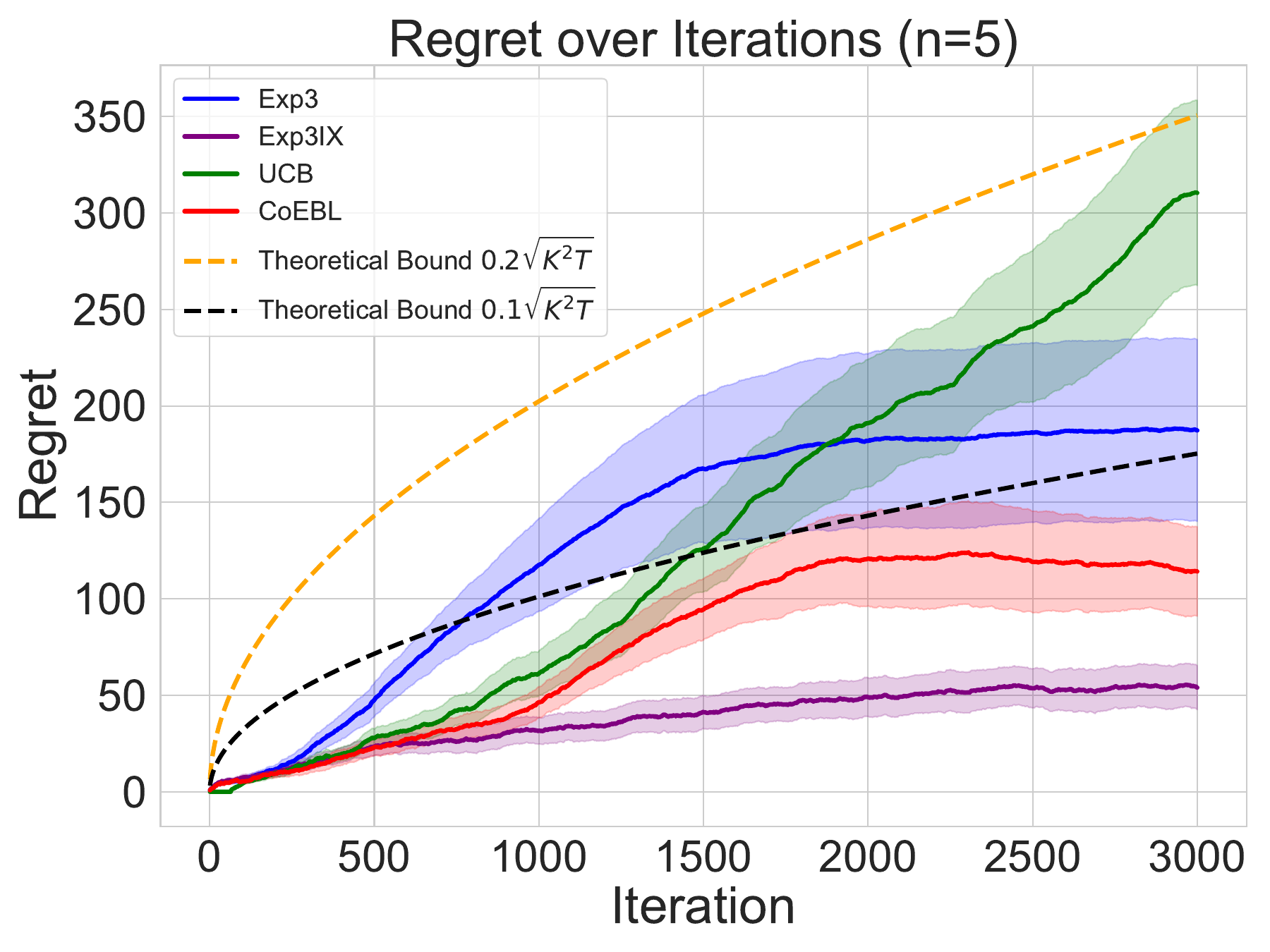}}
    \hfill
    \subfloat[$n=6$
    ]{%
       \includegraphics[width=0.33\linewidth]{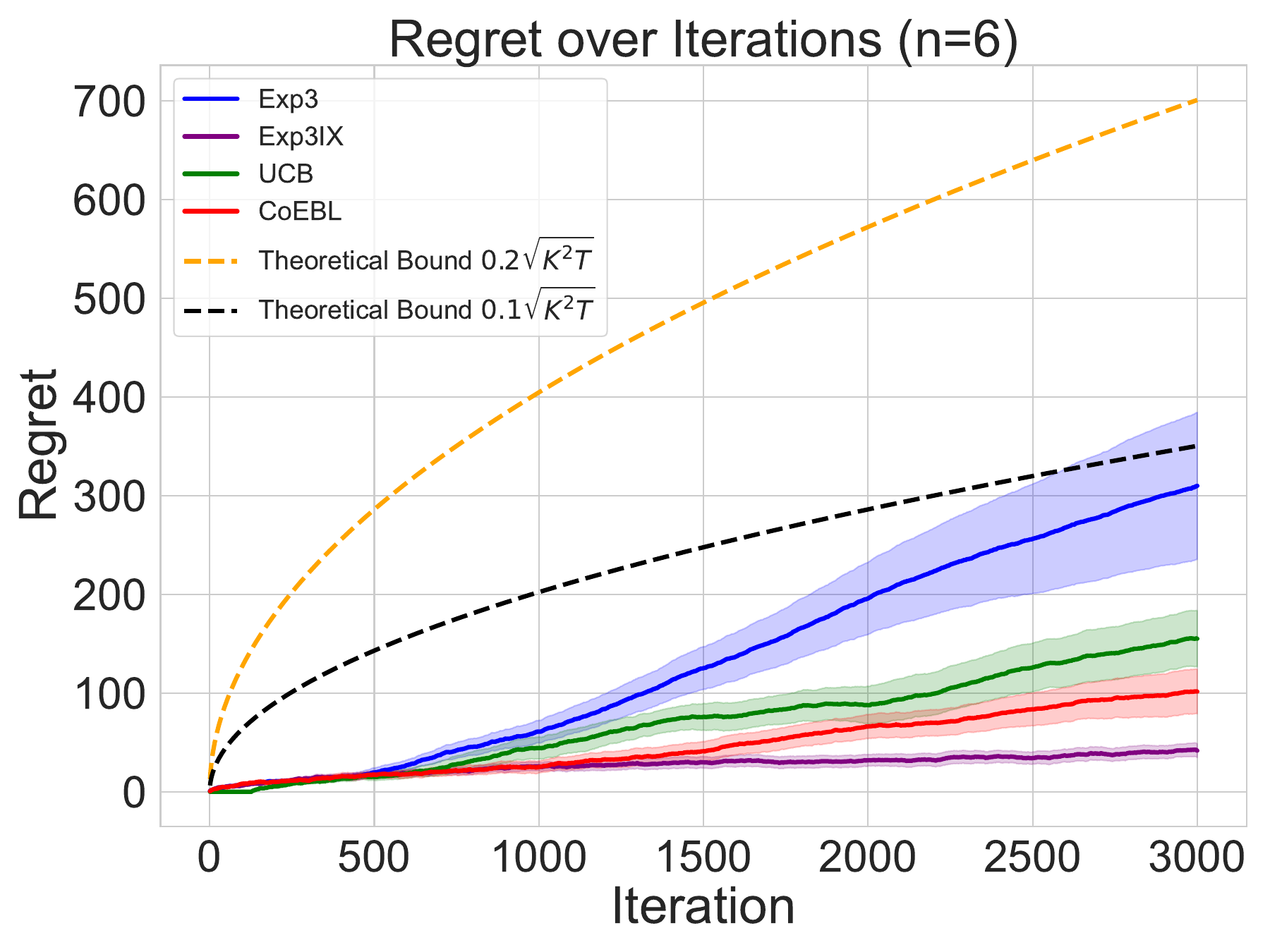}}
    \hfill
    \subfloat[$n=7$
    ]{%
       \includegraphics[width=0.33\linewidth]{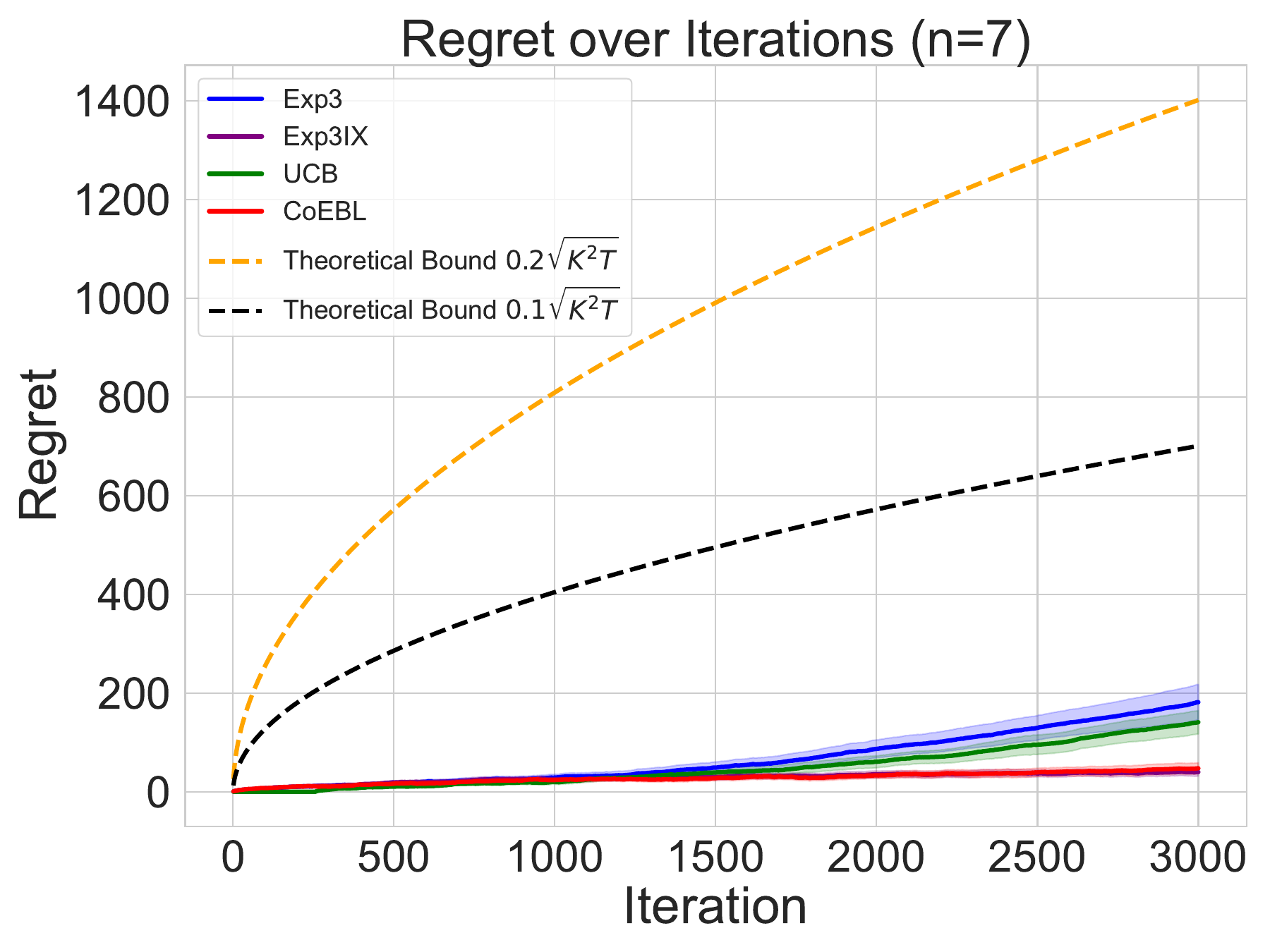}}
    \hfill
    \subfloat[$n=2$
    ]{%
       \includegraphics[width=0.33\linewidth]{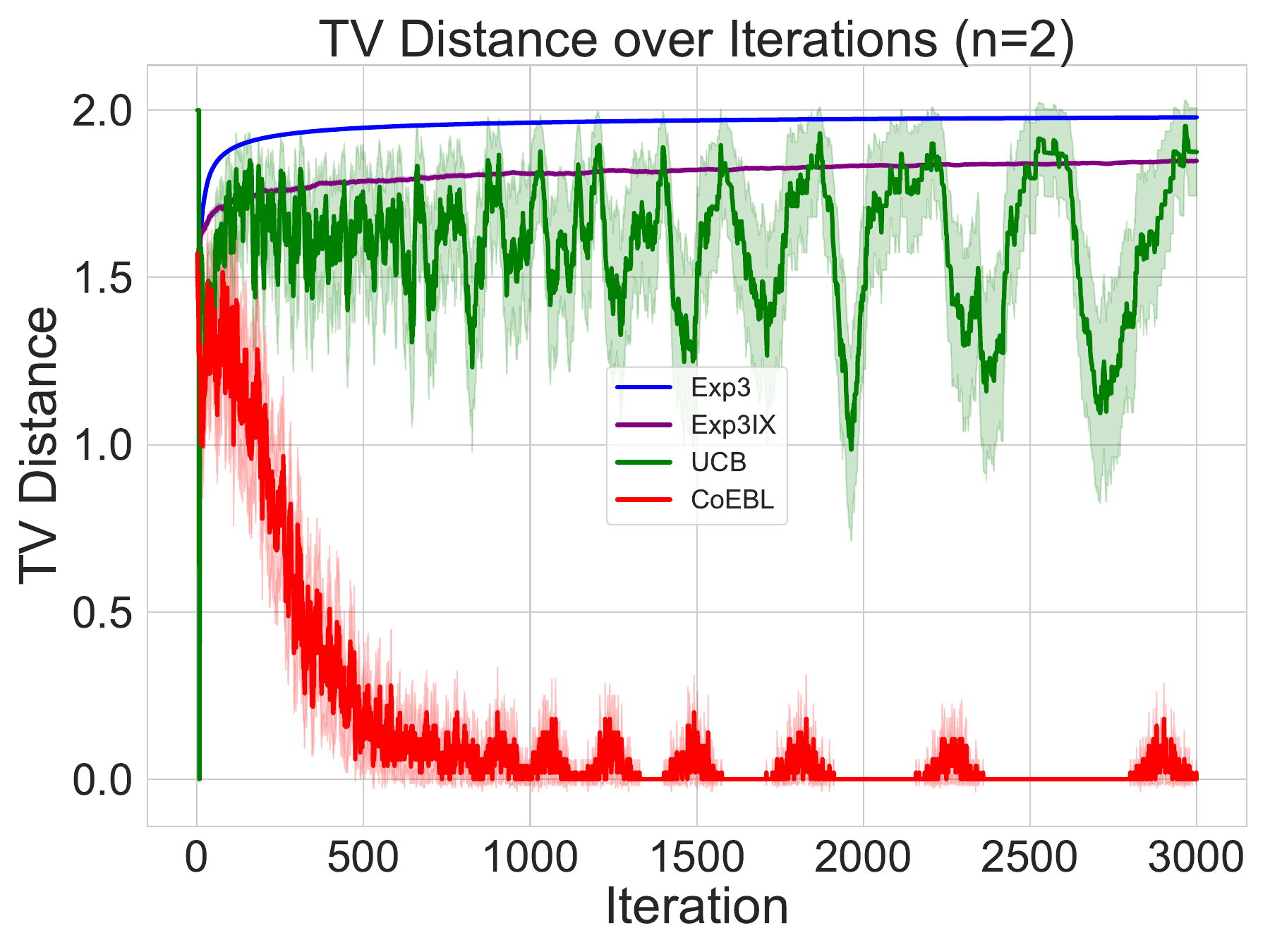}}
    \hfill
    \subfloat[$n=3$
    ]{%
       \includegraphics[width=0.33\linewidth]{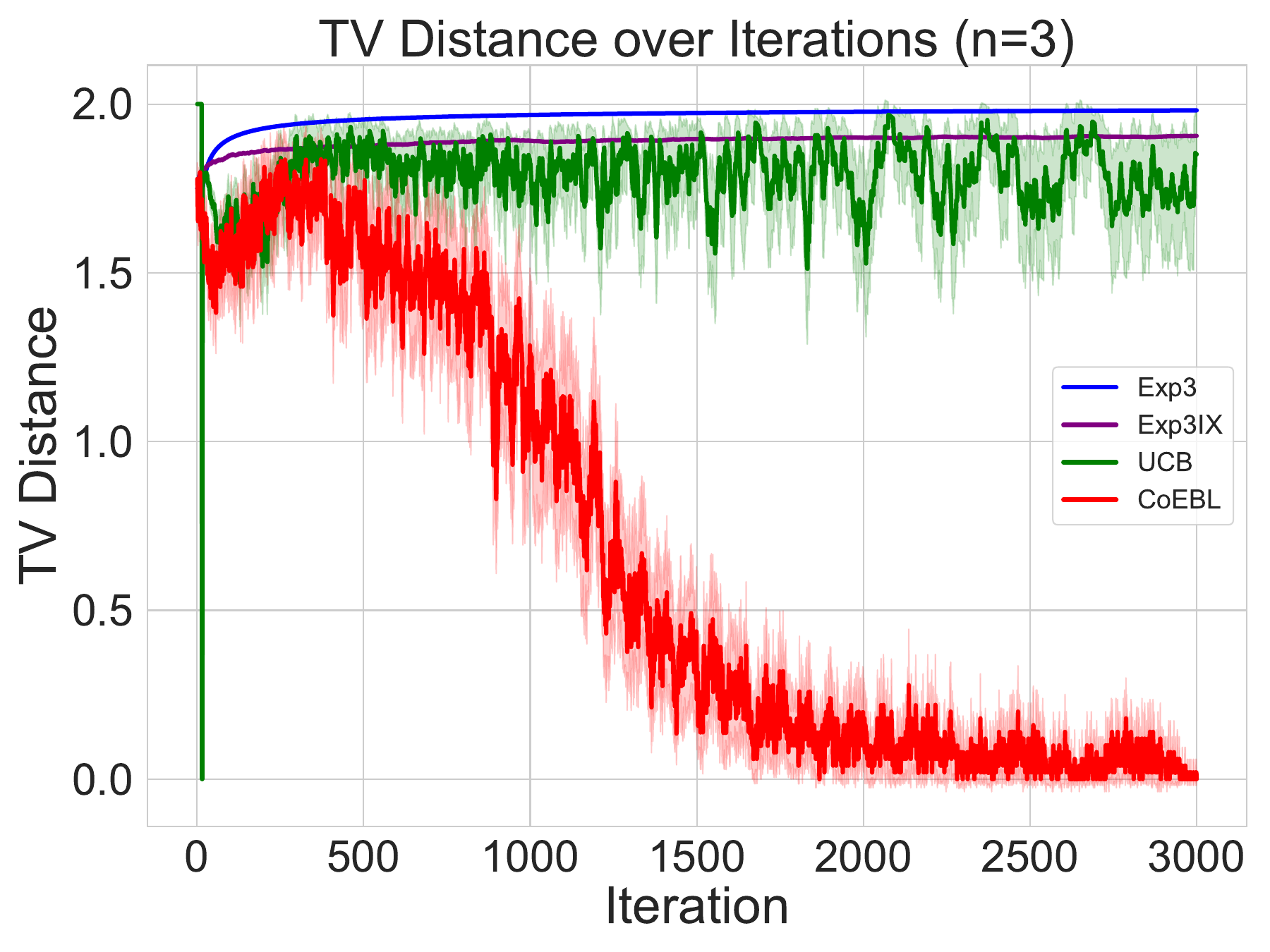}}
    \hfill
    \subfloat[$n=4$
    ]{%
       \includegraphics[width=0.33\linewidth]{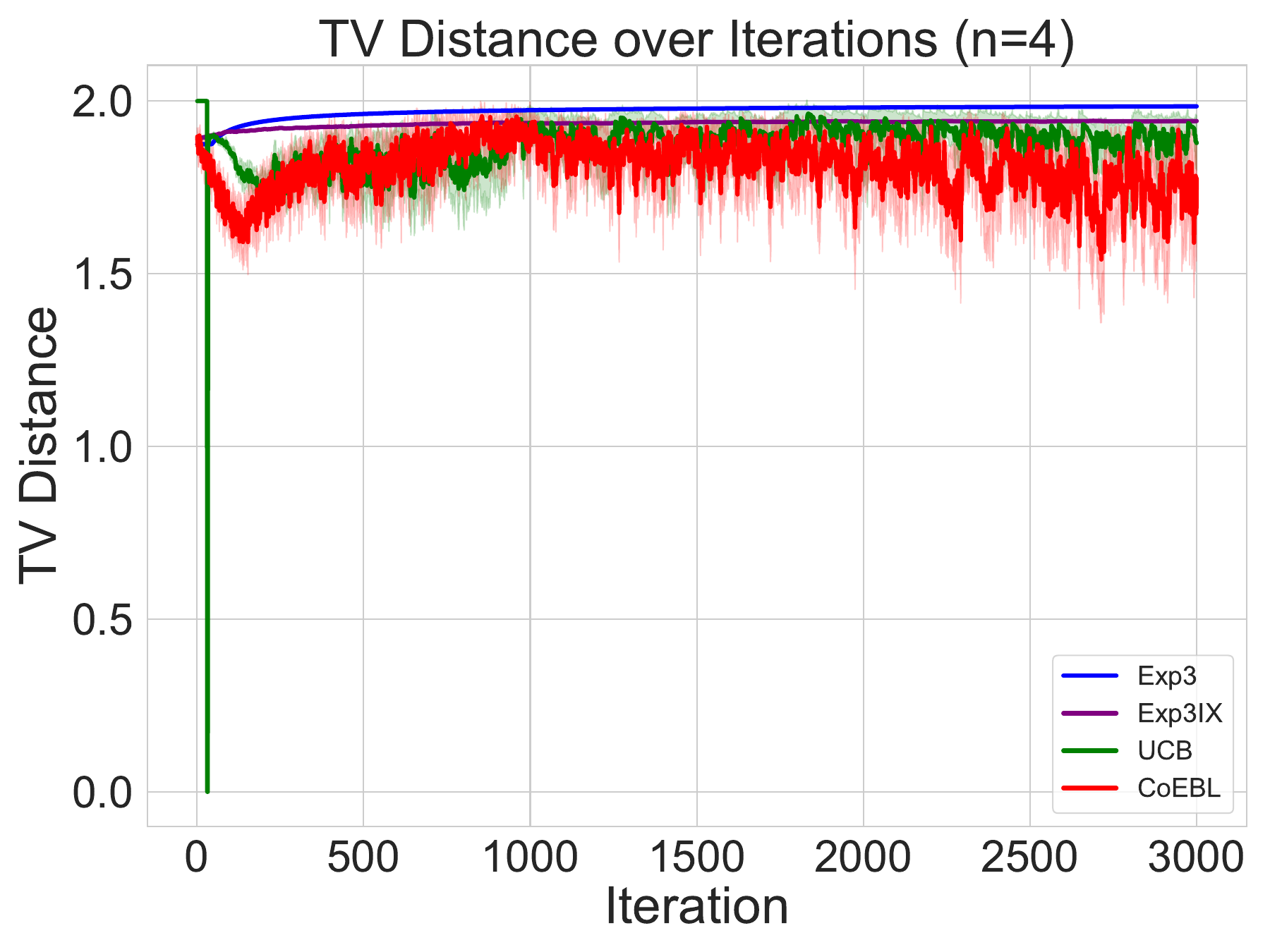}}
    \hfill
    \subfloat[$n=5$
    ]{%
       \includegraphics[width=0.33\linewidth]{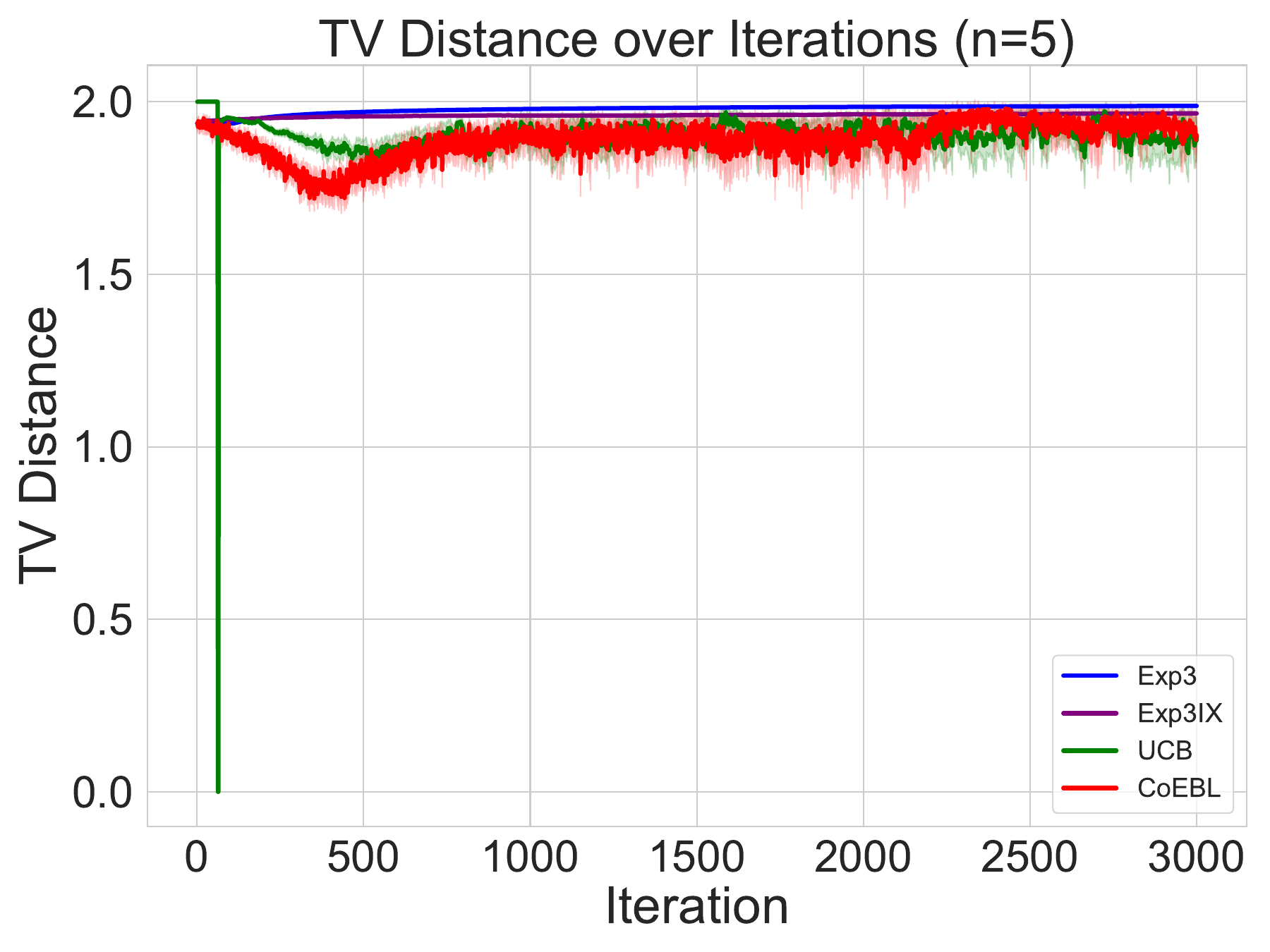}}
    \hfill
    \subfloat[$n=6$
    ]{%
       \includegraphics[width=0.33\linewidth]{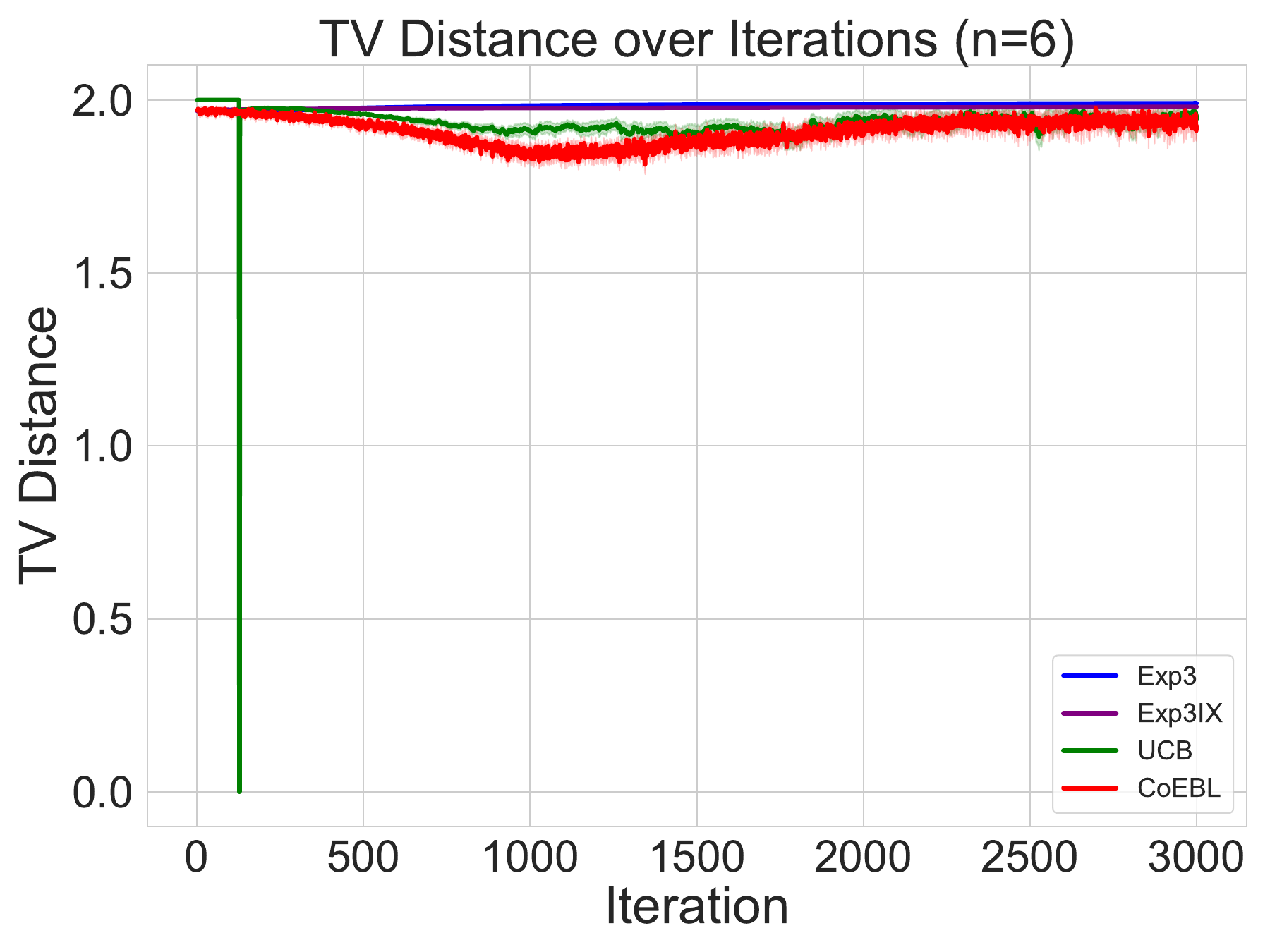}}
    \hfill
    \subfloat[$n=7$
    ]{%
       \includegraphics[width=0.33\linewidth]{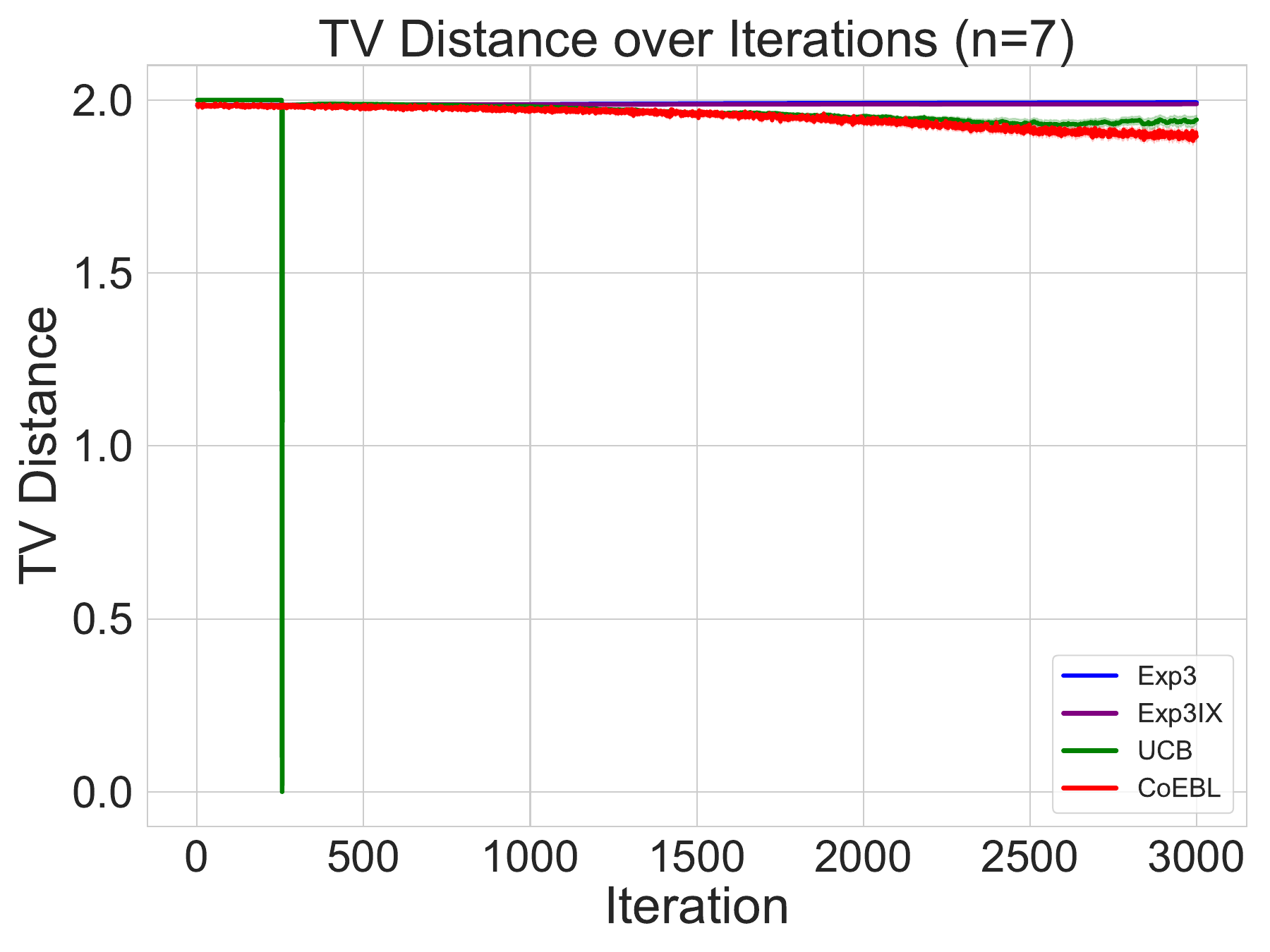}}
    \hfill
    \caption{Regret and TV Distance for Self-Plays on \Diagonal for $n=2, \ldots, 7$}
    \label{fig:Comparison_Regret_TV_Diagonal_2} 
 \end{figure}

 \begin{figure}[!ht]
    \centering
    \subfloat[
    ]{%
       \includegraphics[width=0.33\linewidth]{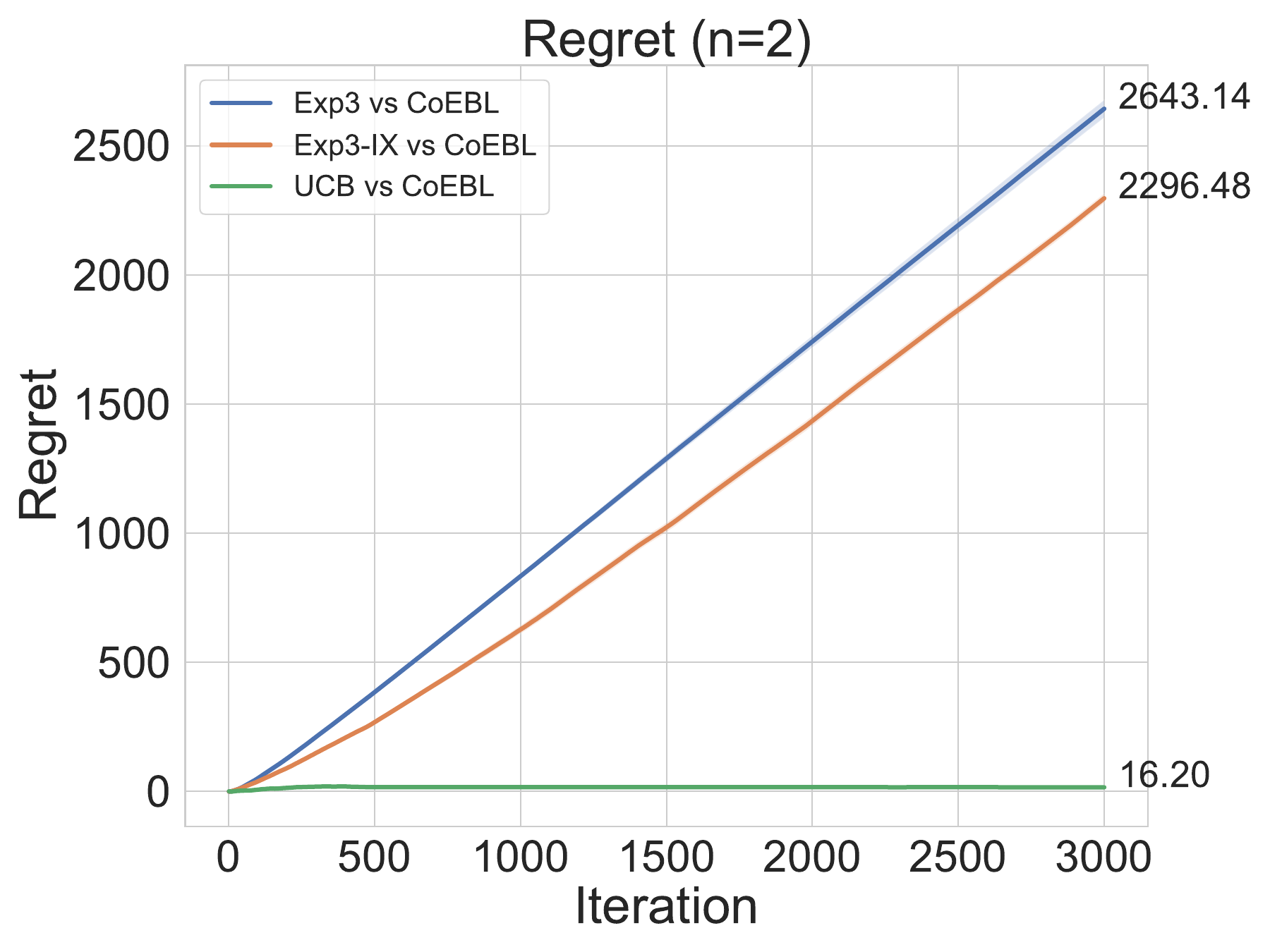}}
    \hfill
    \subfloat[
    ]{%
       \includegraphics[width=0.33\linewidth]{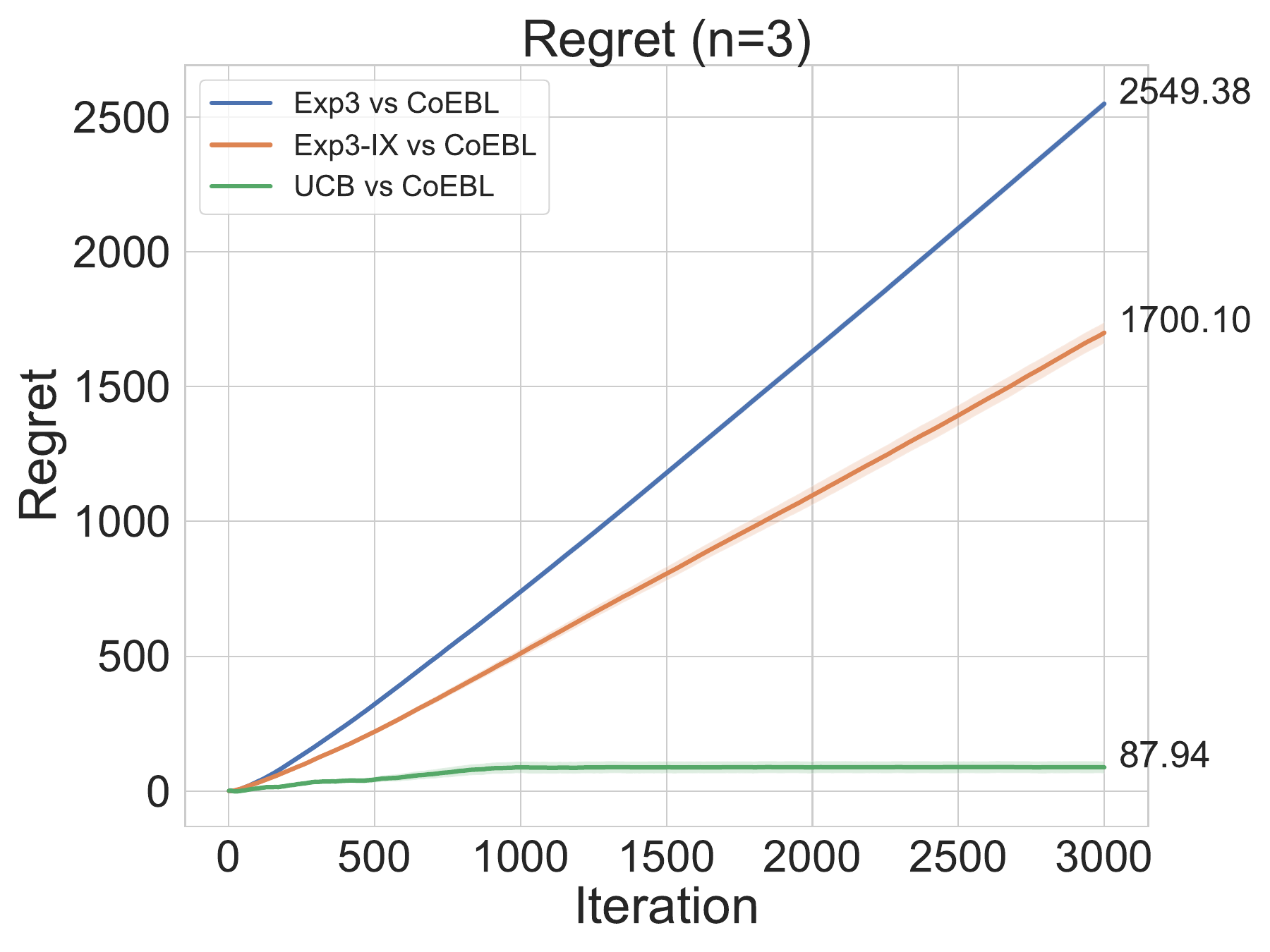}}
    \hfill
    \subfloat[
    ]{%
       \includegraphics[width=0.33\linewidth]{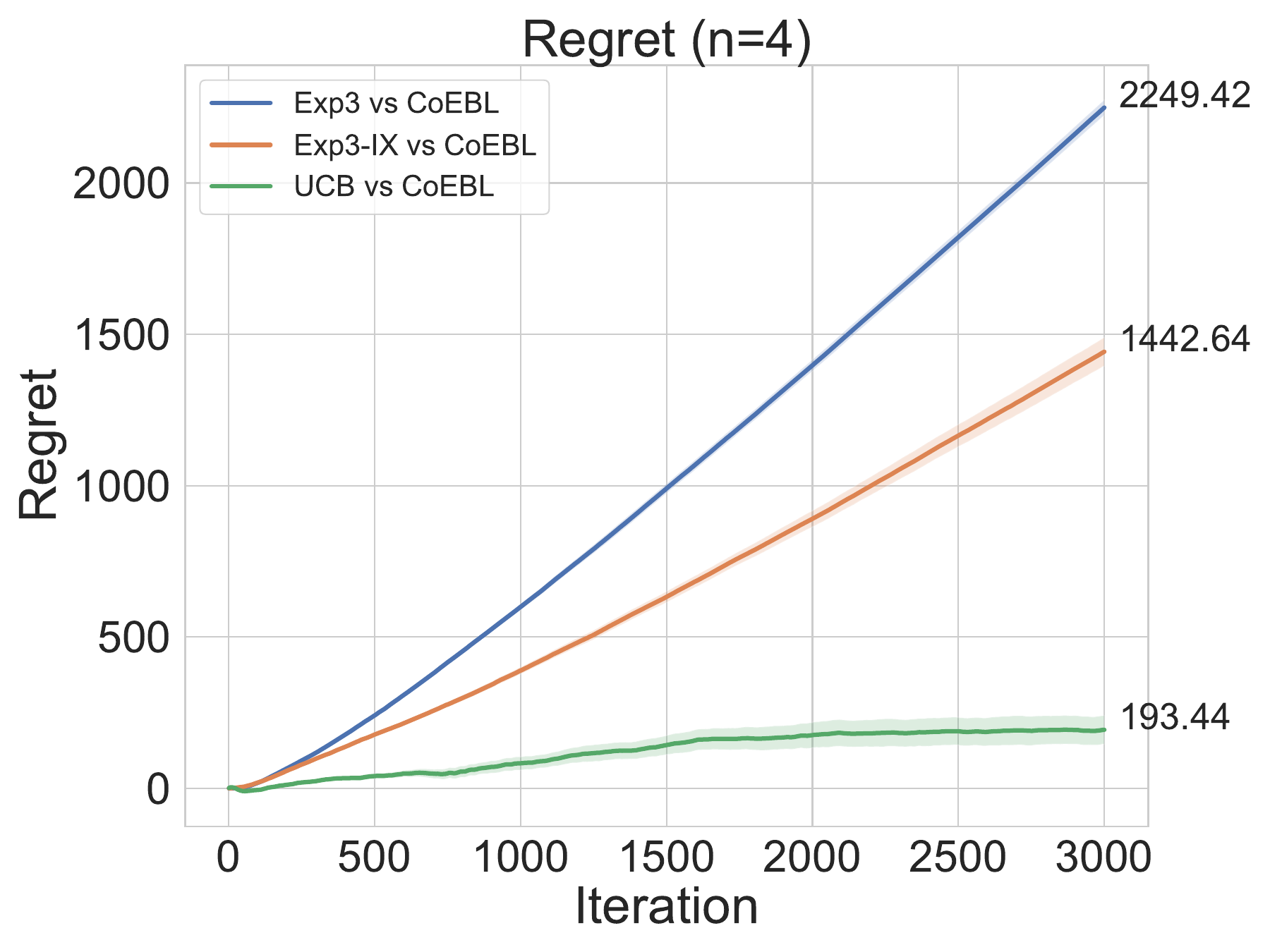}}
    \hfill
    \subfloat[
    ]{%
       \includegraphics[width=0.33\linewidth]{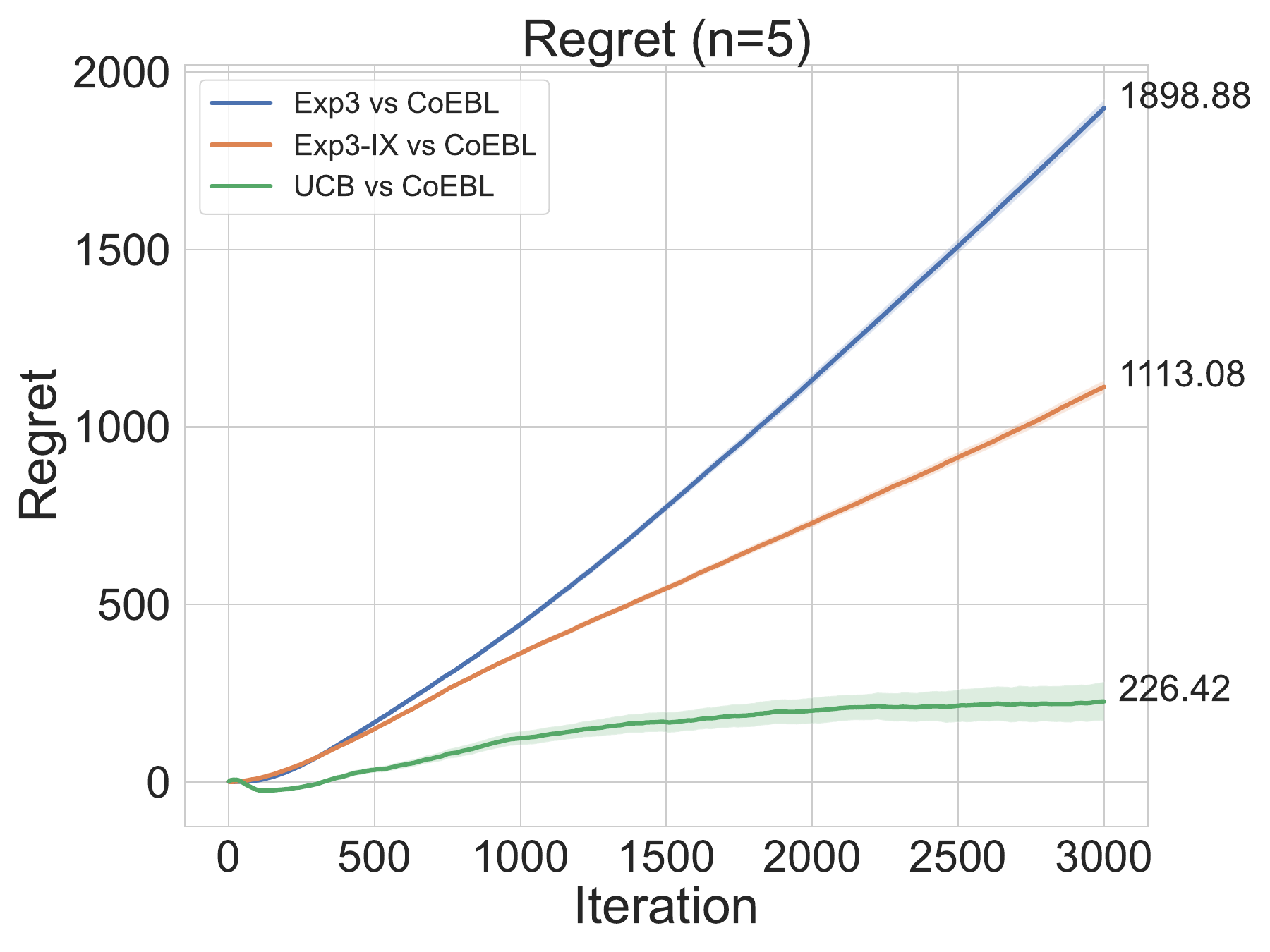}}
    \hfill
    \subfloat[
    ]{%
       \includegraphics[width=0.33\linewidth]{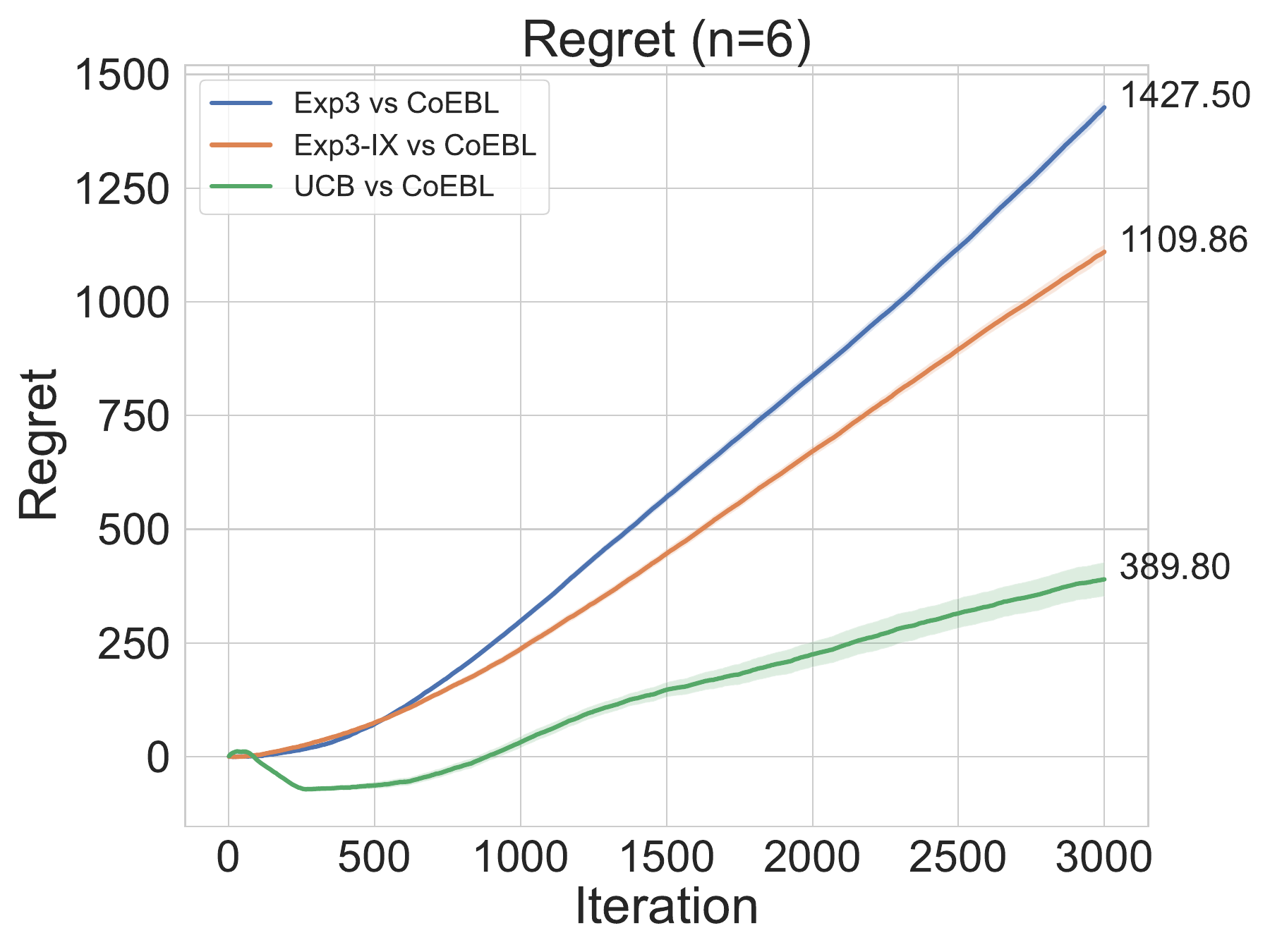}}
    \hfill
    \subfloat[
    ]{%
       \includegraphics[width=0.33\linewidth]{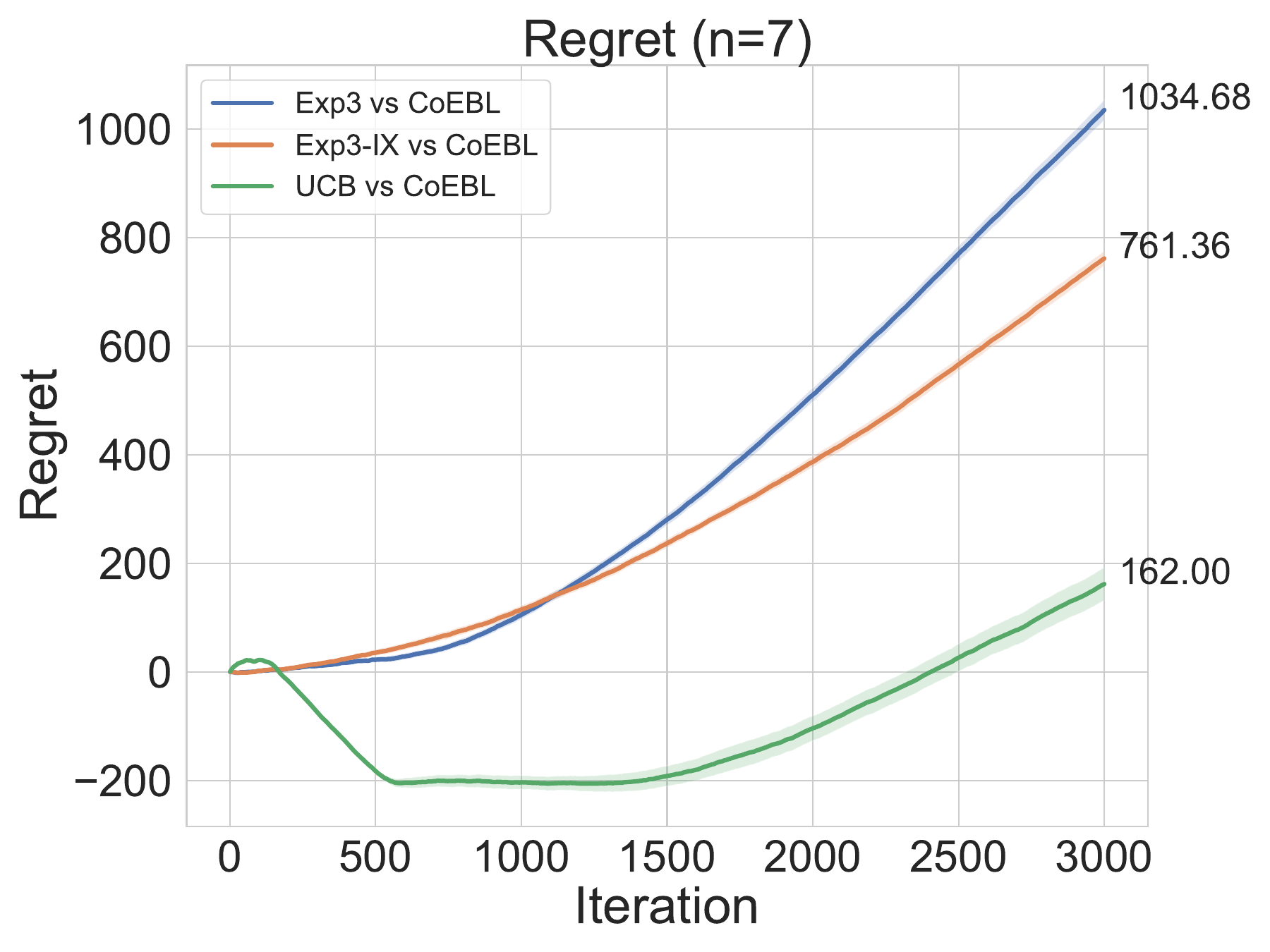}}
    \hfill
    \subfloat[
    ]{%
       \includegraphics[width=0.33\linewidth]{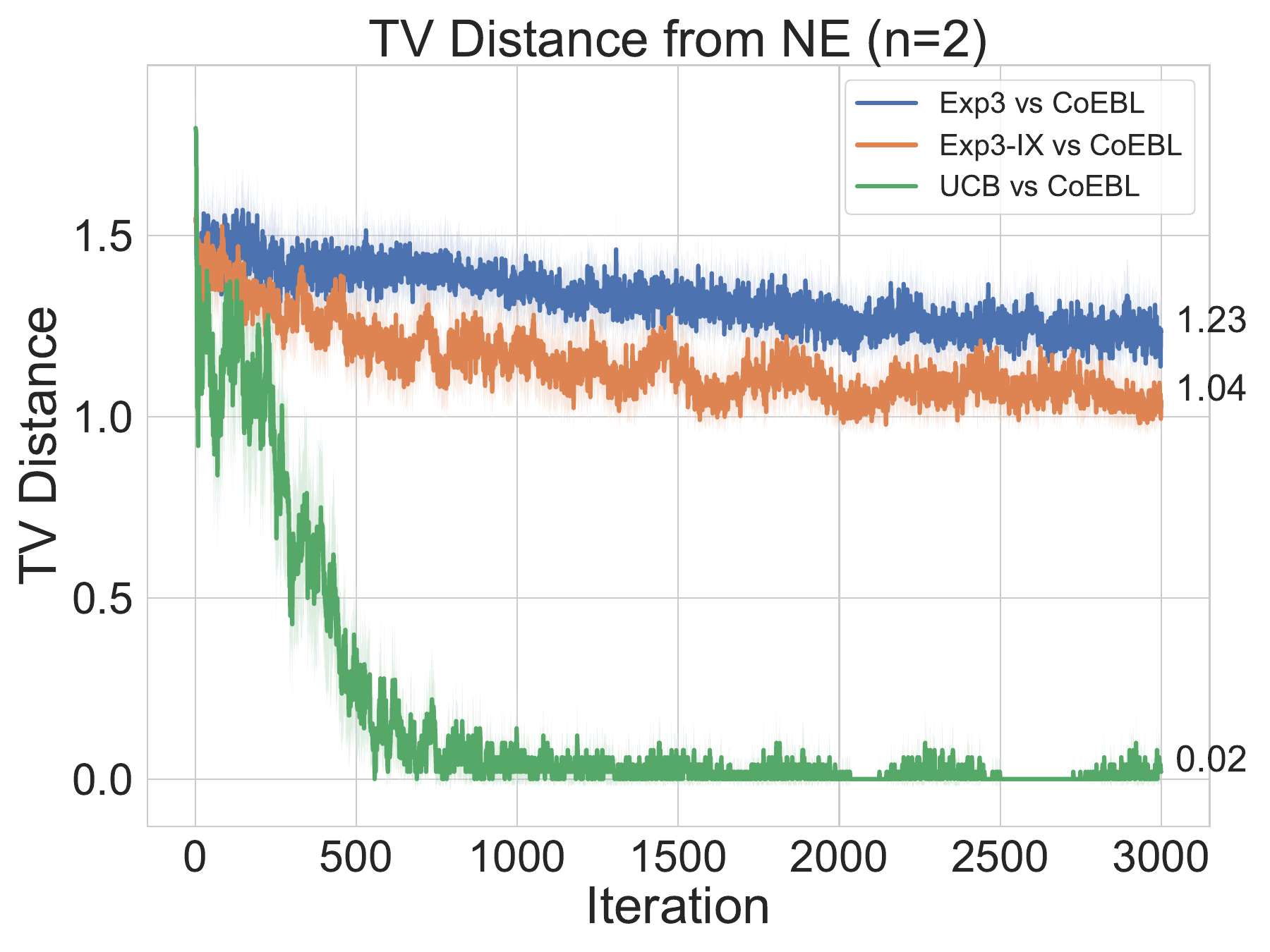}}
    \hfill
    \subfloat[
    ]{%
       \includegraphics[width=0.33\linewidth]{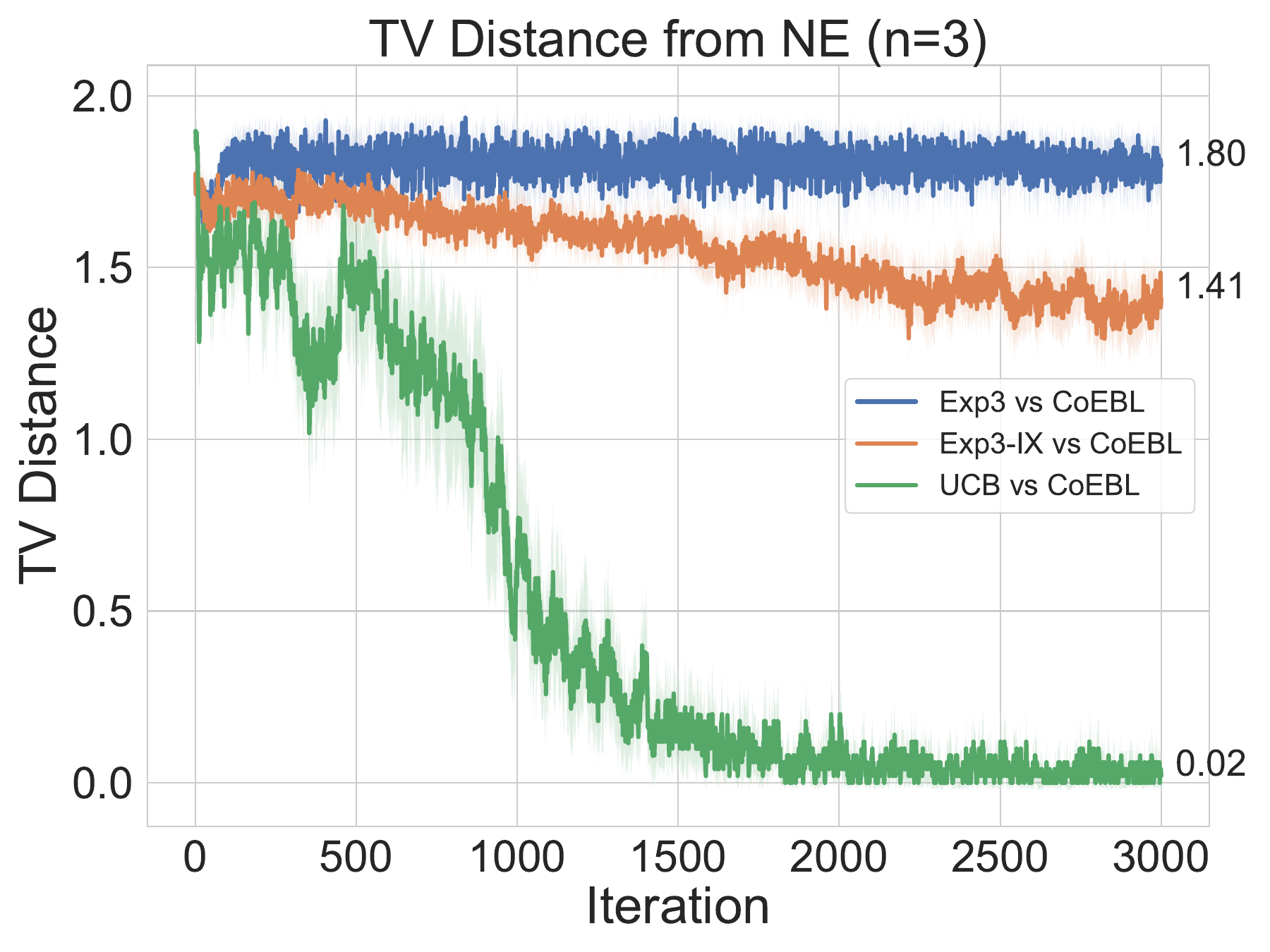}}
    \hfill
    \subfloat[
    ]{%
       \includegraphics[width=0.33\linewidth]{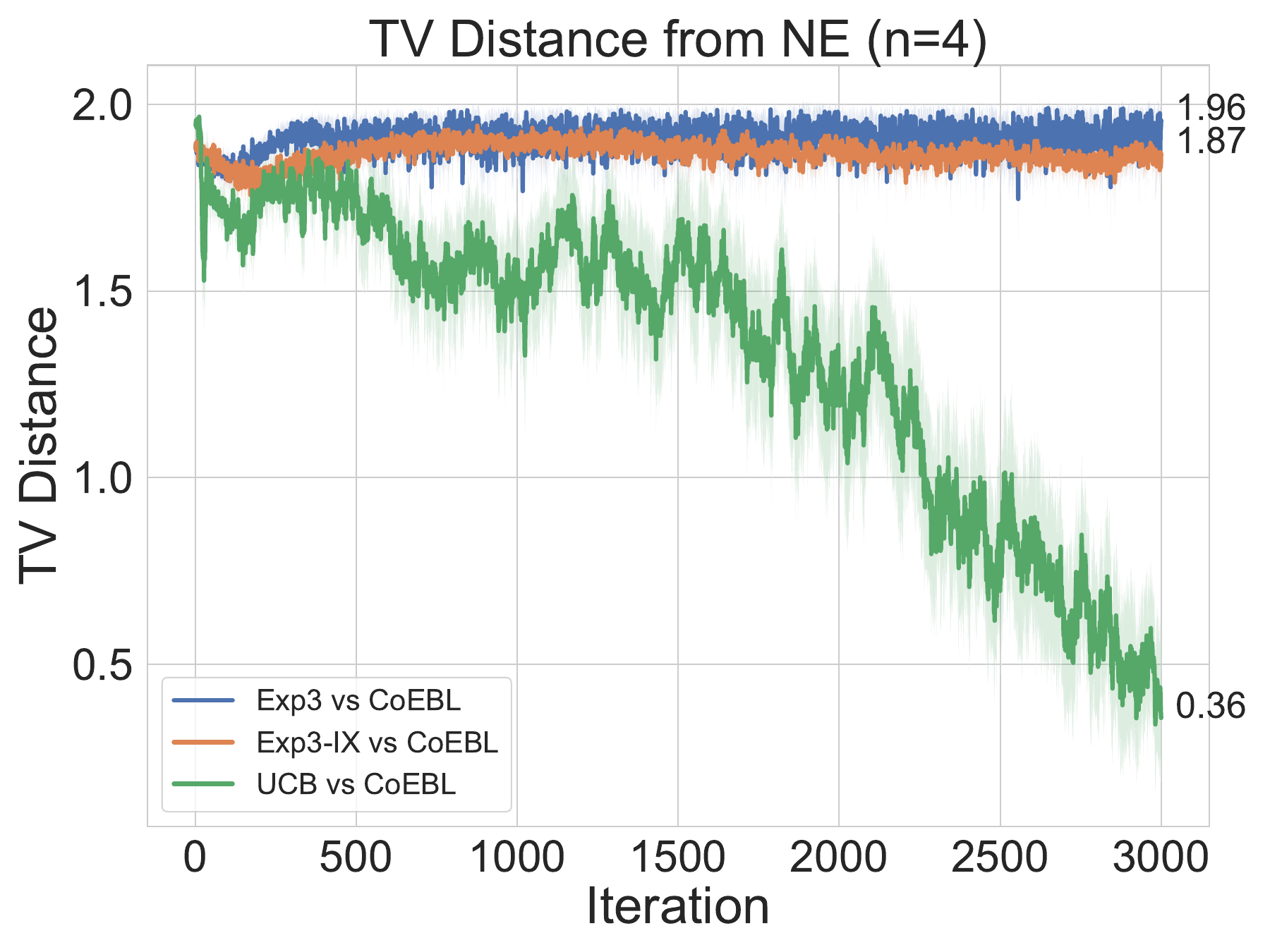}}
    \hfill
    \subfloat[
    ]{%
       \includegraphics[width=0.33\linewidth]{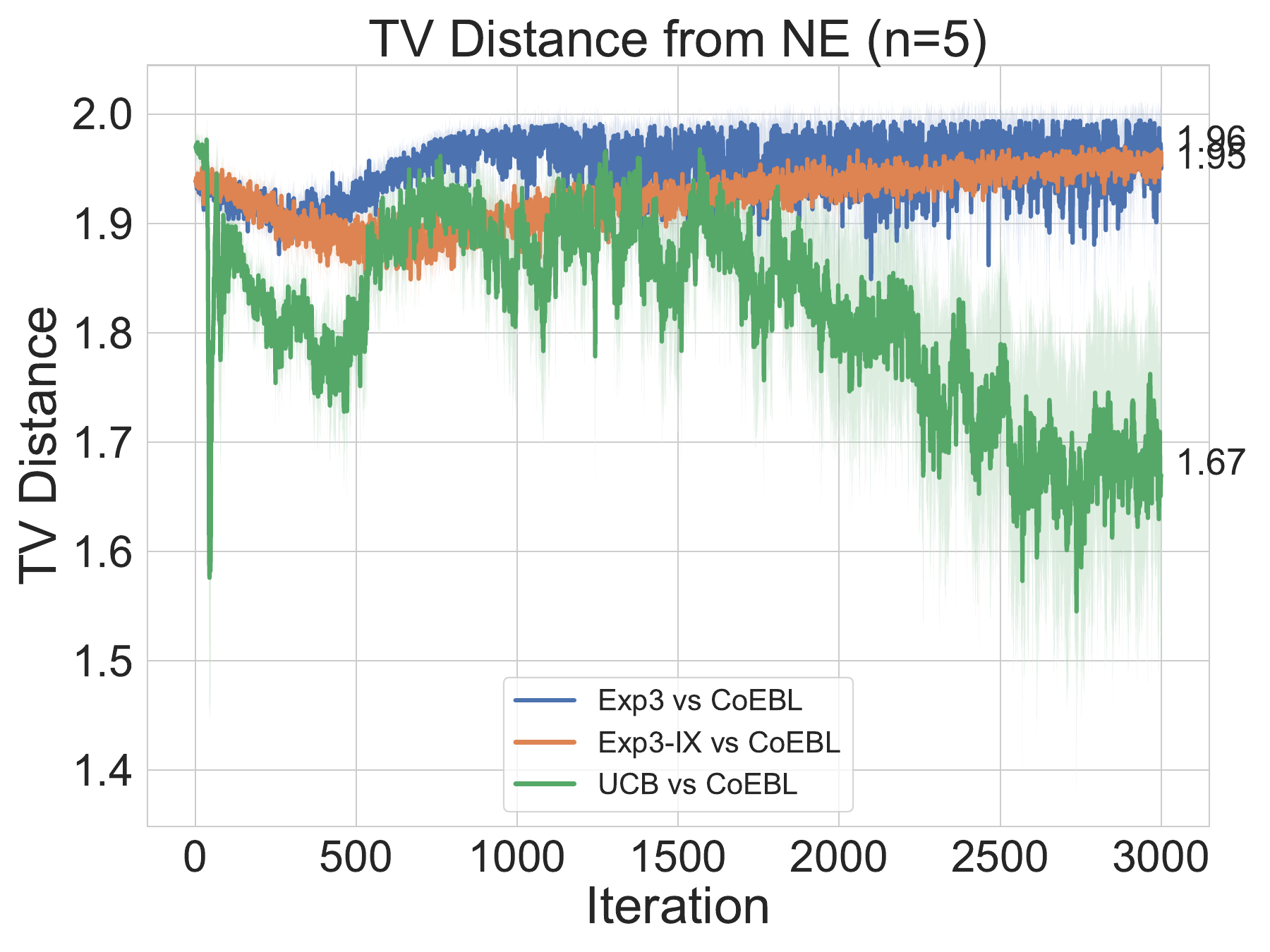}}
    \hfill
    \subfloat[
    ]{%
       \includegraphics[width=0.33\linewidth]{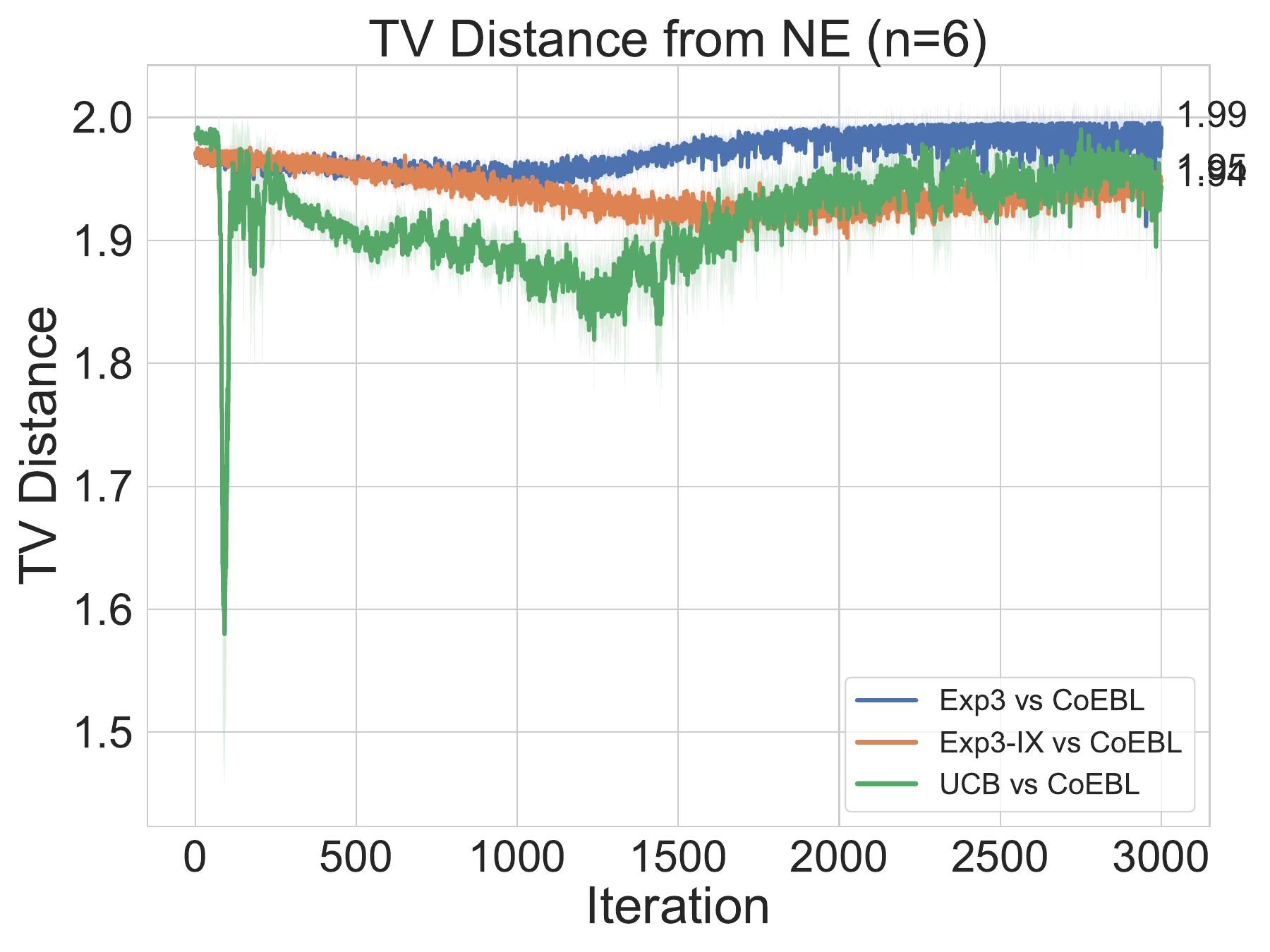}}
    \hfill
    \subfloat[
    ]{
    \includegraphics[width=0.33\linewidth]{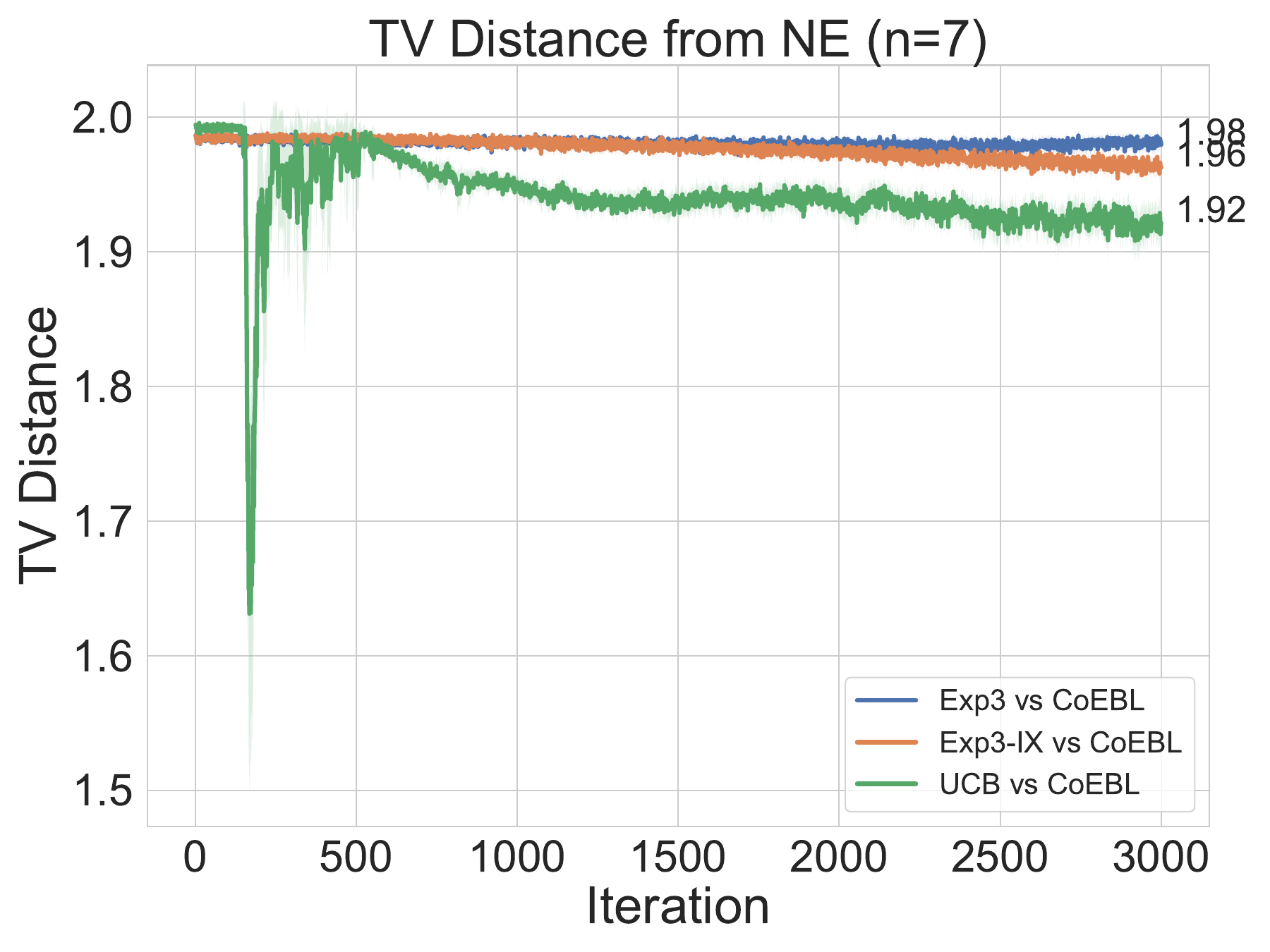}}
    \caption{Regret and TV-distance for $\alg~1$-vs-$\alg~2$ on \Diagonal for $n=2, \ldots, 7$.}
    \hfill
    \label{fig:Regret_Diagonal2_2} 
 \end{figure}
 
 \clearpage
 \subsubsection{\BigNum Game}
 
 \BigNum is another challenging two-player zero-sum game proposed and analysed by \citep{ijcai2024p336}.
 In this game, each player aims to select a number greater than their opponent's.
 The players' action space is defined as $\X = \{0,1\}^n$, where binary bitstrings of length $n$ correspond to natural numbers in the range $[0, 2^n-1]$.
 If the players select the same number, they receive a payoff of $0$. 
 If the difference between the players' numbers is exactly $1$, the player with the larger number receives a payoff of $2$, while the player with the smaller number receives $-2$.
 Otherwise, the player with the larger number receives a payoff of $1$, and the player with the smaller number receives a payoff of $-1$.
 To simplify the game and align it with ternary games, we modify the payoff function $\BigNum:\X \times \X \rightarrow \{ -1, 0, 1 \}$ defined by:
 \begin{align*}
    \BigNum(x, y)
    := \begin{cases} 
      0 &  x = y \\
      1 &  x > y \\
     -1 &  x < y 
   \end{cases}.
 \end{align*}
 The payoff matrix of the \BigNum game for $n=2$ is:
 
 \begin{table}[H]
 \centering
 \begin{tabular}{c|c c c c}
    & 00 & 01 & 10 & 11 \\ \hline
 00 & 0  & -1  & -1  & -1  \\
 01 & 1  & 0   & -1  & -1  \\
 10 & 1  & 1   & 0   & -1  \\
 11 & 1  & 1   & 1   & 0   \\
 \end{tabular}
 \caption{
 The payoff matrix of the \BigNum game for $n=2$.
 Binary bitstrings represent the pure strategies available to each player: $0 = (00)_2$, $1 = (01)_2$, $2 = (10)_2$, and $3 = (11)_2$. In this game, players compare their numbers from $\mathbb{N}$.
 }
 \end{table}
 
 As proved by \citet{ijcai2024p336}, this payoff matrix also exhibits a unique pure Nash equilibrium where both players choose $1^n \in \{0,1\}^n$ (i.e., the binary string of all ones, corresponding to $2^n-1 \in \mathbb{N}$). 
 This corresponds to the mixed Nash equilibrium $x^*=(0, \cdots ,1)$ and $y^*=(0,\cdots ,1)$.
 We conduct experiments using
 Algorithms~\ref{alg:Exp3} to~\ref{alg:MGBF} and compare them with our proposed Algorithm~\ref{alg:CoEBL} 
 (i.e. \coebl) on this matrix game benchmark, the \BigNum game.

 \begin{figure}[!ht]
    \centering
    \subfloat[$n=2$
    ]{%
       \includegraphics[width=0.33\linewidth]{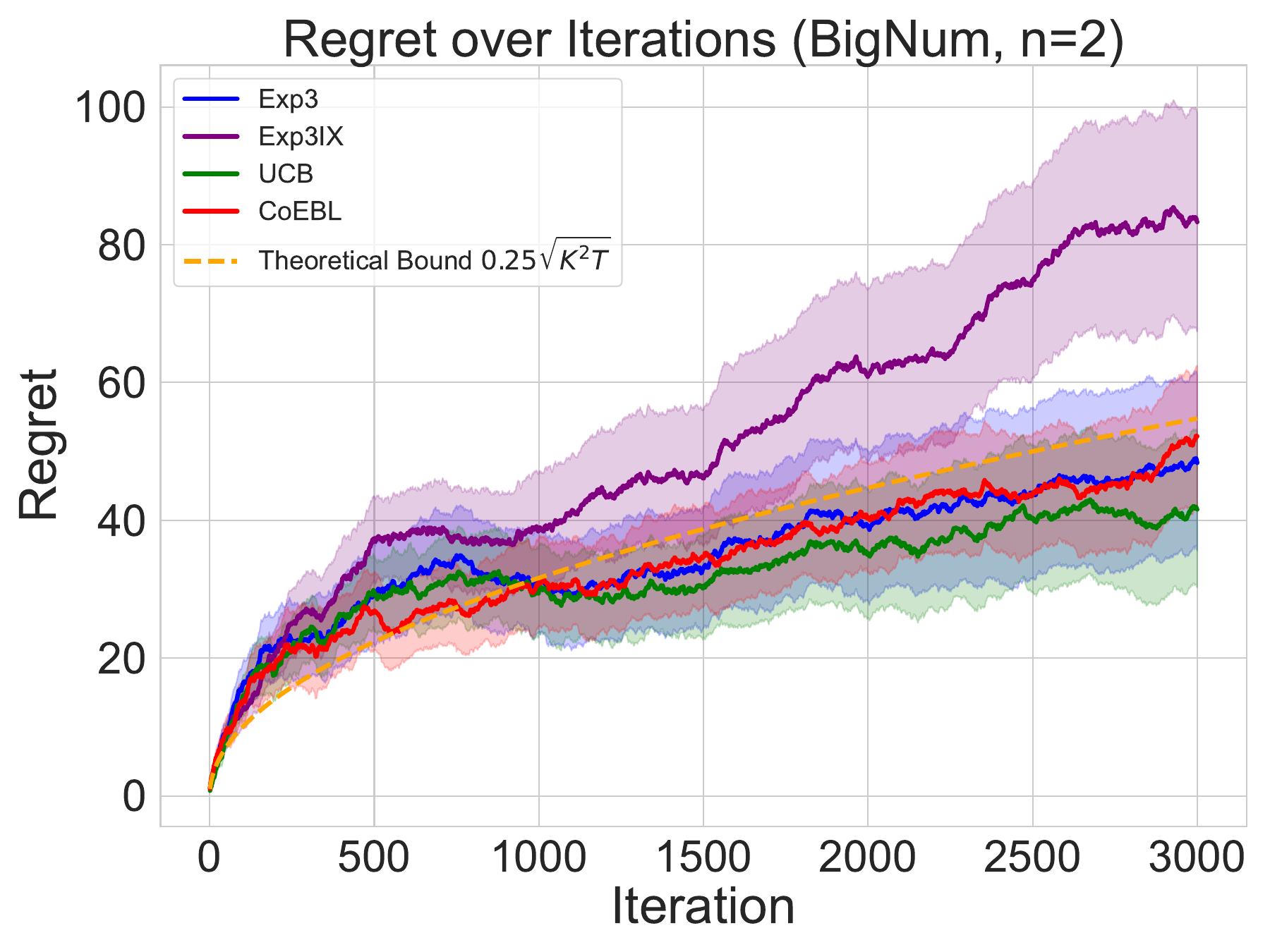}}
    \hfill
    \subfloat[$n=3$
    ]{%
       \includegraphics[width=0.33\linewidth]{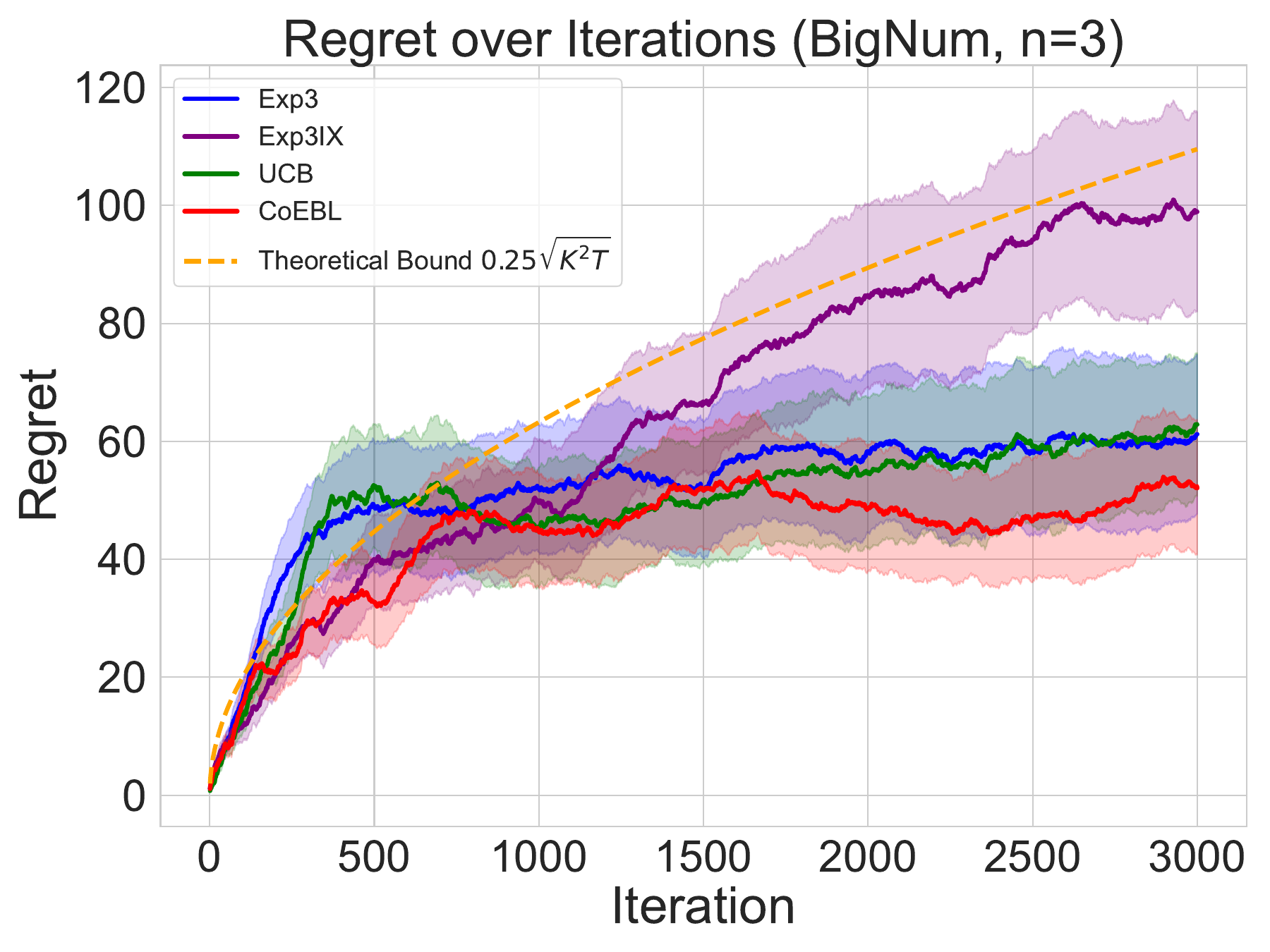}}
    \hfill
    \subfloat[$n=4$
    ]{%
       \includegraphics[width=0.33\linewidth]{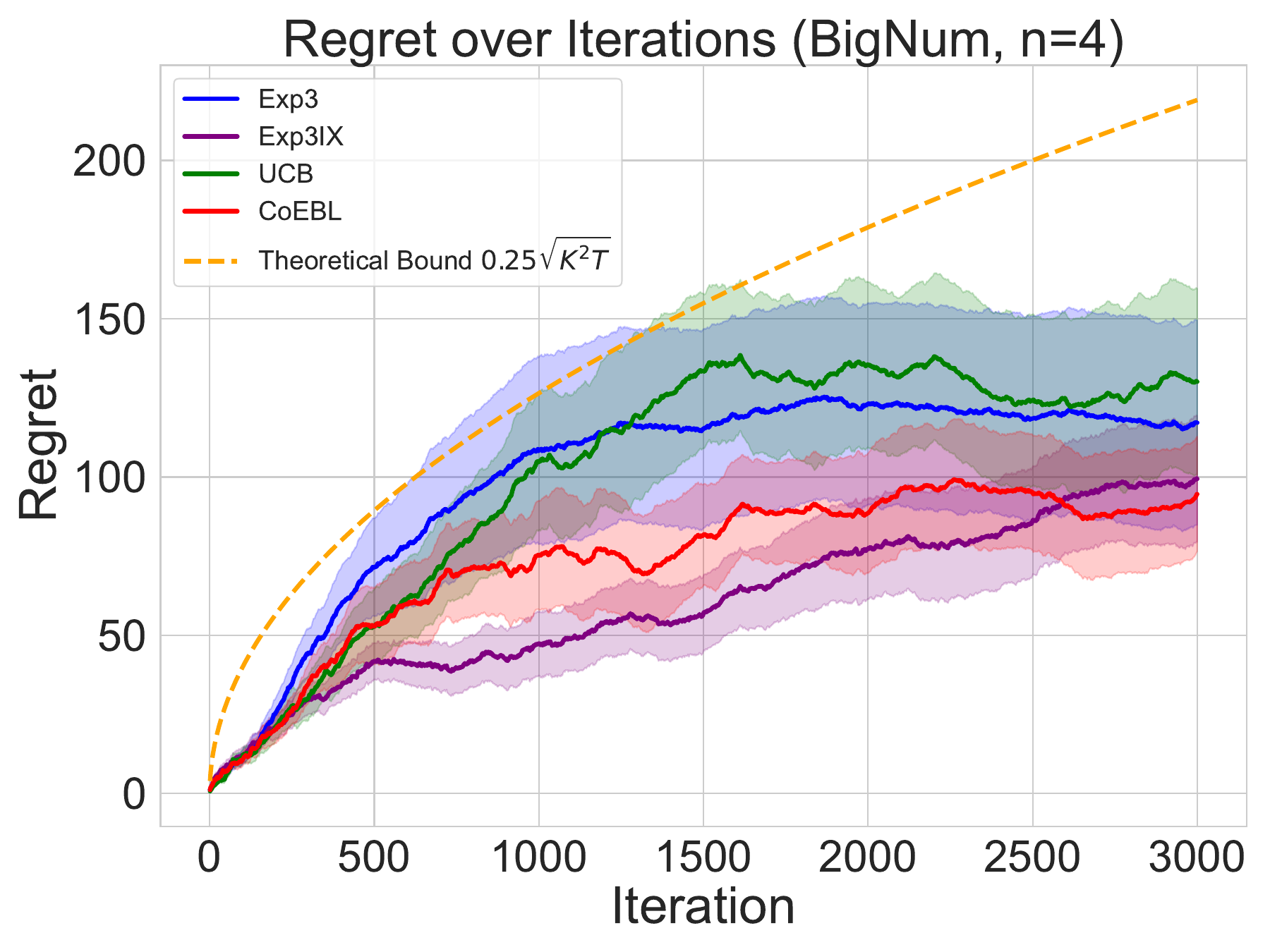}}
    \hfill
    \subfloat[$n=5$
    ]{%
       \includegraphics[width=0.33\linewidth]{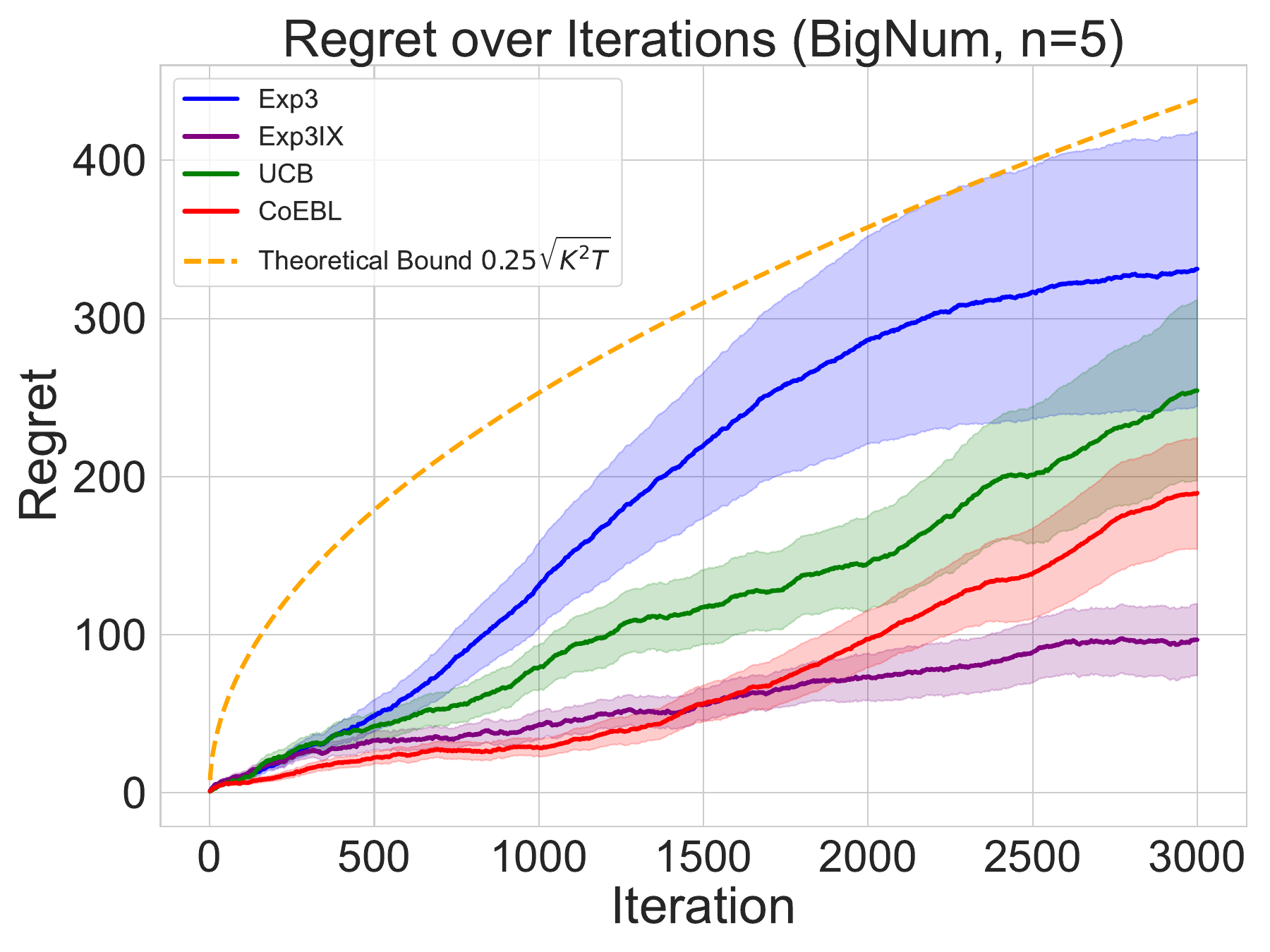}}
    \hfill
    \subfloat[$n=6$
    ]{%
       \includegraphics[width=0.33\linewidth]{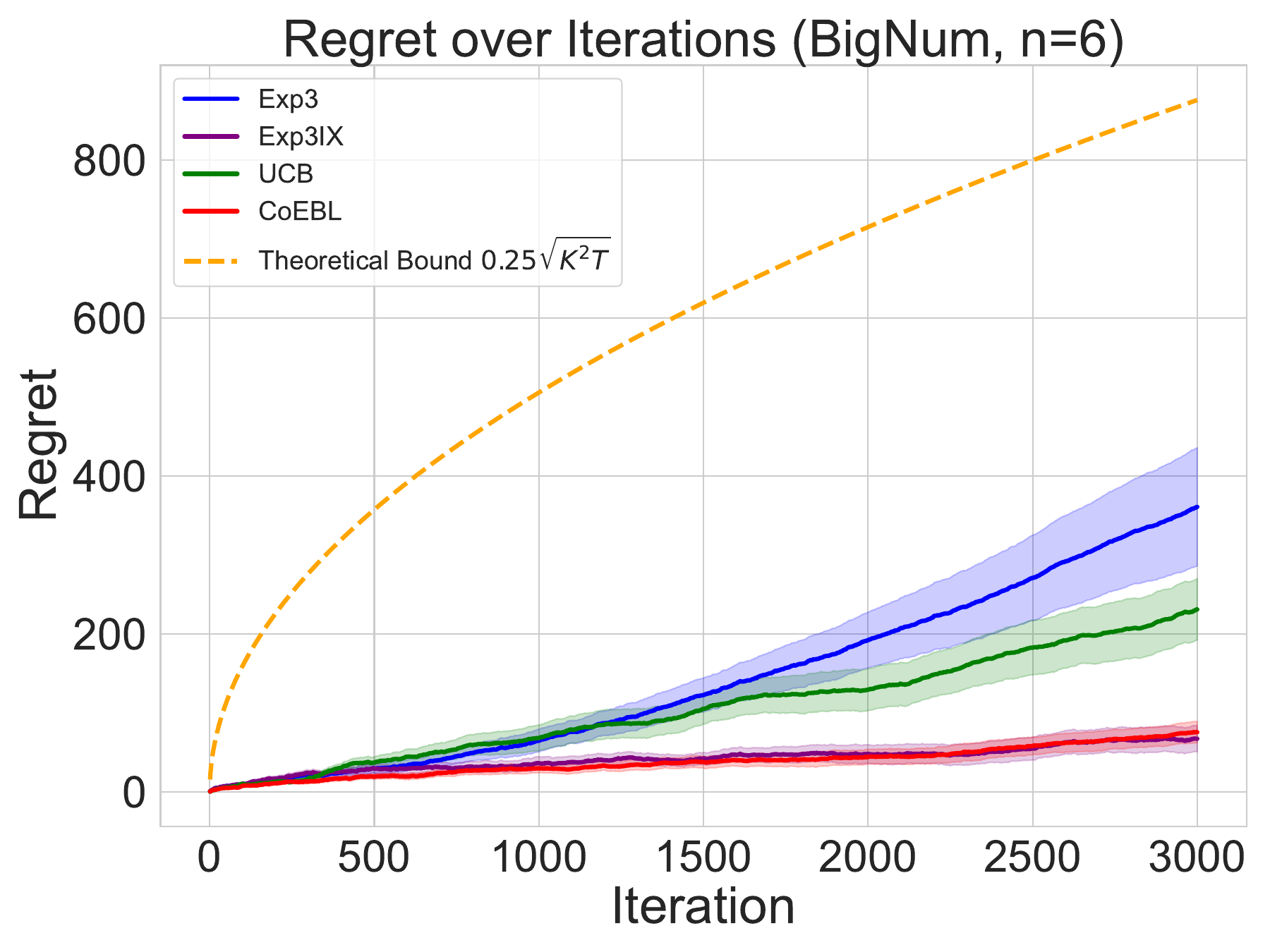}}
    \hfill
    \subfloat[$n=7$
    ]{%
       \includegraphics[width=0.33\linewidth]{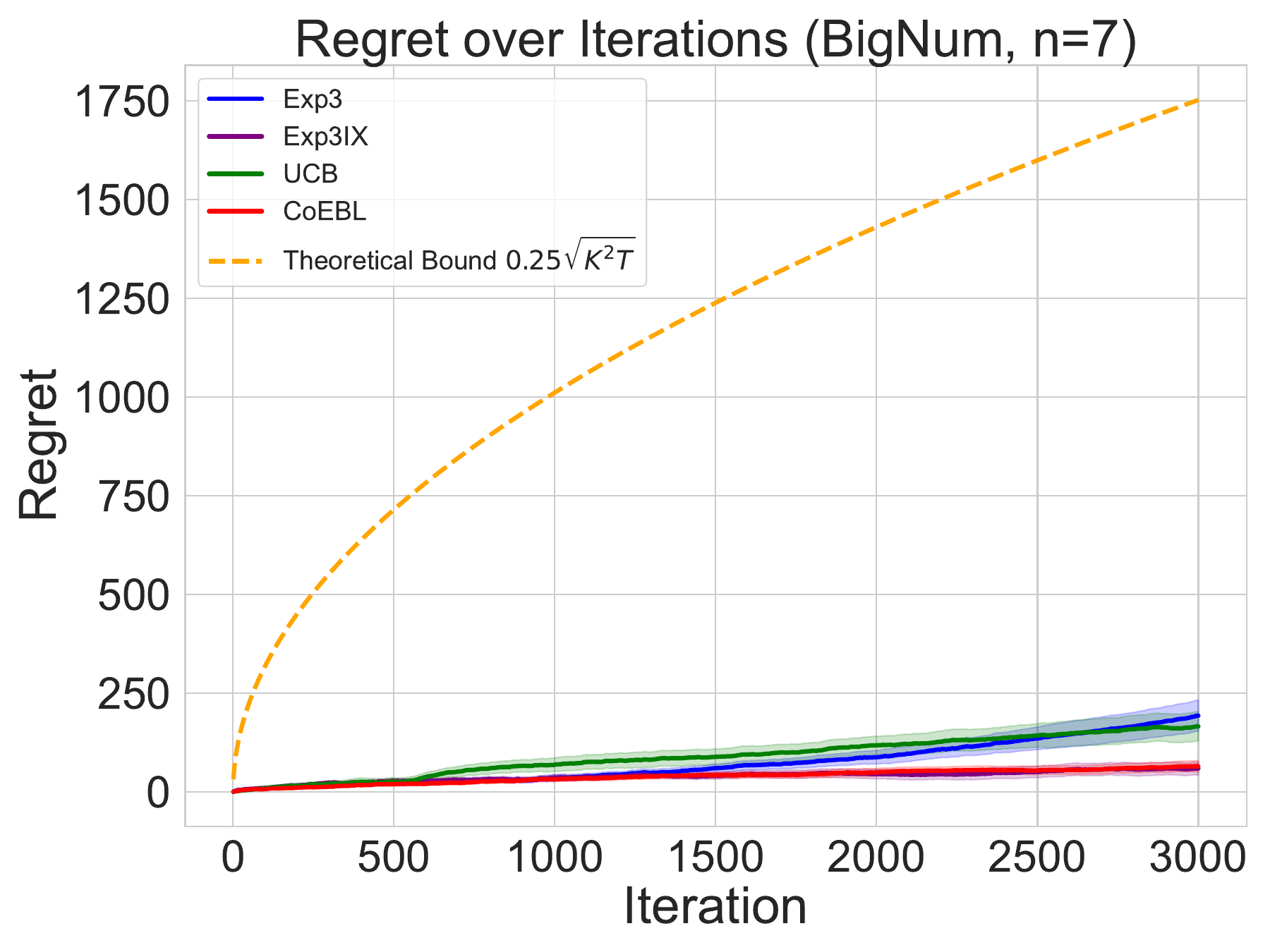}}
    \hfill
    \subfloat[$n=2$
    ]{%
       \includegraphics[width=0.33\linewidth]{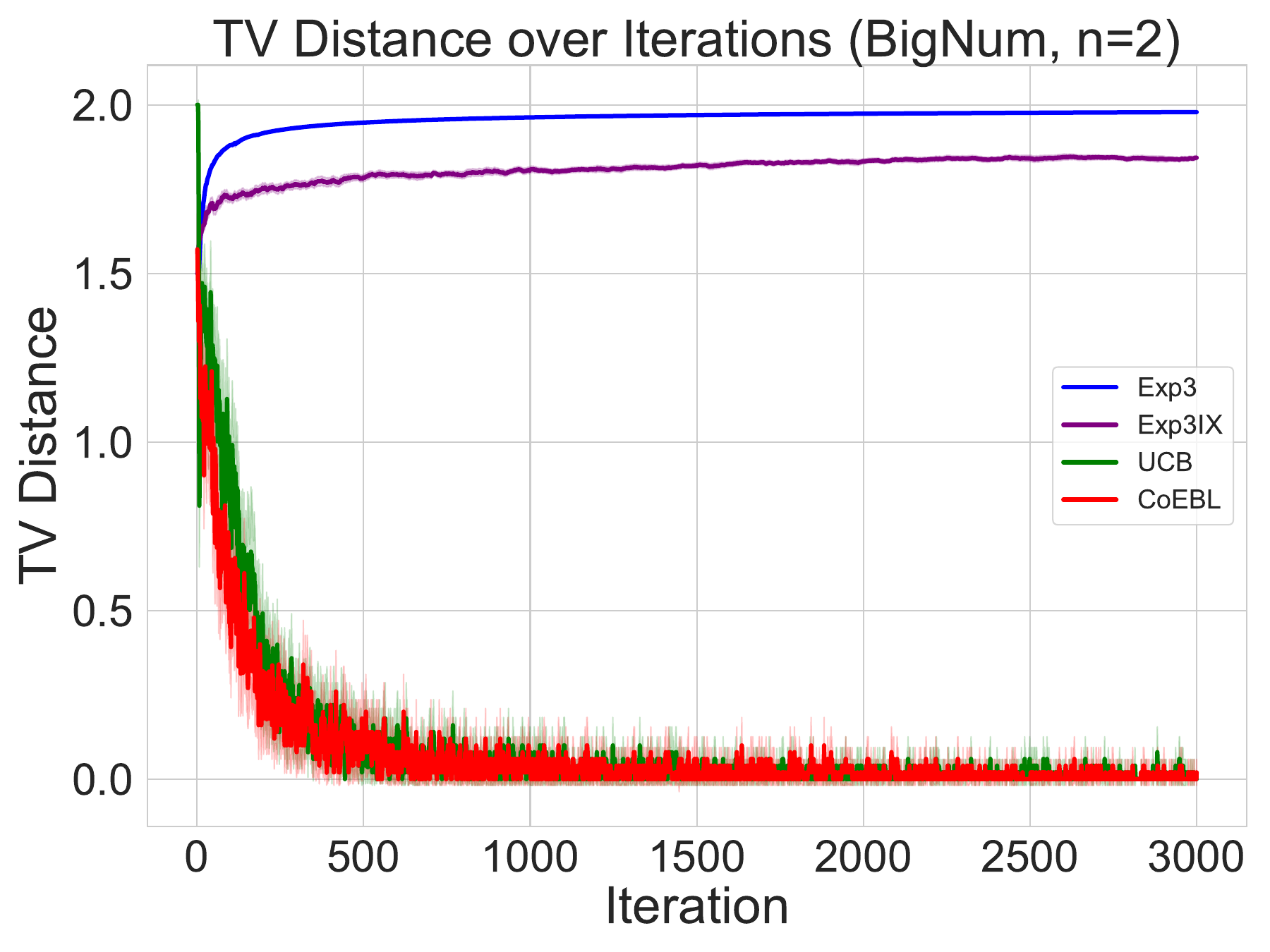}}
    \hfill
    \subfloat[$n=3$
    ]{%
       \includegraphics[width=0.33\linewidth]{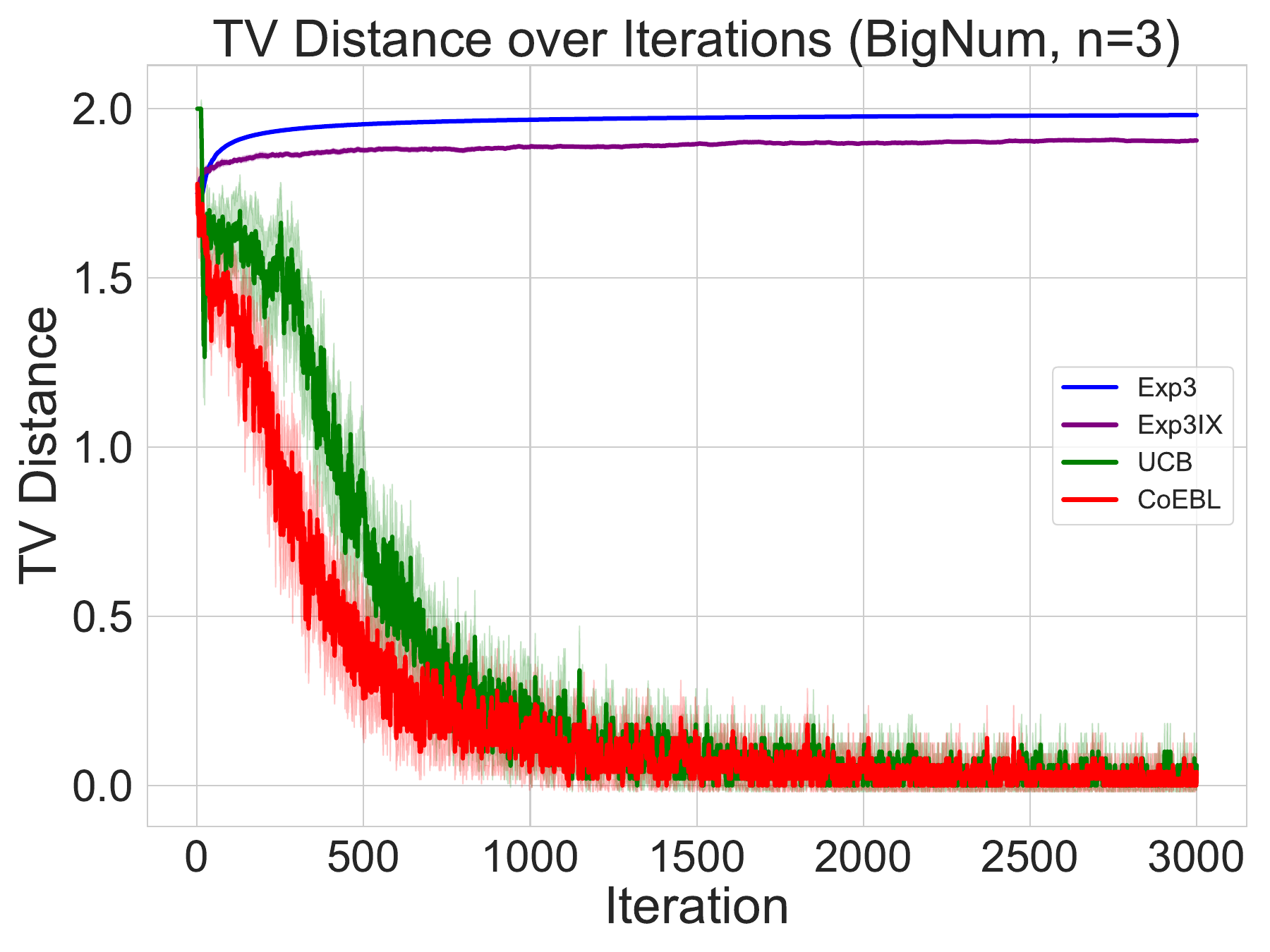}}
    \hfill
    \subfloat[$n=4$
    ]{%
       \includegraphics[width=0.33\linewidth]{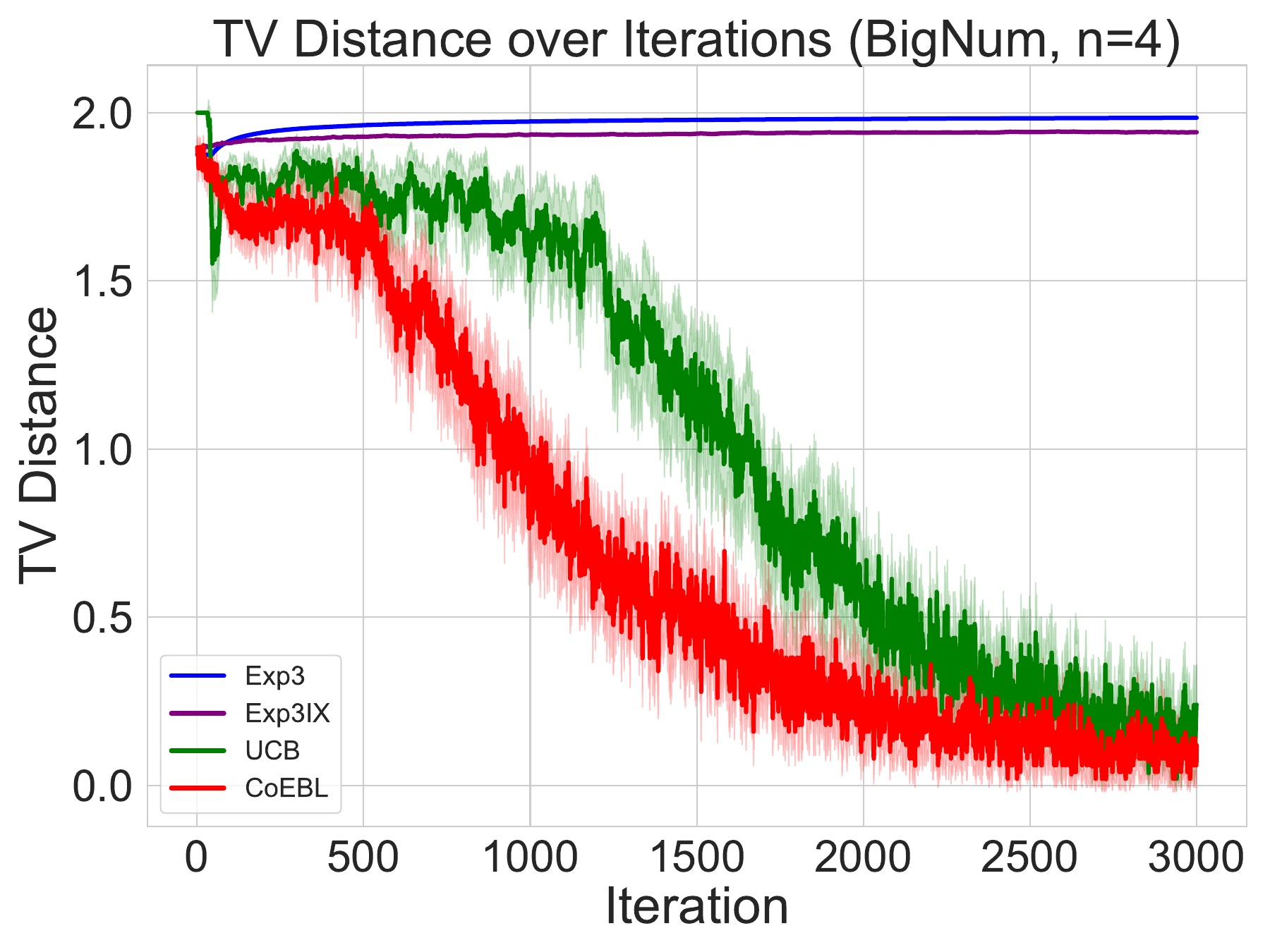}}
    \hfill
    \subfloat[$n=5$
    ]{%
       \includegraphics[width=0.33\linewidth]{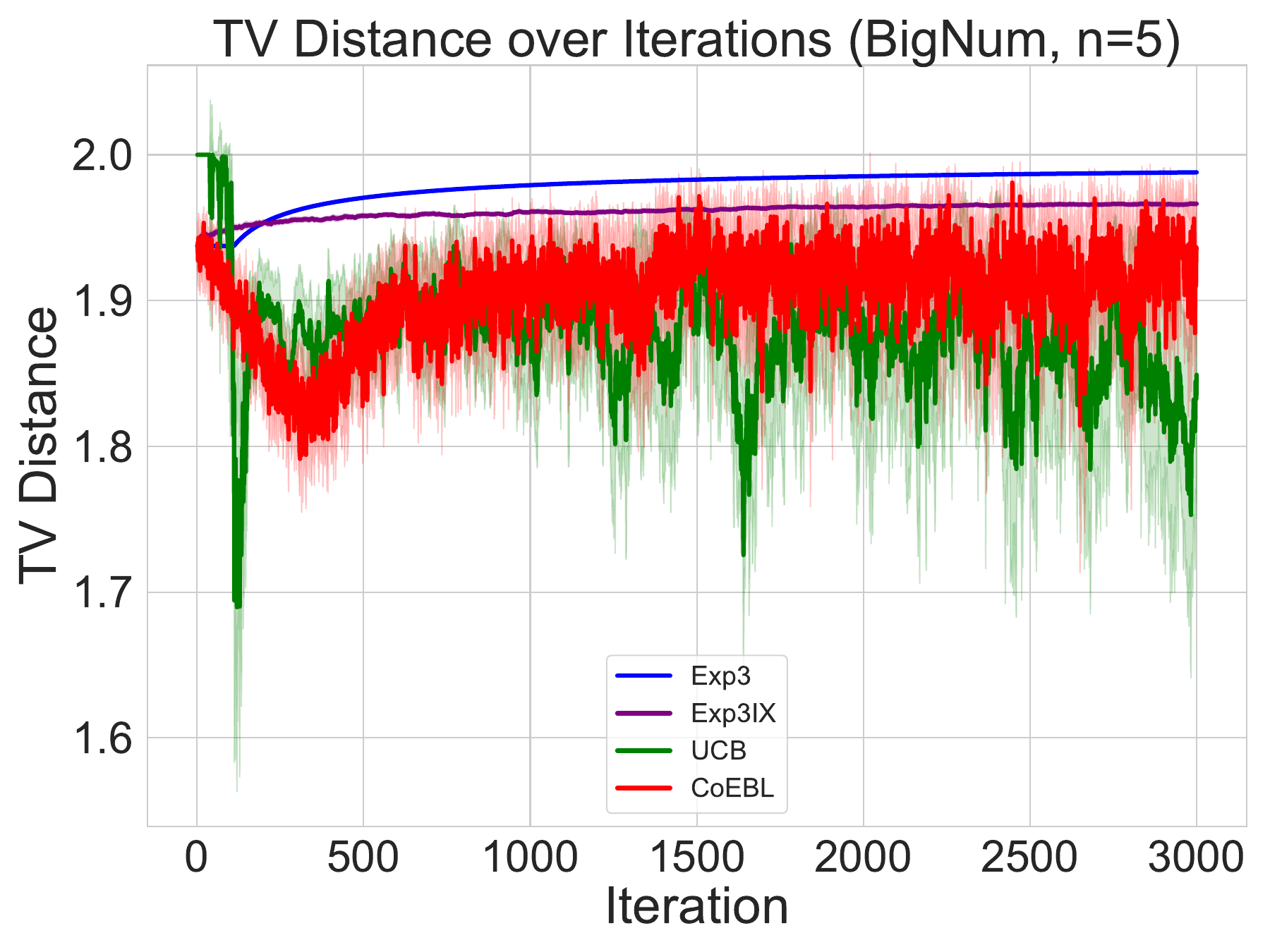}}
    \hfill
    \subfloat[$n=6$
    ]{%
       \includegraphics[width=0.33\linewidth]{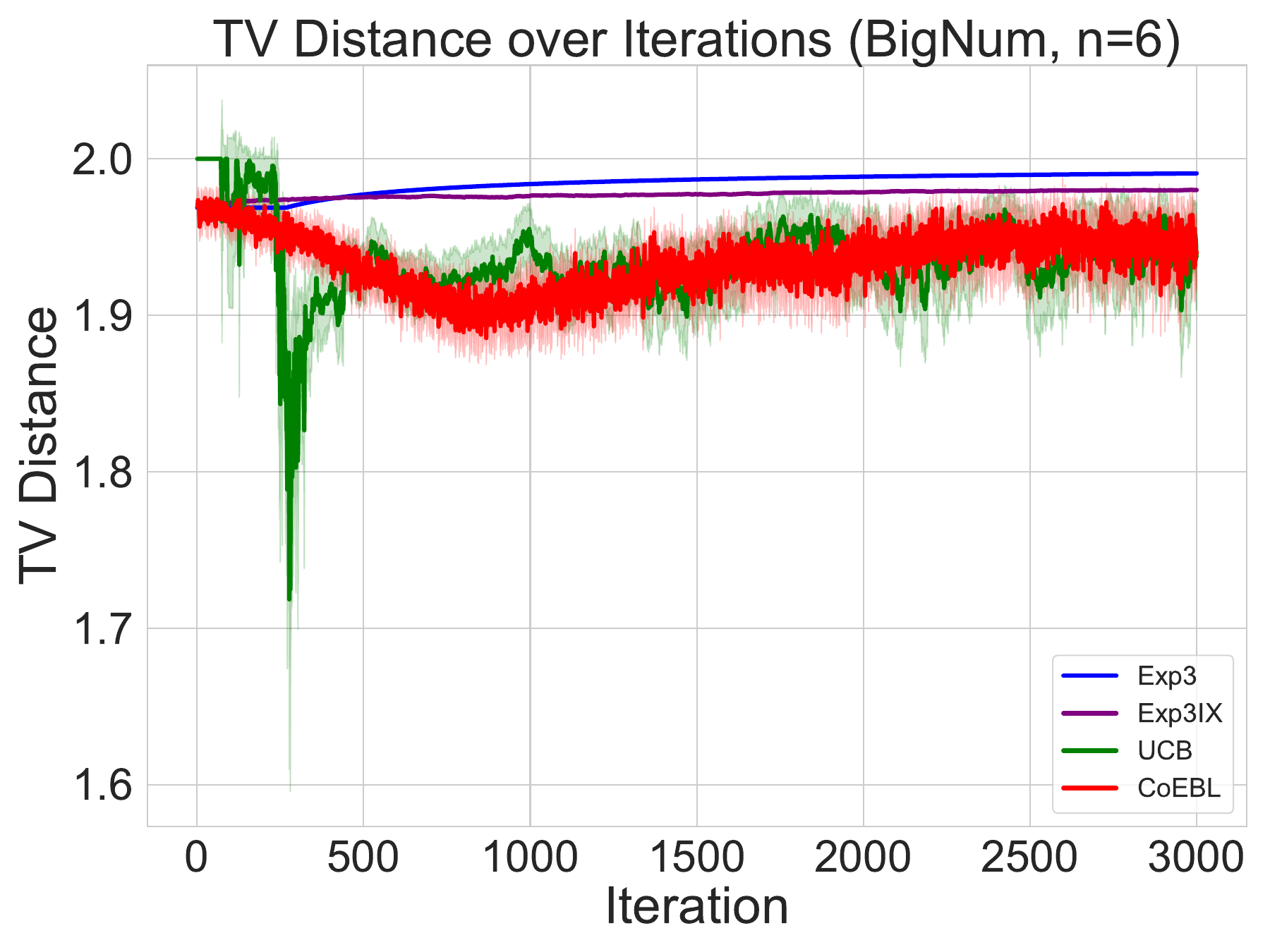}}
    \hfill
    \subfloat[$n=7$
    ]{%
       \includegraphics[width=0.33\linewidth]{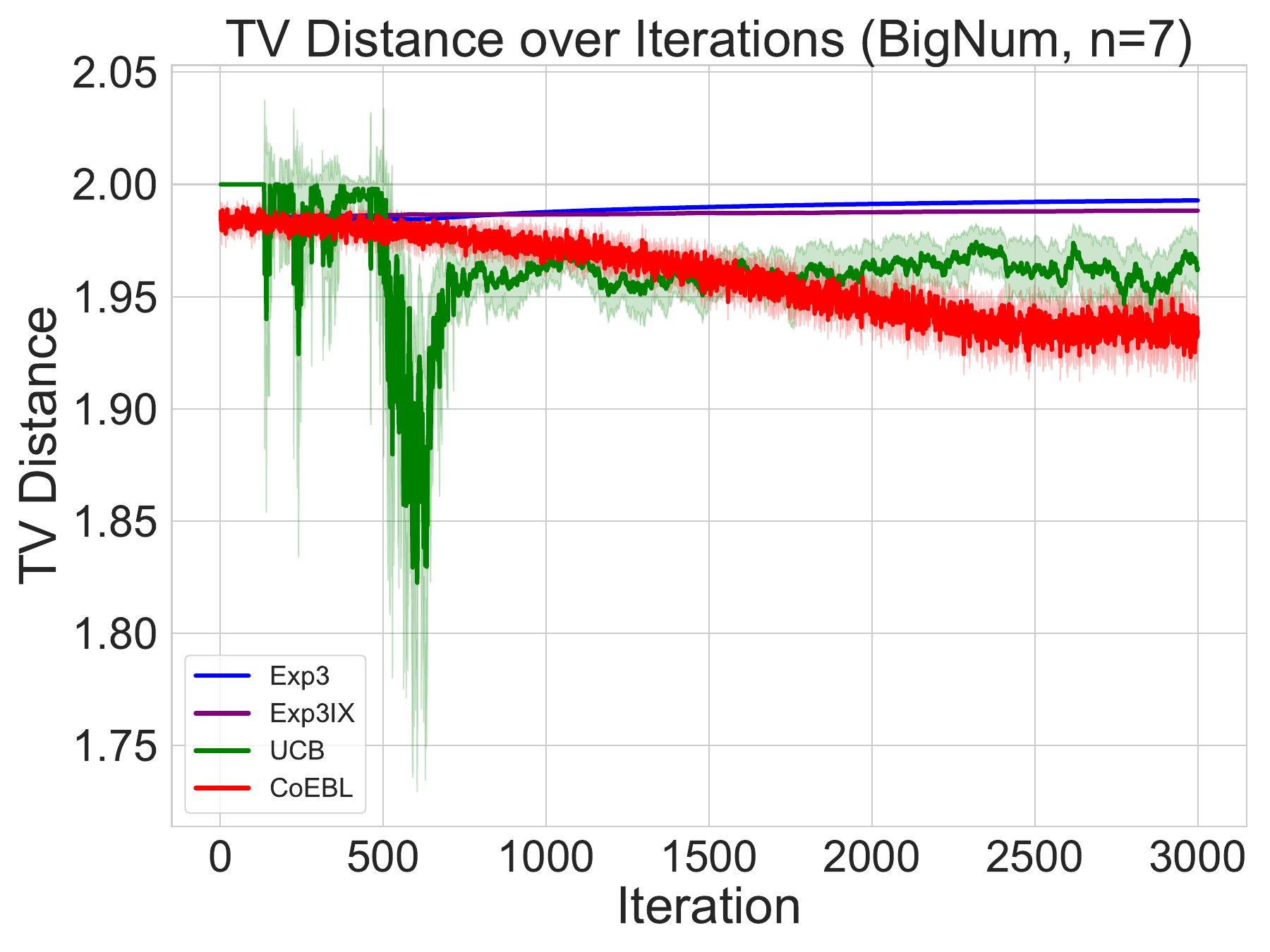}}
    \hfill
    \caption{Regret and TV Distance for Self-Plays on \BigNum}
    \label{fig:Comparison_Regret_TV_BigNum} 
 \end{figure}
 

 In Figure~\ref{fig:Comparison_Regret_TV_BigNum}, we present the self-play results of each algorithm on the \BigNum game for various values of $n$.
 We observe that \coebl exhibits sublinear regret in the \BigNum game, similar to other bandit baselines, and aligns with our theoretical bound.
 In terms of convergence measured by TV-distance, \coebl converges to the Nash equilibrium for $n=2,3, 4$, 
 while the other baselines do not converge.
 However, after $n=5$ (as the number of pure strategies increases exponentially), \coebl also fails to converge to the Nash equilibrium.
 
 In Figure~\ref{fig:Regret_BigNum2}, we present the regret and TV-distance for $\alg~1$-vs-$\alg~2$ on \BigNum.
 Similar to the \Diagonal game, we observe that all regret values are positive with minimum $8.39$ and maximum $351.27$, indicating that the minimiser is winning on average. 
 Thus, \coebl outperforms the other bandit baselines in \BigNum for all $n=2, \ldots ,7$.
 
 \begin{figure}[!ht]
    \centering
    \subfloat[
    ]{%
       \includegraphics[width=0.33\linewidth]{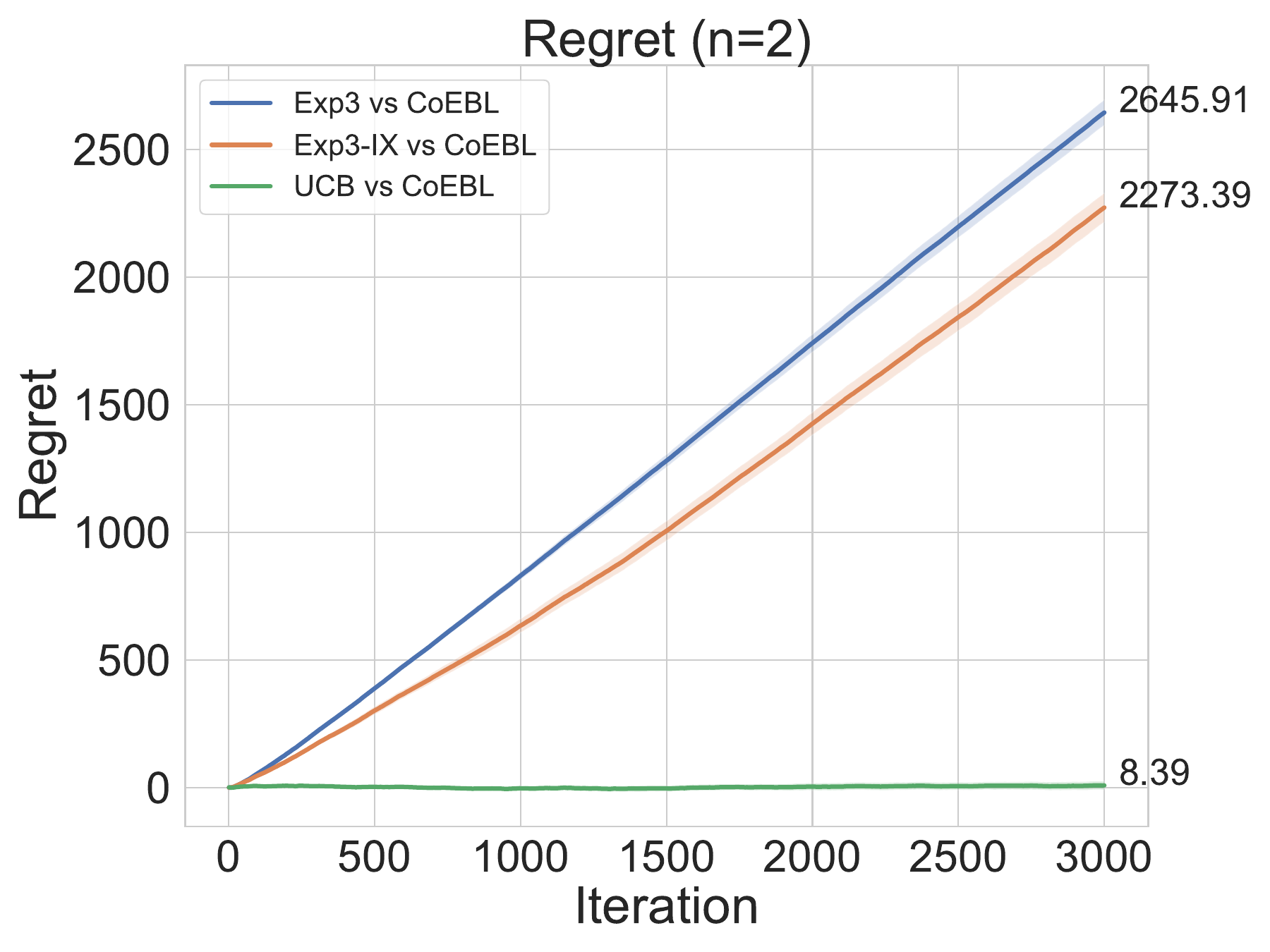}}
    \hfill
    \subfloat[
    ]{%
       \includegraphics[width=0.33\linewidth]{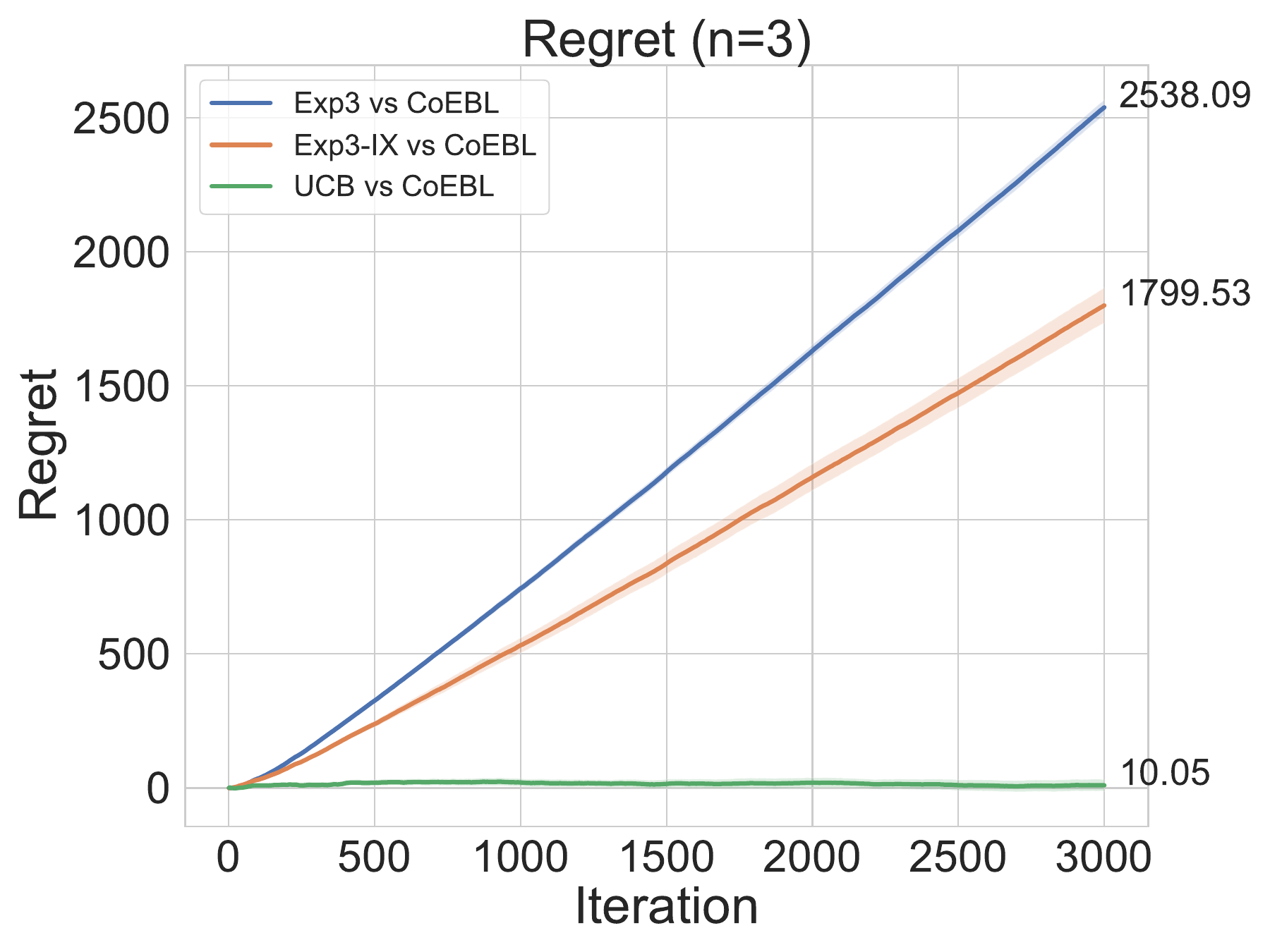}}
    \hfill
    \subfloat[
    ]{%
       \includegraphics[width=0.33\linewidth]{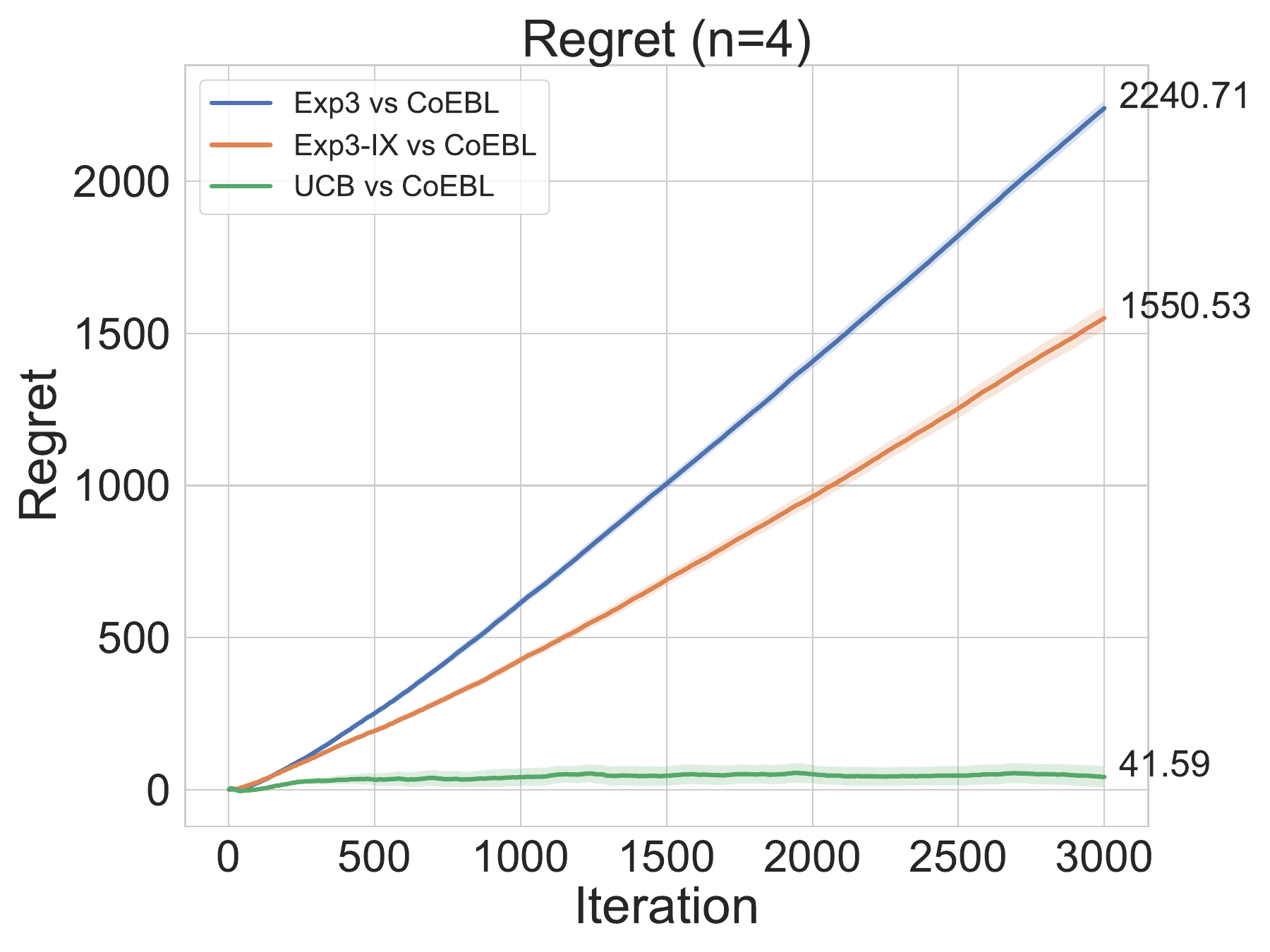}}
    \hfill
    \subfloat[
    ]{%
       \includegraphics[width=0.33\linewidth]{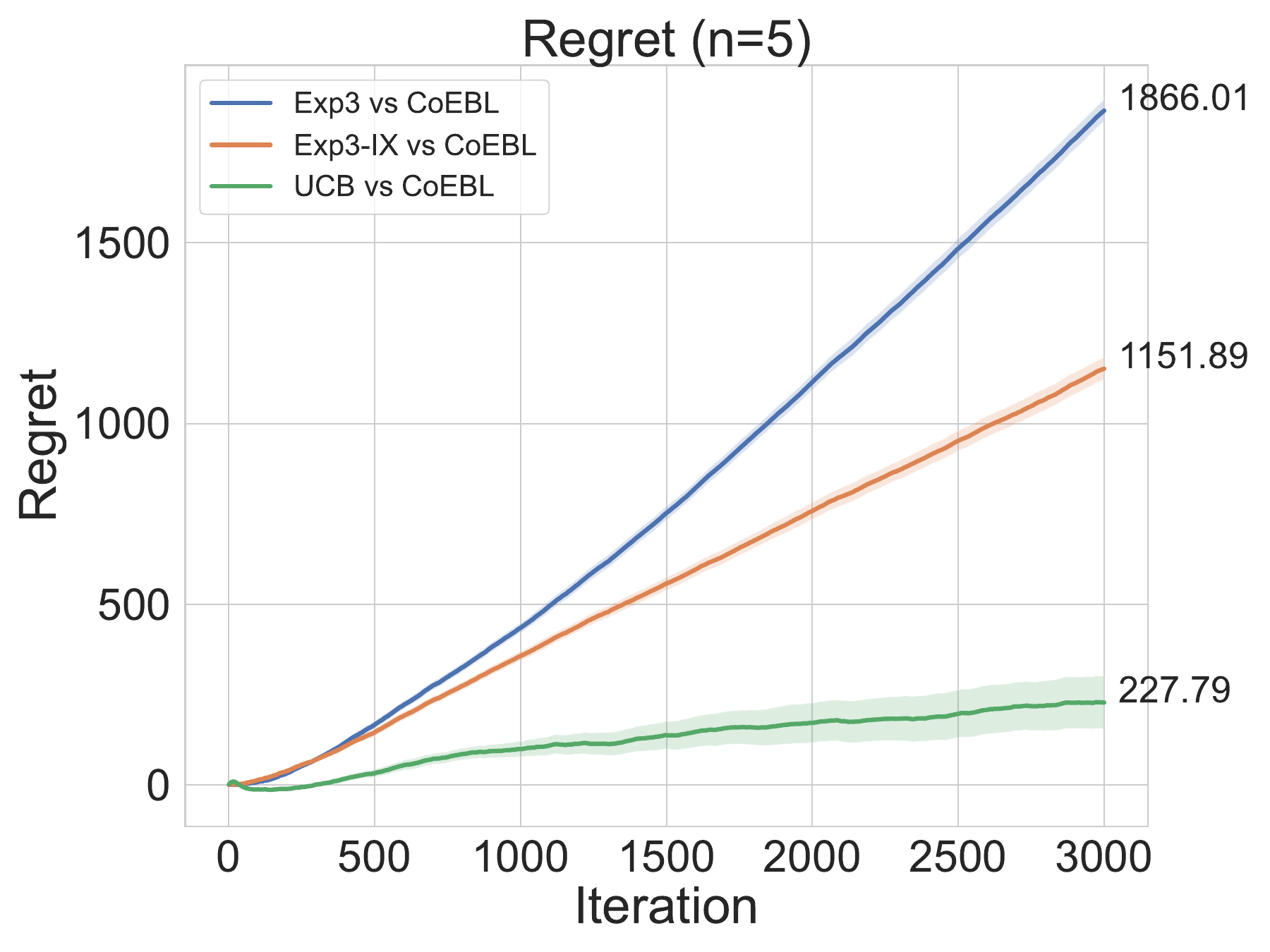}}
    \hfill
    \subfloat[
    ]{%
       \includegraphics[width=0.33\linewidth]{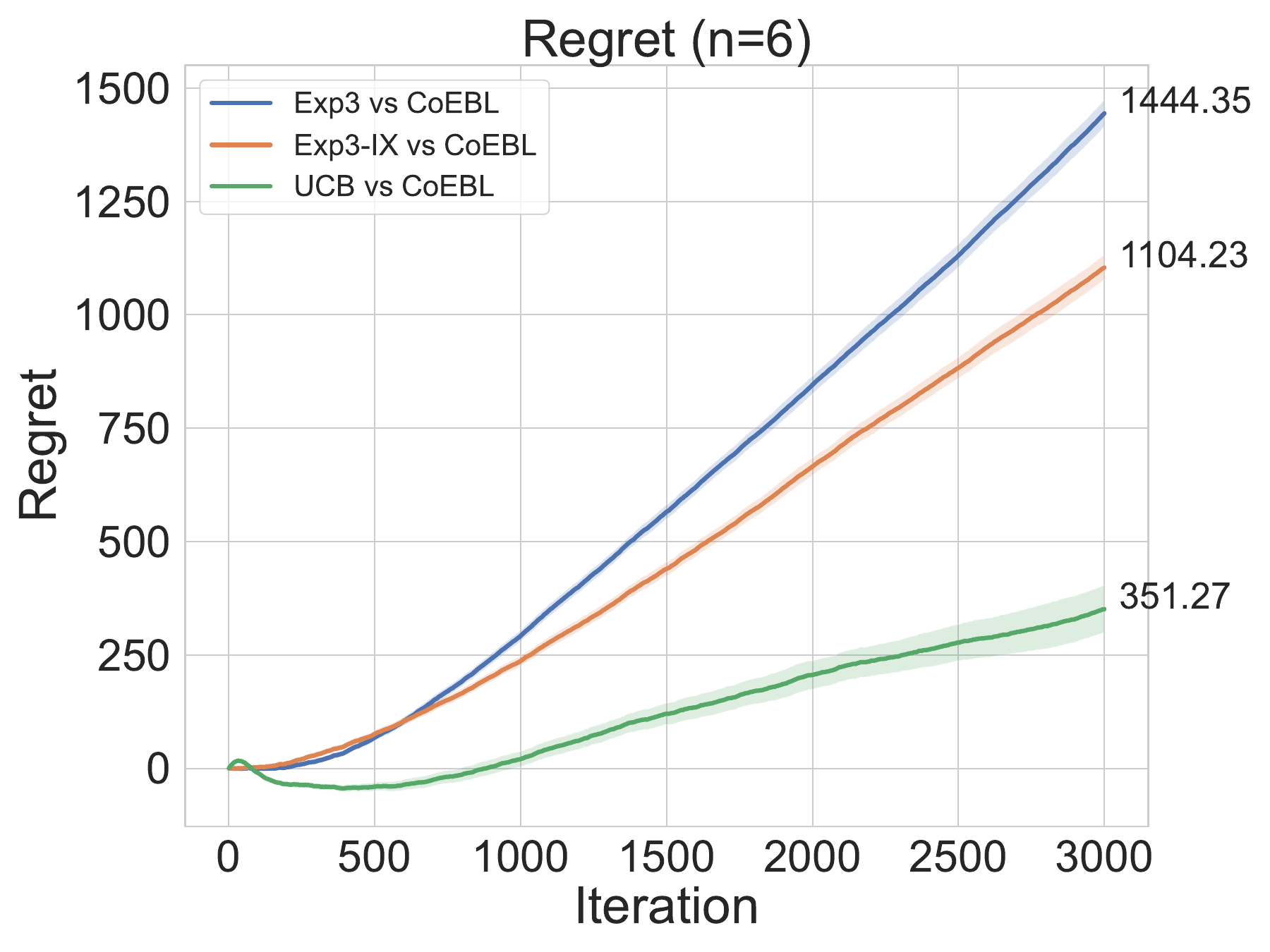}}
    \hfill
    \subfloat[
    ]{%
       \includegraphics[width=0.33\linewidth]{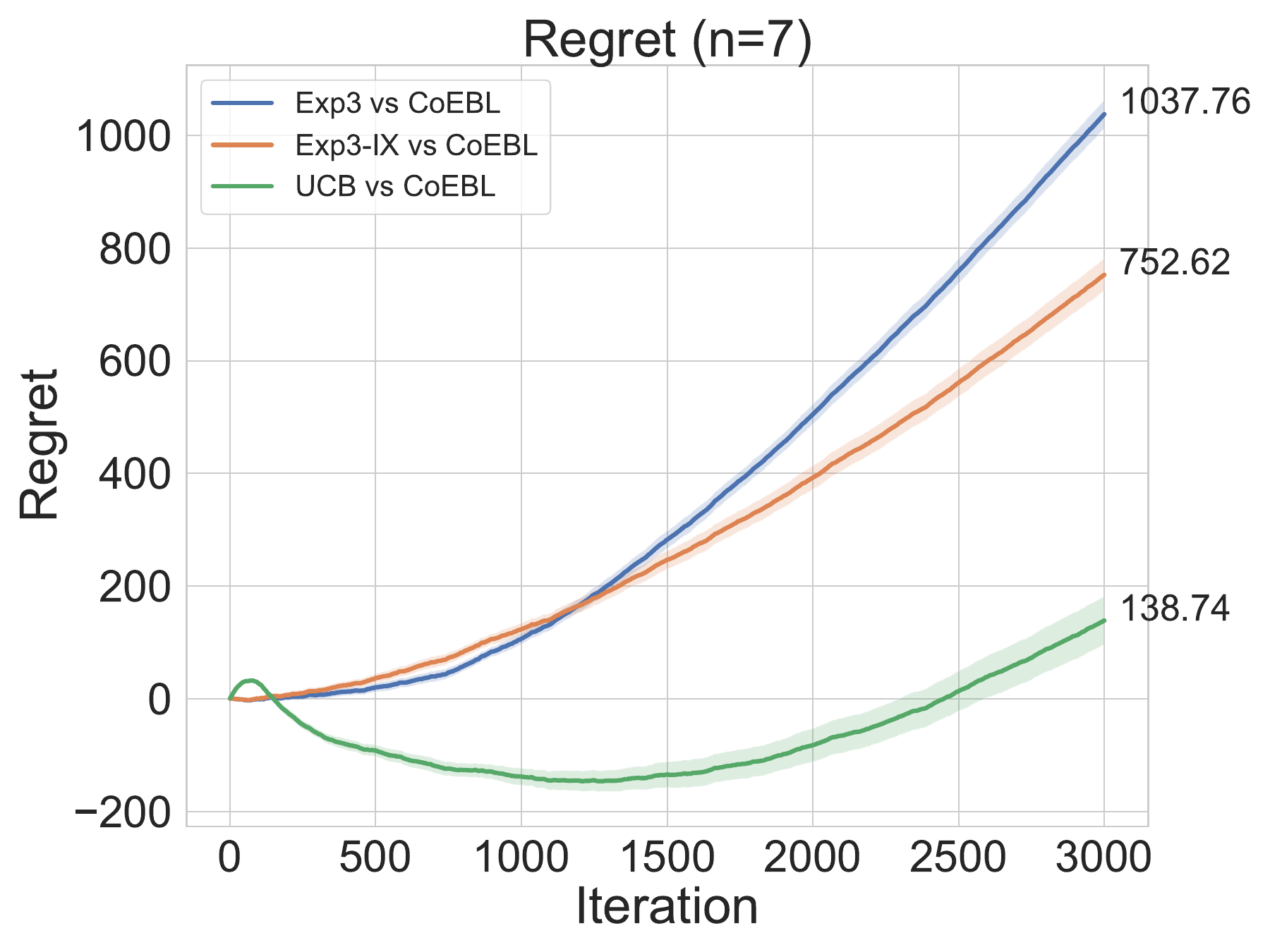}}
    \hfill
    \subfloat[
    ]{%
       \includegraphics[width=0.33\linewidth]{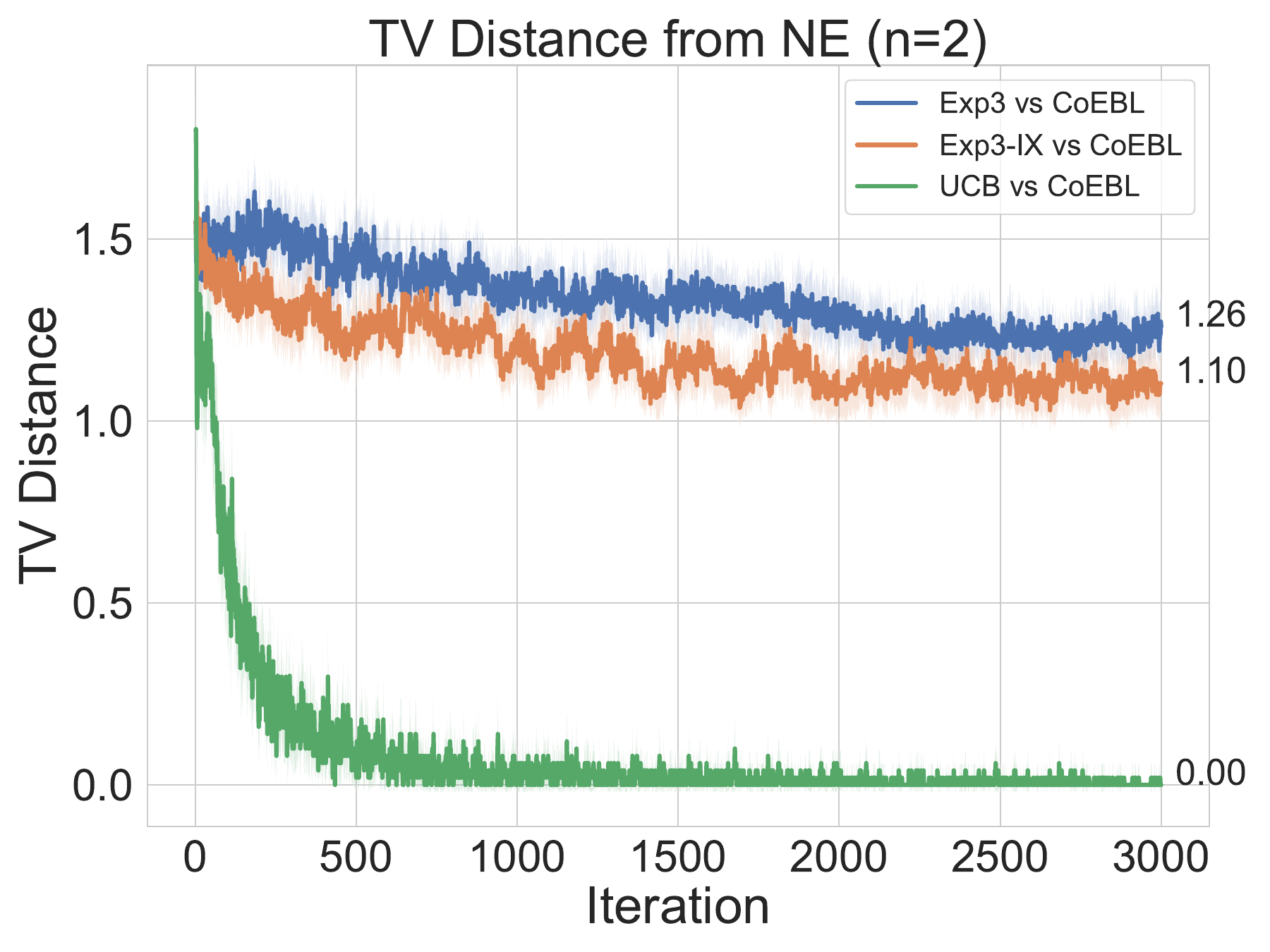}}
    \hfill
    \subfloat[
    ]{%
       \includegraphics[width=0.33\linewidth]{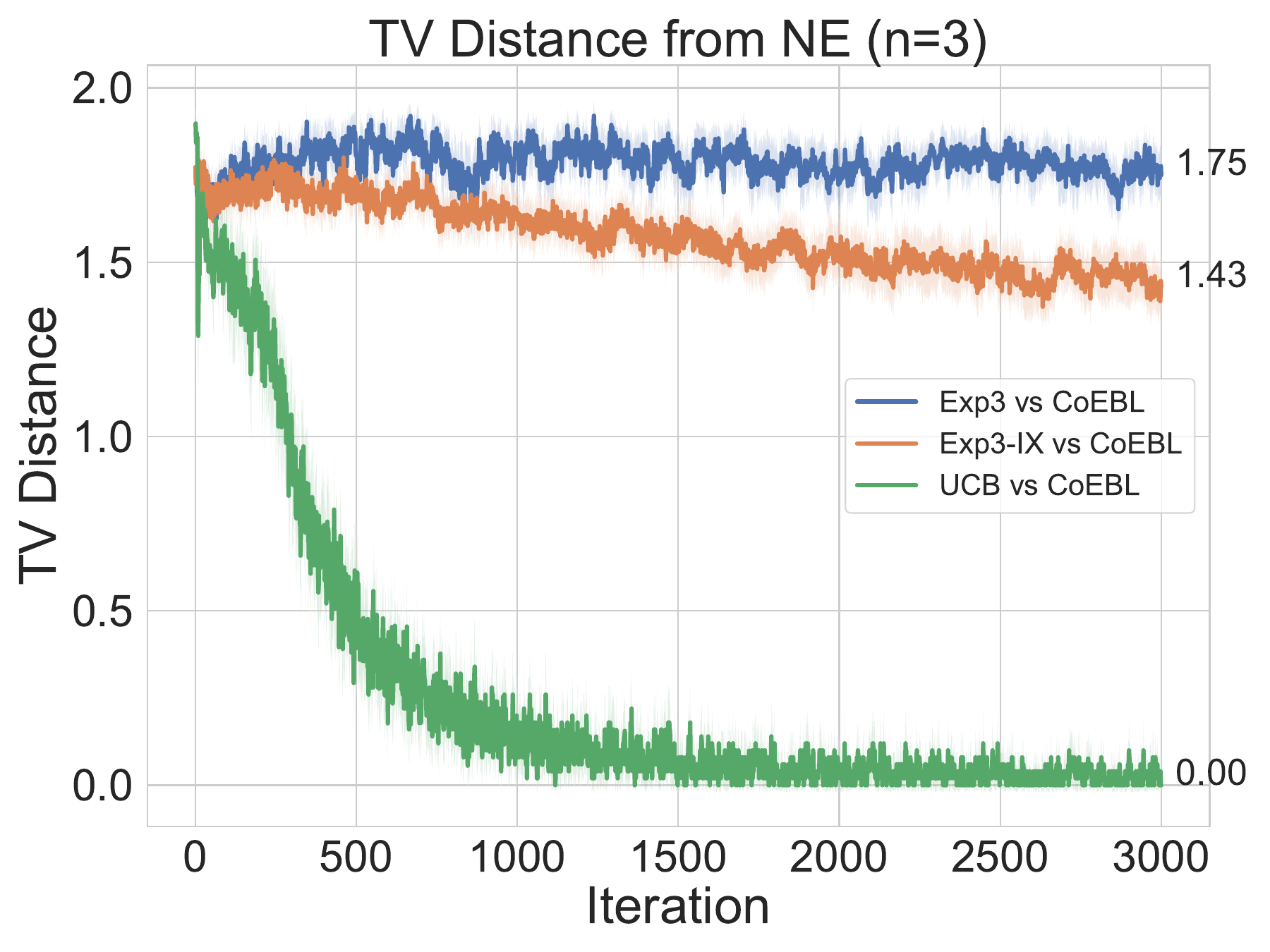}}
    \hfill
    \subfloat[
    ]{%
       \includegraphics[width=0.33\linewidth]{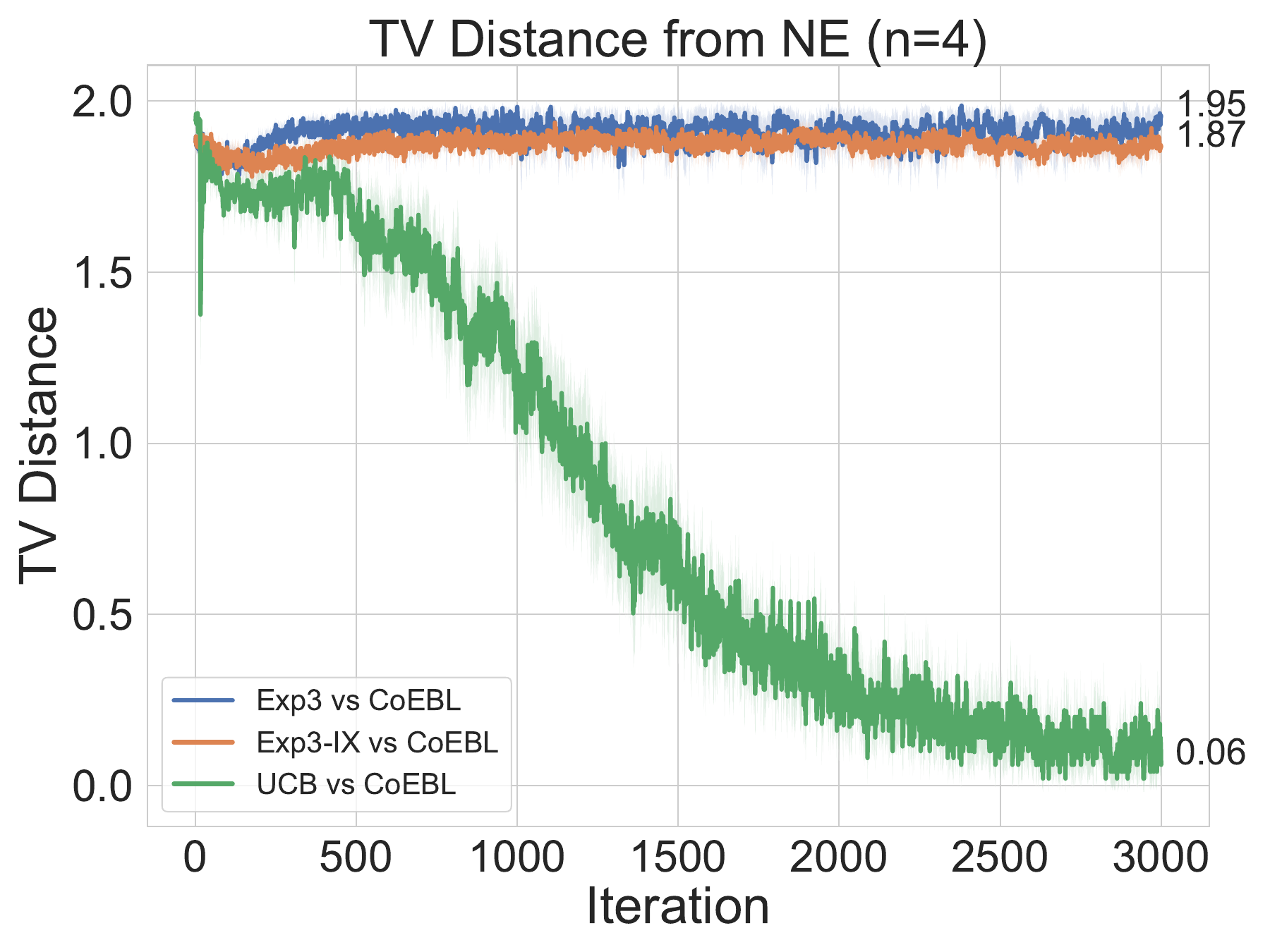}}
    \hfill
    \subfloat[
    ]{%
       \includegraphics[width=0.33\linewidth]{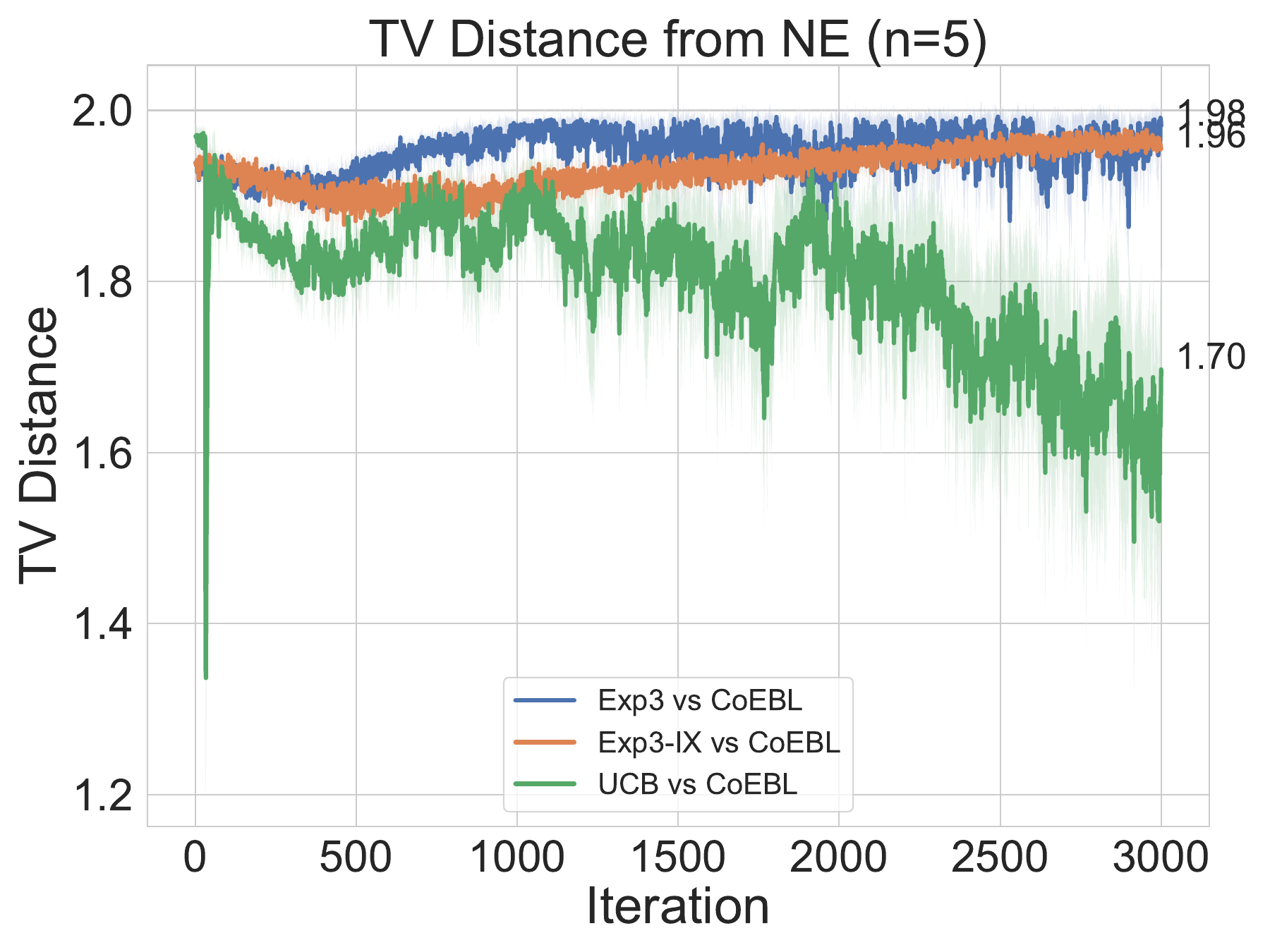}}
    \hfill
    \subfloat[
    ]{%
       \includegraphics[width=0.33\linewidth]{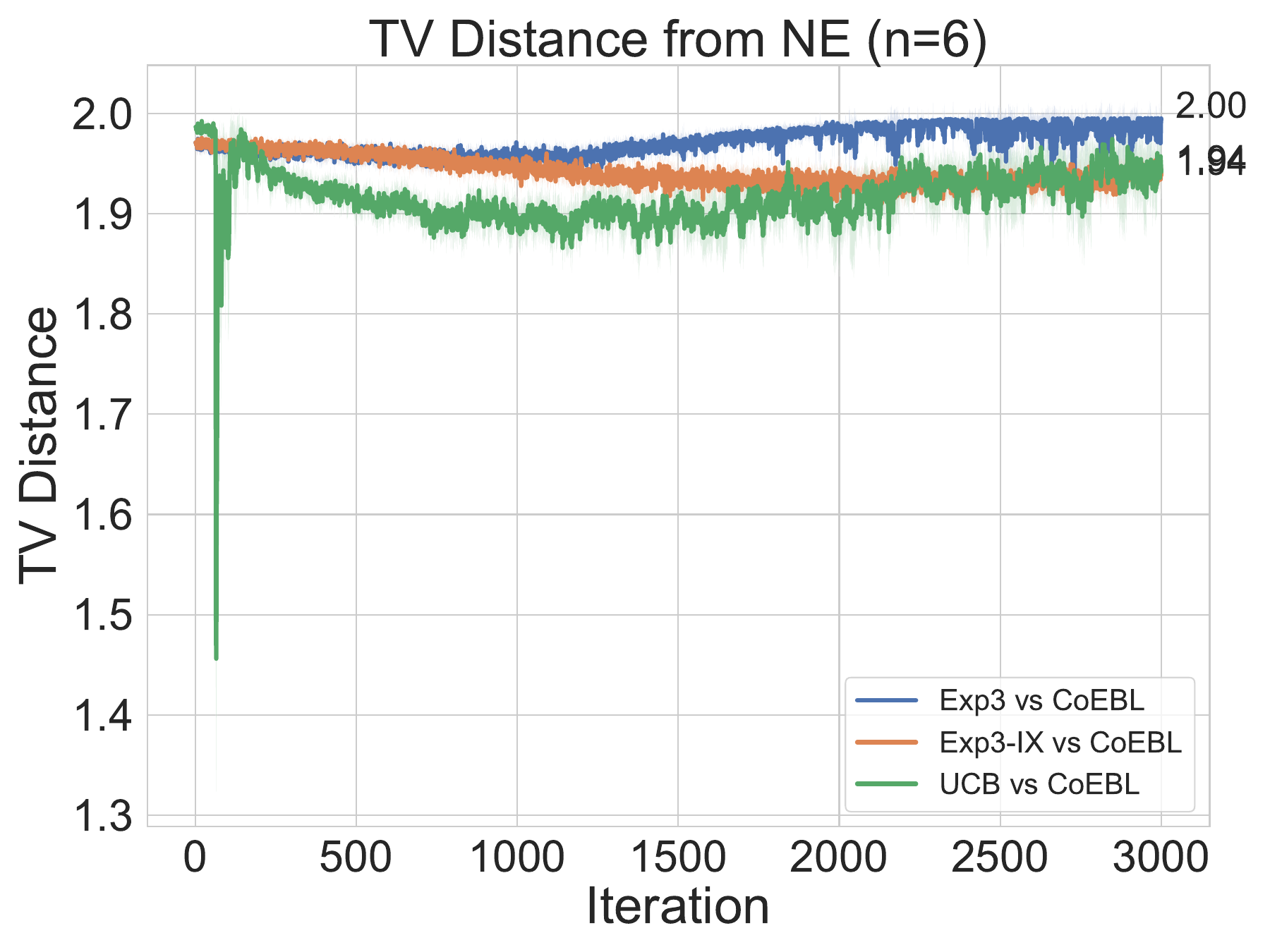}}
    \hfill
    \subfloat[
    ]{
    \includegraphics[width=0.33\linewidth]{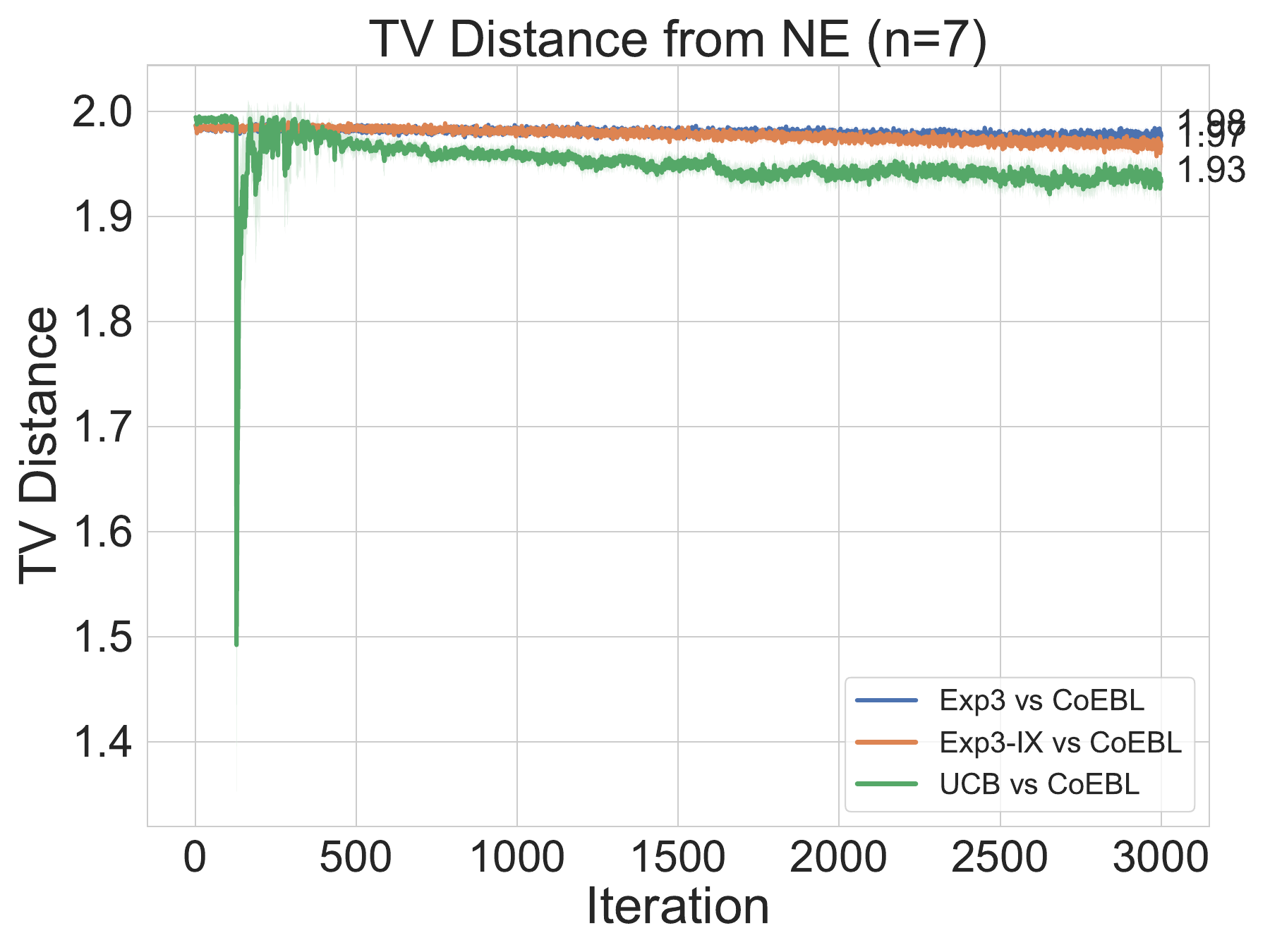}}
    \caption{Regret and TV-distance for $\alg~1$-vs-$\alg~2$ on \BigNum.}
    \hfill
    \label{fig:Regret_BigNum2} 
 \end{figure}

\end{document}